\documentclass[a4paper,10pt]{amsart}

\usepackage{cite}
\usepackage{amsthm,amssymb}
\usepackage{mathtools,thmtools,thm-restate}
\usepackage{bm,esint}
\usepackage{color}
\usepackage{float,graphicx,subcaption}
\usepackage{cancel}
\usepackage[shortlabels]{enumitem}
\usepackage{mathrsfs}  
\usepackage{bbm}
\usepackage{euscript}
\usepackage{hyperref}
\usepackage{cleveref}
\usepackage{upgreek}
\usepackage{tikz-cd}
\usetikzlibrary{decorations.pathmorphing}

\usepackage{todonotes}


\usepackage{algorithm}
\usepackage{algpseudocode}

\declaretheoremstyle[
  headfont=\normalfont\bfseries\itshape,
  numbered=yes,
  bodyfont=\normalfont,
  spaceabove=1em plus 0.75em minus 0.25em,
  spacebelow=.5em,
  qed={\small$\Diamond$},
]{deflt}

\declaretheorem[style=deflt,numberwithin=section]{theorem}
\declaretheorem[style=deflt,sibling=theorem]{lemma}
\declaretheorem[style=deflt,sibling=theorem]{proposition}
\declaretheorem[style=deflt,sibling=theorem]{corollary}
\declaretheorem[style=deflt,sibling=theorem]{assumption}

\declaretheorem[style=deflt,sibling=theorem]{example}
\declaretheorem[style=deflt,sibling=theorem]{remark}
\declaretheorem[style=deflt,sibling=theorem]{definition}

\numberwithin{equation}{section}



\newcommand{\T}{\mathbb{T}}
\newcommand{\R}{\mathbb{R}}
\newcommand{\C}{\mathbb{C}}
\newcommand{\E}{\mathbb{E}}
\newcommand{\N}{\mathbb{N}}

\newcommand{\Z}{\mathbb{Z}}



\newcommand{\cE}{\mathcal{E}}
\newcommand{\cF}{\mathcal{F}}

\newcommand{\cJ}{\mathcal{J}}

\newcommand{\cL}{\mathcal{L}}

\newcommand{\cN}{\mathcal{N}}

\newcommand{\cP}{\mathcal{P}}
\newcommand{\cQ}{\mathcal{Q}}
\newcommand{\cR}{\mathcal{R}}
\newcommand{\cS}{\mathcal{S}}

\newcommand{\cV}{\mathcal{V}}

\newcommand{\cX}{\mathcal{X}}
\newcommand{\cY}{\mathcal{Y}}


\newcommand{\tL}{\tilde{\cL}}

\newcommand{\tQ}{\tilde{\cQ}}

\newcommand{\tS}{\tilde{\cS}}



%
\newcommand{\DX}{{D_{\cX}}}
\newcommand{\DY}{{D_{\cY}}}
\newcommand{\ev}{{\mathrm{ev}}}

\newcommand{\define}{\textbf}

\newcommand{\size}{\mathrm{size}}

\newcommand{\depth}{\mathrm{depth}}

\renewcommand{\Re}{\mathrm{Re}}
\renewcommand{\Im}{\mathrm{Im}}

\renewcommand{\tilde}{\widetilde}
\renewcommand{\hat}{\widehat}
\renewcommand{\bar}{\overline}


%
\newcommand{\set}[2]{{\left\{ #1 \,\middle|\, #2 \right\}}}
\newcommand{\slot}{{\,\cdot\,}}
%



\newcommand{\per}{\mathrm{per}}


\newcommand{\explain}[2]{\overset{\mathclap{\underset{\downarrow}{#2}}}{#1}}

\tikzcdset{every label/.append style = {font = \normalsize},
            cells={nodes={font=\large}}}

\title[Parametric complexity of operator learning]{The parametric complexity of operator learning}
\author{Samuel Lanthaler \and Andrew M. Stuart}
\date{ \today }

\makeatletter
\renewcommand{\paragraph}{%
  \@startsection{paragraph}{4}%
  {\z@}{.8ex \@plus 1ex \@minus .2ex}{-1em}%
  {\normalfont\normalsize\bfseries}%
}
\makeatother

\usepackage{color}
\definecolor{darkred}{rgb}{.6,0,0}
\definecolor{darkblue}{rgb}{0,0,.7}
\definecolor{darkgreen}{rgb}{0,.7,0}
\definecolor{darkbrown}{rgb}{0.8,0.4,0.4}

\newcommand{\rev}[1]{#1}
\newcommand{\revv}[1]{#1}

\newtheoremstyle{named}{}{}{\itshape}{}{\bfseries}{.}{.5em}{\thmnote{#3}}
\theoremstyle{named}

\renewcommand{\j}{{j}}
\renewcommand{\k}{{k}}
\newcommand{\jk}{{j_k}}
\newcommand{\mfQ}{\mathfrak{Q}}
\newcommand{\mfq}{\mathfrak{q}}
\newcommand{\cmplx}{\mathrm{cmplx}}
\newcommand{\oF}{\mathring{F}}

\newcommand{\ssection}[1]{\subsubsection{#1}}
\setcounter{tocdepth}{1}

\begin{document}

\begin{abstract}
Neural operator architectures employ neural networks to approximate operators mapping between Banach spaces of functions; they may be used to accelerate model evaluations via emulation, or to discover models from data. Consequently, the methodology has received increasing attention over recent years, giving rise to the rapidly growing field of operator learning. 
The first contribution of this paper is to prove that for general classes of operators which are characterized only by their $C^r$- or Lipschitz-regularity, operator learning suffers from a ``curse of parametric complexity'', which is an infinite-dimensional analogue of the well-known curse of dimensionality encountered in high-dimensional approximation problems. The result is applicable to a wide variety of existing neural operators, including 
PCA-Net, DeepONet and the FNO.
The second contribution of the paper is to prove that this general curse can be overcome for solution operators defined by the Hamilton-Jacobi equation; this is achieved by leveraging additional structure in the underlying solution operator, going beyond regularity. To this end, a novel neural operator architecture is introduced, termed HJ-Net, which explicitly takes into account characteristic information of the underlying Hamiltonian system. Error and complexity estimates are derived for HJ-Net which show that this architecture can provably beat the curse of parametric complexity related to the infinite-dimensional input and output function spaces.
\end{abstract}

\maketitle

\tableofcontents

\section{Introduction}
\label{sec:I}

This paper is devoted to a study of the computational complexity of the approximation of maps
between Banach spaces by means of neural operators. The paper has two main focii: establishing
a complexity barrier for general classes of $C^r-$ or Lipschitz regular maps; and 
then showing that this barrier can  be beaten for Hamilton-Jacobi (HJ) equations.
In Subsection \ref{ssec:cxt} we give a detailed literature review; we set in context the definition of
\emph{``the curse of parametric complexity''} that we introduce and use in this paper; and we
highlight our main contributions. Then, in Subsection \ref{ssec:ovr}, we overview the organization 
of the remainder of the paper.

\subsection{Context and Literature Review}
\label{ssec:cxt}

The use of neural networks to learn operators, typically mapping between
Banach spaces of functions defined over subsets of finite dimensional
Euclidean space and referred to as \emph{neural operators}, 
is receiving growing interest in the computational science and engineering community \cite{chen1995universal,zhu2018bayesian,boulle2022data,khoo2021solving,lu2021learning,bhattacharya2021model,nelsen2021random,li2021fourier}.
The methodology has the potential for \emph{accelerating} numerical methods for solving
partial differential equations (PDEs) when a model relating inputs and outputs is known;
and it has the potential for \emph{discovering} input-output maps from data when
no model is available. 

The computational complexity of learning and
evaluating such neural operators is crucial to understanding when the methods
will be effective. Numerical experiments addressing this issue may be
found in \cite{de2022cost} and the analysis of linear problems from
this perspective may be
found in \cite{boulle2022learning,de2023convergence}. Universal approximation
theorems, applicable beyond the linear setting, may be found in
\cite{chen1995universal,lu2021learning,lanthaler2021error,kovachki2021universal,kovachki2021neural,lanthaler2023nonlocal,bhattacharya2021model,YOU2022115296} but such theorems do not address the cost of achieving a given
small error.

Early work on operator approximation \cite{mhaskar1997neural} presents first quantitative bounds; most notably, this work identifies the continuous nonlinear $n$-widths of a space of continuous functionals defined on $L^2$-spaces, showing that these $n$-widths decay at most (poly-)logarithmically in $n$. Both upper and lower bounds are derived in this specific setting. 
More recently, upper bounds on the computational
complexity of recent approaches to operator learning based on deep neural networks, including the DeepONet \cite{lu2021learning} and the Fourier Neural Operator (FNO) \cite{li2021fourier}, have been studied in more
detail. Specific operator learning tasks arising in PDEs have been considered in the papers
\cite{schwab2019deep,herrmann2022neural,marcati2023exponential,lanthaler2021error,kovachki2021universal,ryck2022generic,deng2021convergence}. Related complexity analysis for the PCA-Net architecture \cite{bhattacharya2021model} has recently been established in \cite{lanthaler2023operator}.
These papers studying computational complexity 
focus on the issue of beating a form of the \emph{``curse of dimensionality''} 
in these operator approximation tasks.

In these operator learning problems the input and output spaces are
infinite dimensional, and hence the meaning of the curse of dimensionality could be ambiguous. 
In this infinite-dimensional context, ``beating the curse'' is interpreted as identifying problems, 
and operator approximations applied to those problems, 
for which a measure of evaluation cost (referred to as their complexity)
grows only algebraically 
with the inverse of the desired error. As shown rigorously in \cite{lanthaler2023operator}, this is a non-trivial issue: for the PCA-Net architecture, it has been established that such algebraic complexity and error bounds \emph{cannot} be achieved for general Lipschitz (and even more regular) operators. 

As will be explained in detail in the present work, this fact is not specific to PCA-Net, but extends to many other neural operator architectures. In fact, it can be interpreted as a scaling limit of the conventional curse of dimensionality; this conventional curse affects finite-dimensional approximation problems when the underlying dimension $d$ is very large. It can be shown that (ReLU) neural networks cannot overcome this curse, in general. As a consequence, neural operators, which build on neural networks, suffer from the scaling limit of this curse in infinite-dimensions. To distinguish this infinite-dimensional phenomenon from the conventional curse of dimensionality encountered in high-but-finite-dimensional approximation problems, we will refer to the scaling limit identified in this work as \emph{``the curse of parametric complexity''.}

The first contribution of 
the present paper is to prove that for general classes of operators which 
are characterized only by their $C^r$- or Lipschitz-regularity, 
operator learning suffers from such a curse of parametric complexity:
Theorem \ref{thm:codexp} (and a variant thereof,
Theorem  \ref{thm:fno-cod}) shows that, in this setting,
there exist operators (and indeed even real-valued functionals)
which are $\epsilon-$approximable only with parametric model complexity
that grows exponentially in $\epsilon^{-1}$. 

To overcome the general curse of parametric complexity implied by Theorem \ref{thm:codexp} (and Theorem  \ref{thm:fno-cod}), efficient operator learning frameworks therefore have to leverage additional structure present in the operators of interest, going beyond $C^r$- or Lipschitz-regularity. Previous work on overcoming this curse for operator learning has mostly focused on operator holomorphy \cite{herrmann2020deep,schwab2019deep,lanthaler2021error} and the emulation of numerical methods \cite{kovachki2021universal,lanthaler2021error,lanthaler2023operator,marcati2023exponential} as two basic mechanisms for overcoming the curse of parametric complexity for specific operators of interest. 
A notable exception are the complexity estimates for DeepONets in \cite{deng2021convergence}. These estimates are based on explicit representation of the solution; most prominently, this is achieved via the Cole-Hopf transformation for the viscous Burgers equation. 
\rev{
 Another distinct mechanism to overcome the curse of parametric complexity is  highlighted by the notion of Barron spaces, e.g. \cite{barron1993universal,bach2017breaking,ma2022barron}, which have recently been extended to the operator learning setting in \cite{korolev2022two}. In the finite-dimensional context, functions belonging to such Barron spaces can be approximated at Monte-Carlo rates which are independent of the dimension of the underlying domain. As shown in \cite{korolev2022two}, similar rates can be established in an infinite-dimensional Barron class setting. The curse of parametric complexity derived in the present work indicates that spaces of $r$-times Fr\'echet differentiable operators cannot embed into operator Barron spaces, no matter the degree of differentiability $r$.
 }

An abstract characterization of the entire class of operators that allow for efficient approximation by neural operators would be very desirable. Unfortunately, this appears to be out of reach, at the current state of analysis.
Indeed, as far as the authors are aware, there does not even exist such a 
characterization for any class of standard numerical methods, 
such as finite difference, finite element or spectral, viewed as operator approximators. Therefore, in order to identify settings in which operator learning can be effective (without suffering from the general curse of parametric complexity), we restrict attention to specific classes of operators of interest.

The HJ equations present an application area that has the potential to be significantly \revv{impacted by the use of ideas from neural networks, especially
regarding the solution of problems for functions defined over subsets of high
dimensional ($d$) Euclidean space \cite{chow2017algorithm,chow2019algorithm,darbon2016algorithms,darbon2020overcoming};
in particular beating the curse of dimensionality with respect to this dimension $d$
has been of focus. We highlight work which proposes adapted neural network architectures, reflecting known structure of viscosity solutions of the HJ equation, to approximate and represent \emph{individual solutions} \cite{darbon2021some,meng2022sympocnet}. This cited research includes studies of analytical convergence rates for such architectures, as well as empirical work studying their practical performance \cite{nakamura2020qrnet,hure2021deep,darbon2023neural}. However, this entire} body of work has not studied operator learning, as it concerns settings
in which only fixed instances of the PDE are solved. The purpose of the second part of the present paper is to study the design and analysis of neural operators to approximate the solution operator for
HJ equations; this operator maps the initial condition (a function) to the
solution at a later time (another function). 

The second contribution of the paper is to prove in Theorem \ref{thm:HJNET}
that the general curse of parametric complexity can be overcome for maps defined 
by the solution operator of the Hamilton-Jacobi (HJ) equation; this is 
achieved by exposing additional structure in the underlying solution 
operator, different from holomorphy and emulation and going beyond regularity, that can be leveraged by neural operators; for the HJ equations, the identified structure relies on representation of solutions of the HJ equations in terms of characteristics. In this paper the dimension $d$ of the underlying spatial domain will be fixed
and we do not study the curse of dimensionality with respect to $d$.
Instead, we demonstrate that it is possible
to beat the curse of parametric complexity 
with respect to the infinite dimensional nature
of the input function for fixed (and moderate) $d$.

\subsection{Organization}
\label{ssec:ovr}
In section \ref{sec:cod} we present the first contribution:
Theorem \ref{thm:codexp}, together with the closely related
Theorem  \ref{thm:fno-cod} which extends the general but not exhaustive  setting 
Theorem \ref{thm:codexp} to include the FNO, establish 
that the curse of parametric complexity 
is to be expected in operator learning. 
The following sections then focus on the second contribution and hence
on solution operators associated with the HJ equation; 
in Theorem \ref{thm:HJNET} we prove that additional structure in the
solution operator for this equation can be leveraged to beat the curse 
of parametric complexity.
In Section \ref{sec:S} we describe the 
precise setting for the HJ equation employed
throughout this paper; we recall the method of characteristics for solution of
the equations; and we describe a short-time existence theorem. 
Section \ref{sec:NO} introduces the proposed neural operator, the HJ-Net,
based on learning the flow underlying the method of characteristics and combining it with
scattered data approximation. In Section \ref{sec:E} we state our approximation
theorem for the proposed HJ-Net, resulting in complexity estimates which demonstrate
that the curse of parametric complexity is avoided in relation to the infinite dimensional
nature of the input space (of initial conditions). Section \ref{sec:C} contains
concluding remarks. Whilst the high-level structure of the proofs is contained
in the main body of the paper, many technical details are collected in the appendix,
to promote readability.

\section{The Curse of Parametric Complexity}
\label{sec:cod}

Operator learning seeks to employ neural networks to efficiently approximate operators mapping between infinite-dimensional Banach spaces of functions. 
To enable implementation of these methods in practice, maps between the formally infinite-dimensional spaces have to be approximated using only a finite number of degrees of freedom.
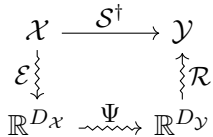
\begin{figure}
\begin{tikzcd}
\cX \arrow{r}{\cS^\dagger} \arrow[swap,squiggly]{d}{\cE} 
& \cY  \\%
\R^{\DX} \arrow[squiggly]{r}{\Psi}& \R^{\DY} \arrow[squiggly,swap]{u}{\cR}
\end{tikzcd}
\caption{Diagrammatic illustration of operator learning based on an encoding $\cE$, a neural network $\Psi$, and a reconstruction $\cR$.}
\label{fig:ear}
\end{figure}

Commonly, operator learning frameworks can therefore be written in terms of an encoding, a neural network and a reconstruction step 
as shown in Figure \ref{fig:ear}. The first step $\mathcal{E}$ encodes the infinite-dimensional input using a finite number of degrees of freedom. The second approximation step $\Psi$ maps the encoded input to an encoded, finite-dimensional output. The final reconstruction step $\mathcal{R}$ reconstructs an output function given the finite-dimensional output of the approximation mapping. The composition of these encoding, approximation and reconstruction mappings thus takes an \emph{input function} and maps it to another \emph{output function}, and hence defines an operator. Existing operator learning frameworks differ in their particular choice of the encoder, neural network architecture and reconstruction mappings.

We start by giving background discussion on the curse of dimensionality (CoD) in finite dimensions,
in subsection \ref{ssec:cursenn}. We then describe the subject in detail for  neural network-based operator learning, resulting in our notion of the curse of parametric complexity, 
in subsection \ref{ssec:curseop}. In subsection \ref{ssec:main0} we state our main theorem concerning
the curse of parametric complexity for neural operators. Subsection \ref{ssec:examples} demonstrates
that the main theorem applies to PCA-Net, DeepONet and the NOMAD neural network architectures.
Subsection \ref{ssec:curseopFNO} extends the main theorem to the FNO since
it sits outside the framework introduced in subsection \ref{ssec:curseop}.

\subsection{Curse of Dimensionality for Neural Networks}
\label{ssec:cursenn} Since the neural network mapping $\Psi: \R^{\DX} \to \R^{\DY}$ in the decomposition shown in Figure \ref{fig:ear} typically maps between \emph{high-dimensional} (encoded) spaces, with $\DX$, $\DY \gg 1$, most approaches to operator learning employ neural networks to learn this mapping. The motivation for this is that, empirically, neural networks have been found to be exceptionally well suited for the approximation of such high-dimensional functions in diverse applications \cite{goodfellow2016deep}. Detailed investigation of the approximation theory of neural networks, including quantitative upper and lower approximation error bounds, has thus attracted a lot of attention in recent years \cite{yarotsky_error_2017,yarotsky2020phase,kohler2021rate,lu2021deep,devore2021neural}. Since we build on this analysis we summarize the
relevant part of it here, restricting attention to ReLU neural networks 
in this work, as defined next; generalization to the use of other 
(piecewise polynomial) activation functions is possible.

\subsubsection{ReLU Neural Networks} 
\label{sec:nndef}
Fix integer $L$ and integers $\{d_\ell\}_{\ell=0}^{L+1}.$ 
Let $A_\ell \in \R^{d_{\ell+1} \times d_{\ell}}$ and
$b_\ell \in \R^{d_{\ell+1}}$ for $\ell=0,\dots, L$. A ReLU neural network $\Psi: \R^{\DX} \to \R^{\DY}$, $x \mapsto \Psi(x)$ is a mapping of the form
\begin{align}
\label{eq:nn}
\left\{
\begin{aligned}
   x_0&=x,\\
   x_{\ell+1}&=\sigma(A_\ell x_\ell + b_\ell), \quad \ell=0, \cdots, L-1,\\
   \Psi(x) & =A_L x_L+b_L,
\end{aligned}
\right.
\end{align}
where $d_0 = \DX$ and $d_{L+1} = \DY$.
Here the activation function $\sigma:\R \to \R$ is extended pointwise to
act on any Euclidean space; and in what follows we employ the ReLU activation function
$\sigma(x)=\min\{0,x\}.$ We let $\theta:=\{A_\ell,b_\ell\}_{\ell=0}^{L}$ and note that we
have defined parametric mapping $\Psi(\slot)=\Psi(\slot;\theta)$. We define the depth of $\Psi$ as the number of layers, and the size of $\Psi$ as the number of non-zero weights and biases, i.e.
\[
\depth(\Psi) = L,
\quad
\size(\Psi) = \sum_{\ell=0}^L \left\{ \Vert A_\ell \Vert_0 + \Vert b_\ell \Vert_0 \right\},
\]
where $\Vert \slot \Vert_0$ counts the number of non-zero entries of a matrix or vector.

\subsubsection{Two Simple Facts from ReLU Neural Network Calculus}
\label{sec:nncalc}
The following facts will be used without further comment (see e.g. \cite[Section 2.2.3]{opschoor2022exponential} for a discussion of more general results): If $\Psi: \R^\DX \to \R^\DY$ is a ReLU neural network, and $A\in \R^{\DX \times d}$ is a matrix, then there exists a ReLU neural network $\tilde{\Psi}: \R^d \to \R^\DY$, such that 
\begin{align}
\label{eq:calculus1}
\left\{
\begin{aligned}
\tilde{\Psi}(x) &= \Psi(Ax), \qquad \text{for all $x\in \R^d$,}
\\
\depth(\tilde{\Psi}) &= \depth(\Psi)+1, \\
\size(\tilde{\Psi}) &\le 2 \Vert A \Vert_0 + 2\,\size(\Psi).
\end{aligned}
\right.
\end{align}
Similarly, if $V\in \R^{d\times \DY}$ is a matrix, then there exists a ReLU neural network $\hat{\Psi}: \R^\DX \to \R^d$, such that 
\begin{align}
\label{eq:calculus2}
\left\{
\begin{aligned}
\hat{\Psi}(x) &= V\Psi(x), \qquad \text{for all $x\in \R^d$,}
\\
\depth(\hat{\Psi}) &= \depth(\Psi)+1, \\
\size(\hat{\Psi}) &\le 2 \Vert V \Vert_0 + 2\,\size(\Psi).
\end{aligned}
\right.
\end{align}
The main non-trivial issue in \eqref{eq:calculus1} and \eqref{eq:calculus2} is to preserve the potentially sparse structure of the underlying neural networks; this is based on a concept of ``sparse concatenation'' from \cite{petersen2018optimal}.

\subsubsection{Approximation Theory and CoD for ReLU Networks}

One overall finding of research into the approximation power of ReLU neural networks is that, for function approximation in spaces characterized by smoothness, neural networks cannot entirely overcome the curse of dimensionality \cite{yarotsky_error_2017,yarotsky2020phase,kohler2021rate,lu2021deep}. 
This is illustrated by the following result, which builds on \cite[Thm. 5]{yarotsky_error_2017} derived by D. Yarotsky:

\begin{restatable}[Neural Network CoD]{proposition}{PropositionNNCoD}
\label{prop:nn-cod}
Let $r\in \N$ be given. For any dimension $D\in \N$, there exists $f_{D,r} \in C^r([0,1]^D;\R)$ and constant $\bar{\epsilon},\gamma > 0$, such that any ReLU neural network $\Psi: \R^D \to \R$ achieving accuracy 
\[
\sup_{x\in [0,1]^D} |f_{D,r}(x) - \Psi(x)| \le \epsilon,
\]
with $\epsilon \le \bar{\epsilon}$, has size at least $\size(\Psi) \ge \epsilon^{-\gamma D/r}$. The constant $\bar{\epsilon} = \bar{\epsilon}(r) > 0$ depends only on $r$, and $\gamma > 0$ is universal.
\end{restatable}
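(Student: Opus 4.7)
The plan is to build a single hard function $f_{D,r} \in C^r([0,1]^D)$ as a lacunary sum of rescaled bumps on dyadic grids and combine it with a VC-dimension / linear-region count for ReLU networks, essentially following Yarotsky's Theorem 5 in \cite{yarotsky_error_2017}.

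First, I would fix a smooth bump $\phi \in C^r_c((0,1)^D)$ with $\|\phi\|_{C^r} \le 1$, and introduce the rescaled translates $\phi_{k,\alpha}(x) := \phi(2^k x - \alpha)$ at dyadic scale $k$, with $\alpha \in \{0,\dots,2^k-1\}^D$, whose supports are pairwise disjoint. With signs $c_{k,\alpha} \in \{-1,+1\}$ to be fixed later, I would set
\[
f_{D,r}(x) = \sum_{k \ge k_0} 2^{-kr} \sum_{\alpha} c_{k,\alpha}\, \phi_{k,\alpha}(x).
\]
The disjoint supports together with the $2^{-kr}$ weights give $\|f_{D,r}\|_{C^r} \le C(r)$, independently of $D$ and of the sign pattern.

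Second, I would prove the lower bound. Given a ReLU network $\Psi$ of size $s$ with $\sup_x |\Psi(x) - f_{D,r}(x)| \le \epsilon$, set $k_\star := \lceil r^{-1}\log_2(c/\epsilon)\rceil$ so that $2^{-k_\star r} \asymp \epsilon$. Evaluating the approximation error at the $N_\star := 2^{k_\star D} \asymp \epsilon^{-D/r}$ bump centers of scale $k_\star$ forces the vector $(\Psi(x_{k_\star,\alpha}))_\alpha$ to encode the prescribed sign pattern $(c_{k_\star,\alpha})_\alpha$, up to a fixed contribution from coarser scales. The Bartlett--Harvey--Liaw--Mehrabian pseudo-dimension bound (equivalently, the linear-region count for piecewise-affine networks) implies that ReLU networks of size $s$ can realize at most $(eN_\star/V)^V$ distinct sign patterns on any $N_\star$-point set, with $V = O(s^2 \log s)$. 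Hence some pattern is unrealizable whenever $s^2 \log s \ll N_\star / \log N_\star$. Choosing $c_{k_\star,\alpha}$ deterministically to be such an unrealizable pattern then forces $s \ge \epsilon^{-\gamma D/r}$ for some universal $\gamma < 1/2$.

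The main obstacle is that the signs $c_{k,\alpha}$ must be fixed \emph{once and for all}, so that the \emph{same} $f_{D,r}$ witnesses the bound for every $\epsilon \le \bar\epsilon(r)$. I would handle this scale by scale: at each level $k$, choose the $c_{k,\alpha}$ adversarially to defeat every ReLU architecture of size $\le 2^{\gamma k D/r}$; this is a finite set of architectures per scale, so the choice is possible by pigeonhole, and disjointness of supports ensures the scales do not interact. The $C^r$-bound is preserved irrespective of the signs. A careful tracking of constants in the counting step then confirms that $\bar\epsilon$ depends only on $r$ (through $\|\phi\|_{C^r}$ and $k_0$), and that $\gamma$ is universal, independent of both $D$ and $r$.
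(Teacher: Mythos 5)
Your overall strategy -- a lacunary, multi-scale bump construction combined with a VC/pseudo-dimension count of sign patterns -- is the same strategy as the paper's (which is itself a refinement of Yarotsky's Theorem 5, as the paper acknowledges). The paper, however, works with the abstraction $\cN(f,\epsilon)$ (the minimal number of computation units needed to $\epsilon$-approximate $f$ on $[0,1]^D$) and its calculus rules, building $f_{D,r} = \sum_k a_k f_k$ with $a_k = \tfrac12\epsilon_{k-1}$, $\epsilon_k = 2^{-2^k}$, and \emph{abstract} hard functions $f_k \in \oF_{D,r}$ chosen by a recursive Case~1/Case~2 argument (its Lemma~\ref{lem:y2} plus Proposition~\ref{prop:y0}). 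Your version is more concrete: explicit dyadic bump sums with signs $c_{k,\alpha}$, and the nice observation that open dyadic supports make finer scales vanish at coarser bump centers, so the sign pattern at scale $k_\star(\epsilon)$ is directly readable off the network's values. That is a perfectly viable reformulation.

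Two points, though, are where the real work lies, and they are exactly where you wave your hands. First, the assertion ``a finite set of architectures per scale, so the choice is possible by pigeonhole'' is not by itself enough. You must bound the \emph{total} number of sign patterns achievable by the union over all sparse ReLU architectures of size $\le s$ and all real weight assignments, and compare that to $2^{N_\star}$. The paper sidesteps enumerating architectures by embedding every sparse network with $M$ computation units into a single fully connected ``enveloping'' architecture with $\le 5DM^4$ parameters and applying the VC-dimension bound once, to that one fixed architecture (this is also why the paper's universal $\gamma$ comes out small, $\gamma = 1/64$, not the $\approx 1/2$ you anticipate). If you keep your direct enumeration, you have to actually bound the number of sparsity patterns and add that term to the count; your sketch doesn't do this.

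Second -- and this is the crux of the proposition as stated -- you write that ``a careful tracking of constants \dots confirms that $\bar\epsilon$ depends only on $r$ \dots and that $\gamma$ is universal,'' but this is precisely the nontrivial technical content that distinguishes the proposition from Yarotsky's Theorem 5. Yarotsky's statement gives a lower bound only along an unspecified sequence $\epsilon_j \to 0$, and his constants are allowed to depend on $D$. The paper's Lemma~\ref{lem:y1} re-derives Yarotsky's Theorem 4(a) with the explicit $D$-dependence made visible; the key fact is that by taking a tensor-product bump $\phi(x) = \prod_j\phi_0(x_j)$ one gets $\sup_{|\alpha|\le r}\|D^\alpha\phi\|_\infty \le \|\phi_0\|_{C^r}^r$, a bound depending only on $r$, not $D$. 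That is what ultimately lets $c_0$ (hence $\bar\epsilon$) depend on $r$ alone, and lets $\gamma$ be universal. Your proposal would need to carry out this same bookkeeping; as written, the claim that the constants fall out correctly is a gap, not a proof. Note also that a $\log N_\star = \Theta((D/r)\log(1/\epsilon))$ factor and a $D^{1/4}$ factor both enter the counting; absorbing them uniformly in $D$ is possible (because $D^{-1/D}$ is bounded away from $0$), but it needs to be checked, not assumed.
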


The proof of Proposition \ref{prop:nn-cod} is included in Appendix \ref{A:nn-cod}.
Proposition \ref{prop:nn-cod} shows that neural network approximation 
of a function between high-dimensional Euclidean spaces suffers from a curse of dimensionality, characterised by an algebraic complexity with a potentially large exponent proportional to the dimension $D \gg 1$. 
This lower bound is similar to the approximation rates (upper bounds) 
achieved by traditional methods,\footnote{Ignoring the potentially beneficial factor $\gamma$} 
such as polynomial approximation. This fact suggests that the empirically 
observed efficiency of neural networks may well rely on additional structure 
of functions $f$ of practical interest, beyond their smoothness; for 
relevant results in this direction see, for example,
\cite{mhaskar2016learning,gribonval2022approximation}.

\vspace{2em}

\subsection{Curse of Parametric Complexity in Operator Learning}
\label{ssec:curseop}
 In the present work, we consider the approximation of an underlying operator $\cS^\dagger: \cX \to \cY$ acting between Banach spaces; specifically, we assume that the dimensions of the spaces $\cX,\cY$ are \emph{infinite}.
 Given the curse of dimensionality in the finite-dimensional setting, Proposition \ref{prop:nn-cod}, and letting $D \to  \infty$, one would generally expect a super-algebraic, potentially even exponential, lower bound on the ``complexity'' of neural operators $\cS$
 approximating such $\cS^\dagger$, as a function of the accuracy $\epsilon$. 
 In this subsection, we make this statement precise for a general class of neural operators, in Theorem \ref{thm:codexp}. This is preceded by a discussion of relevant structure of compact sets in infinite-dimensional function spaces and a discussion of a suitable class of ``neural operators''.


\ssection{Infinite-dimensional hypercubes}

 Proposition \ref{prop:nn-cod} was stated for the unit cube $[0,1]^D$ as the underlying domain. In the finite-dimensional setting of Proposition \ref{prop:nn-cod}, the approximation rate turns out to be independent of the underlying compact domain, provided that the domain has non-empty interior and assuming a Lipschitz continuous boundary. This is in contrast to the infinite dimensional case, where compact domains necessarily have empty interior and where the convergence rate depends on the specific structure of the domain. To state our complexity bound for operator approximation, we will therefore need to discuss the prototypical structure of compact subsets $K\subset \cX$. 
 
 To fix intuition, we temporarily consider $\cX$ a function space (for example a H\"older, Lebesgue or Sobolev space). In this case, the most common way to define a compact subset $K\subset \cX$ is via a smoothness constraint, as illustrated by the following concrete example:
\begin{example}
\label{ex:Kexample}
Assume that $\cX = C^s(\Omega)$ is the space of $s$-times continuously differentiable functions on a bounded domain $\Omega \subset \R^d$. Then for $\rho > s$ and upper bound $M>0$, the subset $K \subset \cX$ defined by 
\begin{align}
\label{eq:Kexample}
K = \set{u\in C^\rho(\Omega)}{\Vert u \Vert_{C^\rho} \le M},
\end{align}
is a compact subset of $\cX$. Here,  we define the $C^\rho$-norm as,
\begin{align}
\label{eq:Crho}
\Vert u \Vert_{C^\rho} = \max_{|\nu|\le \rho} \sup_{x\in \Omega} |D^\nu u(x)|.
\end{align}
\end{example}
To better understand the approximation theory of operators $\cS^\dagger: K \subset \cX \to \cY$, we would like to understand the structure of such $K$. Our point of view is inspired by Fourier analysis, according to which smoothness of $u\in K$ roughly corresponds to a decay rate of the Fourier coefficients of $u$. In particular, $u$ is guaranteed to belong to the set \eqref{eq:Kexample}, if $u$ is of the form,
\begin{align}
\label{eq:cube}
u = A \sum_{j=1}^\infty j^{-\alpha} y_j e_j, \qquad y_j\in [0,1] \text{ for all } j\in \N,
\end{align}
for a sufficiently large decay rate $\alpha > 0$, small constant $A>0$, and where $e_j: \R^d\to \R$ 
 denotes the periodic Fourier (sine/cosine) basis, restricted to $\Omega$. We include a proof of this fact at the end of this subsection (see Lemma \ref{lem:Cs}), where we also identify relevant decay rate $\alpha$. 
 In this sense, the set $K$ in \eqref{eq:Kexample} could be said to ``contain'' an infinite-dimensional hypercube $\prod_{j=1}^\infty [0,Aj^{-\alpha}]$, with decay rate $\alpha$. Such hypercubes will replace the finite-dimensional unit cube $[0,1]^D$ in our analysis of operator approximation in infinite-dimensions.
 
 We would like to point out that a similar observation holds for many other sets $K$ defined by a smoothness constraint, such as sets in Sobolev spaces defined by a smoothness constraint, $\{\Vert u \Vert_{H^\rho} \le M\} \subset H^s(\Omega)$, or more generally $\{\Vert u \Vert_{W^{\rho,p}} \le M \} \subset W^{s,p}(\Omega)$, for any $1\le p \le \infty$, but also Besov spaces, spaces of functions of bounded variation and others share a similar structure. Bounded balls in all of these spaces contain infinite-dimensional hypercubes, consisting of elements of the form \eqref{eq:cube}. We note in passing that, in general, it may be more natural to replace the trigonometric basis in \eqref{eq:cube} by some other choice of basis, such as polynomials, splines, wavelets, or a more general (frame) expansion. We refer to e.g. \cite{christensen2003introduction,heil2010basis} for general background and \cite{herrmann2022neural} for an example of such a setting in the context of holomorphic operators. 
 
 The above considerations lead us to the following definition of an abstract hypercube:

 \begin{definition}
 \label{def:hypercube}
Let $e_1,e_2,\dots \in \cX$ be a sequence of linearly independent and normed elements, $\Vert e_j \Vert_{\cX} = 1$. Given constants $A > 0$, $\alpha > 1$, we say that $K\subset \cX$ \define{contains an infinite-dimensional cube $Q_\alpha= Q_\alpha(A; e_1,e_2,\dots)$}, if: 
\begin{enumerate}
\item $K$ contains the set $Q_\alpha$ consisting of all $u$ of the form \eqref{eq:cube};
\item set $\{e_j\}_{j\in \N}$ possesses a bounded bi-orthogonal sequence of functionals, labelled $e_1^\ast,e^\ast_2,\dots\in \cX^\ast$, in the continuous dual of $\cX$\footnote{If $\cX$ is a Hilbert space, such a bi-orthogonal sequence always exists for independent $e_1,e_2,\dots \in \cX$.}; i.e. we assume that $e^\ast_k(e_j) = \delta_{jk}$ for all $j,k\in \N$, and that there exists $M>0$, such that $\Vert e^\ast_k \Vert_{\cX^\ast} \le M$ for all $k\in \N$.
\end{enumerate}
 \end{definition}

\begin{remark}
 Property (2) in Definition \ref{def:hypercube}, i.e. the assumed existence of a bi-orthogonal sequence $e^\ast_j$, ensures that there exist ``coordinate projections'': if $u$ is of the form \eqref{eq:cube}, then the $j$-th coefficient $y_j$ can be retrieved from $u$ as $y_j = A^{-1}j^\alpha e^\ast_j(u)$. This allows us to uniquely identify $u \in Q_\alpha$ with a set of coefficients $(y_1,y_2,\dots) \in [0,1]^\infty$.
\end{remark}

\begin{remark}
The decay rate $\alpha$ of the infinite-dimensional cube $Q_\alpha \subset K$ provides a measure of its ``asymptotic size'' or ``complexity''. In terms of our complexity bounds, this decay rate will play a special role. Hence, we will usually retain this dependence explicitly by writing $Q_\alpha$, but suppress the additional dependence on $A$ and $e_1,e_2,\dots$ in the following.
\end{remark}

The notion of infinite-dimensional cubes introduced in Definition \ref{def:hypercube} is only a minor generalization of an established notion of cube embeddings, introduced by Donoho \cite{donoho2001sparse} in a Hilbert space setting. We refer to \cite[Chap. 5]{dahlke2015harmonic} for a pedagogical exposition of such cube embeddings in the Hilbert space setting, and their relation to the Kolmogorov entropy of $K$.

 \begin{remark}
The complexity bounds established in this work will be based on infinite-dimensional hypercubes.
     An interesting question, left for future work, is whether our main result on the curse of parametric complexity, Theorem \ref{thm:codexp} below, could be stated directly in terms of the Kolmogorov complexity of $K$, or other notions such as the Kolmogorov $n$-width \cite{devore1998nonlinear}.
 \end{remark}

Our definition of an infinite-dimensional hypercube is natural in view of the following lemma,
the discussion following Example \ref{ex:Kexample}, and other similar results.
\begin{lemma}
\label{lem:Cs}
Assume that $\cX = C^s(\Omega)$ is the space of $s-$times
continuously differentiable functions on a bounded domain $\Omega \subset \R^d$.
Choose $\rho>s$ and define $K$, compact in $\cX$, by
\[
K = \set{u\in C^\rho(\Omega)}{\Vert u \Vert_{C^\rho} \le M},
\]
with constant $M>0$.
Then $K$ contains an infinite-dimensional hypercube $Q_\alpha$, for any $\alpha > 1 + 
\frac{\rho-s}{d}$.
\end{lemma}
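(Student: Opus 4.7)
The plan is to realize $Q_\alpha$ via a dyadic hierarchy of smooth bumps inside $\Omega$, paired with ``analysis bumps'' of vanishing moments that make the biorthogonal functionals uniformly bounded in $\cX^\ast = (C^s(\Omega))^\ast$. Openness of $\Omega$ lets me fix a closed cube $\bar Q \subset \Omega$ of side $\ell_0$, and I subdivide $Q$ dyadically: at level $n \in \N_0$ there are $2^{nd}$ disjoint sub-cubes of side $\ell_n = 2^{-n}\ell_0$ with centers $\{x_k^{(n)}\}$. Fixing once and for all a synthesis bump $\eta \in C_c^\infty((-1/2,1/2)^d)$, I set $\eta_{n,k}(x) := \eta\bigl((x - x_k^{(n)})/\ell_n\bigr)$ and enumerate these scale by scale into a single sequence $(\eta_j)_{j\ge 1}$. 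Since there are $\sum_{m \le n} 2^{md} \asymp 2^{nd}$ bumps up to level $n$, the $j$-th bump sits at scale $\ell_{n(j)} \asymp j^{-1/d}$.

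The normalization $e_j := \eta_j/\|\eta_j\|_{C^s(\Omega)}$ makes $\|e_j\|_{\cX} = 1$, and the chain rule gives $\|\eta_j\|_{C^\sigma} \asymp \ell_{n(j)}^{-\sigma}$ for $\sigma \in \{s,\rho\}$, hence $\|e_j\|_{C^\rho} \asymp \ell_{n(j)}^{s-\rho} \asymp j^{(\rho-s)/d}$. For $y_j \in [0,1]$ and $u = A\sum_j j^{-\alpha} y_j e_j$ the triangle inequality therefore yields
\[
\|u\|_{C^\rho} \le A\sum_{j \ge 1} j^{-\alpha}\|e_j\|_{C^\rho} \le CA \sum_{j \ge 1} j^{-\alpha + (\rho-s)/d},
\]
a series which converges precisely when $\alpha > 1 + (\rho - s)/d$; choosing $A$ small then forces $\|u\|_{C^\rho} \le M$, verifying inclusion (1) of Definition~\ref{def:hypercube}.

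The main obstacle is condition (2): a naive biorthogonal functional $e_j^\ast(u) \propto \int u \,\eta_j$ produces $\|e_j^\ast\|_{\cX^\ast} \asymp \ell_{n(j)}^{-s}$, which blows up with $j$. I cure this by picking an analysis bump $\psi \in C_c^\infty((-1/2,1/2)^d)$ subject to the vanishing-moment conditions $\int y^\nu \psi(y)\,dy = 0$ for all multi-indices $|\nu| \le s-1$ together with $\int \eta\,\psi \ne 0$; both can be achieved simultaneously by the standard polynomial-correction construction. Setting $\psi_j(x) := \ell_{n(j)}^{-d}\psi\bigl((x-x_j)/\ell_{n(j)}\bigr)$ and $e_j^\ast(u) := \tilde c_j \int_\Omega u\,\psi_j\,dx$ with $\tilde c_j$ chosen to enforce $e_j^\ast(e_j) = 1$, biorthogonality for $j \ne k$ is automatic since the supports of $\eta_k$ and $\psi_j$ are disjoint, and a change of variables yields $\tilde c_j \asymp \ell_{n(j)}^{-s}$. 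The point of the vanishing moments is that for $u \in C^s(\Omega)$, subtracting the Taylor polynomial $P_{s-1}$ of $u$ at $x_j$ leaves a remainder $|R(x)| \le C\|u\|_{C^s}|x-x_j|^s$, and since $P_{s-1}$ is annihilated by $\psi_j$,
\[
\Bigl|\int_\Omega u\,\psi_j\,dx\Bigr| = \Bigl|\int_\Omega R\,\psi_j\,dx\Bigr| \le C\|u\|_{C^s} \int_{\R^d} |x-x_j|^s |\psi_j(x)|\,dx \le C'\|u\|_{C^s}\,\ell_{n(j)}^s.
\]
Multiplying by $\tilde c_j$ yields $\|e_j^\ast\|_{\cX^\ast} \lesssim 1$ uniformly in $j$, completing the proof. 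The remaining details---existence of such $\psi$ by solving a finite-dimensional moment system, and the fact that the entire hierarchy of sub-cubes stays inside $\Omega$ by openness---are routine.
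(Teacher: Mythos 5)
Your construction diverges from the paper's, which instead uses the Fourier sine basis $e_\kappa \propto \sin(\kappa\cdot x)$ on a cube $[0,2\pi]^d\subset\Omega$, where $L^2$-orthogonality holds exactly across all frequencies and there is no cross-scale issue to worry about; your dyadic-bump route with vanishing-moment analysis functions is an alternative of atomic/wavelet flavor. However, your proposal has a genuine gap in the biorthogonality step. You claim that ``biorthogonality for $j\ne k$ is automatic since the supports of $\eta_k$ and $\psi_j$ are disjoint,'' but this is only true for indices at the \emph{same} dyadic level. Across levels the supports overlap badly: the level-$0$ bump is supported on (almost all of) $\bar Q$ and hence overlaps every finer bump, and in general a coarse-level $\eta_k$ intersects the supports of many fine-level $\psi_j$. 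The vanishing moments of $\psi$ do not rescue exact biorthogonality here: subtracting the degree-$(s-1)$ Taylor polynomial of $\eta_k$ at $x_j$ annihilates the polynomial part, but the remainder contributes a nonzero $O(\ell_{n(j)}^s\|\eta_k\|_{C^s})$ term, so $\int\eta_k\psi_j\ne 0$ in general. Thus the system $\{e_j^\ast\}$ you write down does not satisfy $e^\ast_k(e_j)=\delta_{jk}$, and condition (2) of Definition~\ref{def:hypercube} fails as written.

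The argument is salvageable, but it requires a different choice of atoms. The cleanest repair in the spirit of your construction is to abandon the multiscale dyadic hierarchy and instead pack bumps with \emph{pairwise disjoint} supports at scales $\ell_j\asymp j^{-\beta}$ for some $\beta>1/d$ (the volume condition $\sum\ell_j^d<\infty$ then holds, so such a disjoint packing exists inside $\Omega$). With disjoint supports biorthogonality is genuinely automatic, and your vanishing-moment computation still gives $\|e_j^\ast\|_{\cX^\ast}\lesssim 1$ exactly as you wrote. The price is $\|e_j\|_{C^\rho}\asymp\ell_j^{s-\rho}\asymp j^{\beta(\rho-s)}$, so the series converges iff $\alpha>1+\beta(\rho-s)$; since $\beta$ can be taken arbitrarily close to $1/d$, this recovers every $\alpha>1+(\rho-s)/d$, matching the statement. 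The alternative repair — using a genuine biorthogonal wavelet pair with a refinement structure so that cross-scale cancellations are exact — also works, but is considerably heavier machinery for this lemma. I would also note that the exponent arithmetic, the $C^\rho$-norm scaling of the atoms, and the vanishing-moment trick for bounding the dual norms are all correct; the only flaw is the overlooked cross-level overlap.
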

The proof of Lemma \ref{lem:Cs} is included in Appendix \ref{A:cod}. While we have focused on spaces of continuously differentiable functions, similar considerations also apply to other smoothness spaces, such as the Sobolev spaces $W^{s,p}$, and more general Besov spaces.

\ssection{Curse of Parametric Complexity}

The main question to be addressed in the present section is the following: given $K \subset \cX$ compact, $\cS^\dagger: \cX \to \cY$ an $r$-times Fr\'echet differentiable operator to be approximated by a neural operator $\cS: \cX \to \cY$, and given a desired approximation accuracy $\epsilon>0$, how many tunable parameters (in the architecture of $\cS$) are required to achieve,
\[
\sup_{u\in K} \Vert \cS^\dagger(u) - \cS(u) \Vert_{\cY} \le \epsilon?
\]
The answer to this question clearly depends on our assumptions on $K\subset \cX$, $\cY$ and the class of neural operators $\cS$.

\paragraph{Assume $K \subset \cX$ contains a hypercube $Q_\alpha$.} 
Consistent with our discussion in the last subsection, we will assume that $K\subset \cX$ contains an infinite-dimensional hypercube $Q_\alpha$, as introduced in Definition \ref{def:hypercube}, with algebraic decay rate $\alpha>0$.

\paragraph{Assume $\cY = \R$, i.e. $\cS^\dagger$ is a functional.}
The approximation of an operator $\cS^\dagger: \cX \to \cY$ with potentially infinite-dimensional output space $\cY$ is generally harder to achieve than the approximation of a functional $\cS^\dagger: \cX\to \R$ with one-dimensional output; indeed, if $\dim(\cY)\ge 1$, then $\R$ can be embedded in $\cY$ and any functional $\cX \to \R$ gives rise to an operator $\cX \to \cY$ under this embedding. To simplify our discussion, we will therefore initially restrict attention to the approximation of functionals, with the aim of showing that even the approximation of $r$-times Fr\'echet differentiable functionals is generally intractable. 

\paragraph{Assume $\cS$ is of neural network-type.} Assuming that $\cY = \R$, we must finally introduce a rigorous notion of the relevant class of approximating functionals $\cS: \cX \to \R$, i.e. define a class of ``neural operators/functionals'' approximating the functional $\cS^\dagger$. 

\begin{definition}[Functional of neural network-type]
\label{def:nntype}
We will say that a (neural) functional $\cS: \cX \to \R$ is a ``functional of neural network-type'', if it can be written in the form 
\begin{align}
\label{eq:nof}
\cS(u) = \Phi( \cL u ), \quad \forall \, u \in K,
\end{align}
where for some $\ell \in \N$, $\cL: \cX \to \R^\ell$ is a linear map, and $\Phi: \R^\ell \to \R$ is a ReLU neural network (potentially sparse). 
\end{definition}

If $\cS$ is a functional of neural network-type, we define the complexity of $\cS$, denoted $\cmplx(\cS)$, as the smallest size of a neural network $\Phi$ for which there exists linear map $\cL$ such that a representation of the form \eqref{eq:nof} holds, i.e.
\begin{align}
\label{eq:cmplx}
\cmplx(\cS) := \min_{\Phi} \size(\Phi),
\end{align}
where the minimum is taken over all possible $\Phi$ in \eqref{eq:nof}.

\begin{remark}
\label{rem:ellsize}
Without loss of generality, we may assume that $\ell \le \size(\Phi)$ in \eqref{eq:nof} and \eqref{eq:cmplx}. Indeed, if this is not the case, then $\size(\Phi) < \ell$ and we can show that it is possible to construct another representation pair $(\tilde{\Phi},\tilde{\cL})$ in \eqref{eq:nof}, consisting of a neural network $\tilde{\Phi}:\R^{\tilde{\ell}} \to \R$, linear map $\tilde{\cL}: \cX \to \R^{\tilde{\ell}}$ and such that $\tilde{\ell} \le \size(\tilde{\Phi})$: To see why, let us assume that $\tilde{\ell} := \size(\Phi) < \ell$. Let $A$ be the weight matrix in the first input layer of $\Phi$. Since 
\[
\Vert A \Vert_0 \le \tilde{\ell} < \ell,
\]
at most $\tilde{\ell}$ columns of $A$ can be non-vanishing. Write the matrix $A = [a_1, a_2, \dots, a_\ell]$ in terms of its column vectors. Up to permutation, we may assume that $a_{\tilde{\ell}+1} = \dots = a_{\ell} =0$. We now drop the corresponding columns in the input layer of $\Phi$ and remove these unused components from the output of the linear map $\cL$ in \eqref{eq:nof}. This leads to a new map $\tL: \cX \to \R^{\tilde{\ell}}$, with output components $(\tL u)_j = (\cL u)_j$ for $j=1,\dots, \tilde{\ell}$, and we define $\tilde{\Phi}: \R^{\tilde{\ell}} \to \R$, as the neural network that is obtained from $\Phi$ by replacing the input matrix $A = [a_1, \dots, a_\ell]$ by $\tilde{A} = [a_1, \dots, a_{\tilde{\ell}}]$. Then $\tilde{\Phi}\circ \tL = \Phi \circ \cL$, so that $\tL$ and $\tilde{\Phi}$ satisfy a representation of the form \eqref{eq:nof}, but the dimension $\tilde{\ell}$ satisfies $\tilde{\ell} = \size(\Phi) =\size(\tilde{\Phi})$; the first equality is by definition of $\tilde{\ell}$, and the last equality holds because we only removed zero weights from $\Phi$. In particular, this ensures that $\tilde{\ell}\le \size(\tilde{\Phi})$ for this new representation, without affecting the size of the underlying neural network, i.e. $\size(\Phi) = \size(\tilde{\Phi})$.
\end{remark}

More generally, we can consider $\cY = \cY(\Omega;\R^p)$ a function space, consisting of functions $v: \Omega \to \R^p$ with domain $\Omega \subset \R^d$. Given $y\in \Omega$, we introduce the point-evaluation map, 
\begin{align*}
\ev_y: \cY \to \R^p, \quad \ev_y(v) := v(y).
\end{align*}
Provided that point-evaluation $\ev_y$ is well-defined for all $v\in \cY$, we can readily extend the above notion to operators, as follows:
\begin{definition}[Operator of neural network-type]
\label{def:onntype}
Let $\cY = \cY(\Omega;\R^p)$ be a function space on which point-evaluation is well-defined. We will say that a (neural) operator $\cS: \cX \to \cY$ is an ``operator of neural network-type'', if for any evaluation point $y\in \Omega$, the composition $\ev_y \circ \cS: \cX \to \R^p$, $\ev_y(\cS(u)) := \cS(u)(y)$, can be written in the form 
\begin{align}
\label{eq:onof}
\ev_y \circ \cS(u) = \Phi_y( \cL u ), \quad \forall \, u \in K,
\end{align}
where $\cL: \cX \to \R^\ell$ is a linear operator, and $\Phi_y: \R^\ell \to \R^p$ is a ReLU neural network which may depend on the evaluation point $y\in \Omega$. In this case, we define
\[
\cmplx(\cS) := \sup_{y\in \Omega} \min_{\Phi_y} \size(\Phi_y).
\]
\end{definition}

We next state our main result demonstrating a ``curse of parametric complexity'' for functionals (and operators) of neural network-type. This is followed by a detailed discussion of the implications of this abstract result for four representative examples of operator learning frameworks: PCA-Net, DeepONet, NOMAD and the Fourier neural operator.

\subsection{Main Theorem on Curse of Parametric Complexity}
\label{ssec:main0}
The following result formalizes an analogue of the curse of dimensionality in infinite-dimensions:
\begin{theorem}[Curse of Parametric Complexity]
\label{thm:codexp}
Let $K \subset \cX$ be a compact set in an infinite-dimensional Banach space $\cX$. Assume that $K$ contains an infinite-dimensional hypercube $Q_\alpha$ for some $\alpha > 1$. Then for any $r\in \N$ and $\delta > 0$, there exists $\bar{\epsilon}>0$ and an $r$-times Fr\'echet differentiable functional $\cS^\dagger: K \subset \cX \to \R$, 
such that approximation to accuracy $\epsilon \le \bar{\epsilon}$ by a
functional $\cS_\epsilon$ of neural network-type, 
\begin{align}
\label{eq:accuracy}
\sup_{u \in K} | \cS^\dagger(u) - \cS_\epsilon(u) | \le \epsilon,
\end{align}
implies complexity bound 
$\cmplx(\cS_\epsilon) \ge \exp(c \epsilon^{-1/(\alpha+1+\delta) r})$; here $c$, $\bar{\epsilon}>0$ are constants depending only on $\alpha$, $\delta$ and $r$.
\end{theorem}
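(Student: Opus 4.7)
The plan is to reduce the infinite-dimensional lower bound to the finite-dimensional curse of dimensionality in Proposition~\ref{prop:nn-cod} by using the hypercube $Q_\alpha \subset K$ to embed a whole family of hard finite-dimensional problems into a single functional on $\cX$. Using property~(2) of Definition~\ref{def:hypercube}, introduce coordinates $y_j(u) := A^{-1} j^\alpha e^\ast_j(u)$; these are bounded linear functionals on $\cX$ and they parametrize $Q_\alpha$ by $[0,1]^\infty$. Choose disjoint index blocks $J_k \subset \N$ with $|J_k| = D_k := 2^k$ and $\max J_k \lesssim D_k$, let $f_k \in C^r([0,1]^{D_k};\R)$ be the hard function of Proposition~\ref{prop:nn-cod} multiplied by a fixed smooth bump supported in $(0,1)^{D_k}$ and equal to one on $[1/4,3/4]^{D_k}$ (this preserves the $C^r$-bound and, after rescaling to the smaller sub-cube, preserves the $\epsilon^{-\gamma D_k/r}$ lower bound up to a harmless change of constants), and define
\[
   \cS^\dagger(u) \;:=\; \sum_{k=1}^\infty \lambda_k\, f_k\bigl((y_j(u))_{j \in J_k}\bigr),
   \qquad \lambda_k := D_k^{-r(\alpha+1+\delta)}.
\]

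The first step is to verify that this series defines an $r$-times Fr\'echet differentiable functional on an open neighborhood of $K$ in $\cX$. The $m$-th Fr\'echet derivative of the $k$-th summand evaluated on directions $h_1,\dots,h_m$ produces a sum of $(1/A)^m \prod_i j_i^\alpha\, e^\ast_{j_i}(h_i)$ terms with indices ranging in $J_k$, so its operator norm is bounded by $C\lambda_k M^m (\sum_{j \in J_k} j^\alpha)^m \lesssim \lambda_k D_k^{m(\alpha+1)}$. For $m \le r$ the chosen $\lambda_k$ gives a bound $\lesssim D_k^{-r\delta} = 2^{-kr\delta}$, which is summable; hence $\cS^\dagger \in C^r$ with uniformly bounded derivatives.

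The core of the argument is the slicewise reduction. Fix $k$ and restrict to the $D_k$-dimensional slice $\iota_k\colon [0,1]^{D_k} \to \cX$, $\iota_k(y) := A\sum_{j \in J_k} j^{-\alpha} y_j e_j$, which lies in $Q_\alpha \subset K$ and satisfies $\cS^\dagger(\iota_k(y)) = \lambda_k f_k(y) + c_k$ with $c_k$ the constant contribution of the terms with $k' \neq k$. For any neural-network-type $\cS_\epsilon = \Phi\circ\cL$ satisfying \eqref{eq:accuracy}, the composition $y \mapsto \Phi(\cL\iota_k(y)) = \Phi(L_k y)$ involves a linear map $L_k := \cL\circ\iota_k$ which is an $\ell \times D_k$ matrix with $\Vert L_k \Vert_0 \le \ell D_k \le D_k\,\size(\Phi)$, where $\ell \le \size(\Phi)$ by Remark~\ref{rem:ellsize}. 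The ReLU calculus of Section~\ref{sec:nncalc} then realizes this as a ReLU network on $\R^{D_k}$ of size at most $2\Vert L_k \Vert_0 + 2\,\size(\Phi) \le 2(D_k+1)\,\size(\Phi)$ and approximating $\lambda_k f_k + c_k$ to accuracy $\epsilon$ on $[0,1]^{D_k}$; rescaling by $\lambda_k^{-1}$ and subtracting $c_k$ (both absorbable into the final affine layer) produces a ReLU approximation of $f_k$ with the same size and accuracy $\epsilon/\lambda_k$.

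It remains to optimize over $k$ and invoke Proposition~\ref{prop:nn-cod}. For $\epsilon$ small, pick the largest $k$ with $\epsilon/\lambda_k \le \bar{\epsilon}(r)$, so $D_k \asymp (\bar{\epsilon}(r)/\epsilon)^{1/(r(\alpha+1+\delta))}$. The proposition then forces the above approximation of $f_k$ to have size at least $(\epsilon/\lambda_k)^{-\gamma D_k/r} \ge \bar{\epsilon}(r)^{-\gamma D_k/r}$, whence $2(D_k+1)\,\cmplx(\cS_\epsilon) \ge \bar{\epsilon}(r)^{-\gamma D_k/r}$; taking logarithms and absorbing the $\log D_k$ term gives $\log \cmplx(\cS_\epsilon) \gtrsim D_k \gtrsim \epsilon^{-1/((\alpha+1+\delta)r)}$, as required. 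The main obstacle is the regularity-versus-hardness trade-off visible in step two: controlling the $r$-th Fr\'echet derivative of $\cS^\dagger$ forces $\lambda_k \lesssim D_k^{-r(\alpha+1)}$ to compensate the $D_k^{r(\alpha+1)}$ blow-up of the coordinate functionals, while a useful lower bound requires $\epsilon/\lambda_k$ to remain below $\bar{\epsilon}(r)$; the arbitrarily small $\delta > 0$ is precisely the slack needed to make the series in step one summable, and it is exactly this $\delta$ that surfaces in the exponent $1/((\alpha+1+\delta)r)$ of the final bound.
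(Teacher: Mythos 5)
Your proof is correct and follows essentially the same route as the paper's: block the index set into dyadic chunks of size $D_k = 2^k$, push the finite-dimensional hard functions of Proposition~\ref{prop:nn-cod} onto the corresponding slices of $Q_\alpha$, superpose them with algebraically decaying weights chosen to barely preserve $C^r$ Fr\'echet differentiability, and then optimize over $k$ to turn the per-$D$ lower bound into an exponential one. The only cosmetic deviations are (i) your bump-and-rescale device in place of the paper's use of the class $\oF_{D,r}$ of hard functions vanishing at the origin (both serve to make the cross-block terms vanish on each slice and to extend $f_k$ to all of $\R^{D_k}$; note that the bump preserves hardness only because you first rescale $f_{D_k,r}$ to live on $[1/4,3/4]^{D_k}$ before damping, so a reader should be warned that the order of operations matters there), and (ii) a slightly more direct optimization over $k$ than the paper's Lemma~\ref{lem:sup}, folding the whole $\delta$ slack into the weight exponent $\lambda_k = D_k^{-r(\alpha+1+\delta)}$ rather than splitting it between the weight and the final calibration.
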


Before providing a sketch of the proof of Theorem \ref{thm:codexp}, we note the following simple corollary, whose proof is given in Appendix \ref{Acor:codexp}.

\begin{corollary}
\label{cor:codexp}
Let $K \subset \cX$ be a compact set in an infinite-dimensional Banach space $\cX$. Assume that $K$ contains an infinite-dimensional hypercube $Q_\alpha$ for some $\alpha > 1$. Let $\cY = \cY(\Omega;\R^p)$ be a function space with continuous embedding in $C(\Omega;\R^p)$. Then for any $r\in \N$ and $\delta > 0$, there exists $\bar{\epsilon}>0$ and an $r$-times Fr\'echet differentiable functional $\cS^\dagger: K \subset \cX \to \cY$, 
such that approximation to accuracy $\epsilon \le \bar{\epsilon}$ by an operator $\cS_\epsilon: \cX \to \cY$ of neural network-type, 
\begin{align}
\label{eq:accc}
\sup_{u \in K} \Vert \cS^\dagger(u) - \cS_\epsilon(u) \Vert \le \epsilon,
\end{align}
implies complexity bound 
$\cmplx(\cS_\epsilon) \ge \exp(c \epsilon^{-1/(\alpha+1+\delta) r})$; here $c$, $\bar{\epsilon}>0$ are constants depending only on $\alpha$, $\delta$ and $r$.
\end{corollary}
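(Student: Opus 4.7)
The plan is to deduce the corollary from Theorem \ref{thm:codexp} by a tensor-type lifting of the hard functional to an operator, and then restricting back to a functional via point evaluation.

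First, I would fix the hard functional. Apply Theorem \ref{thm:codexp} with the given $\alpha$, $r$, and $\delta$ to obtain $\bar\epsilon_0 > 0$, $c_0 > 0$, and an $r$-times Fr\'echet differentiable functional $f^\dagger : K \subset \cX \to \R$ such that any neural-network-type functional $\cS_\eta$ with $\sup_{u\in K} |f^\dagger(u) - \cS_\eta(u)| \le \eta$ (for $\eta \le \bar\epsilon_0$) obeys $\cmplx(\cS_\eta) \ge \exp\!\bigl(c_0 \eta^{-1/(\alpha+1+\delta)r}\bigr)$. Fix also any nonzero $v_0 \in \cY$, and define the target operator by tensoring:
\begin{equation*}
\cS^\dagger : K \subset \cX \to \cY, \qquad \cS^\dagger(u) := f^\dagger(u)\, v_0 .
\end{equation*}
Since $u\mapsto f^\dagger(u)$ is $r$-times Fr\'echet differentiable and $v_0$ is a constant element of $\cY$, $\cS^\dagger$ is also $r$-times Fr\'echet differentiable as an operator into $\cY$.

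Next I would turn an operator approximant back into a functional approximant. Suppose $\cS_\epsilon: \cX \to \cY$ is of neural-network-type and satisfies \eqref{eq:accc}. By the continuous embedding $\cY \hookrightarrow C(\Omega;\R^p)$, there is a constant $C_{\text{emb}}>0$ such that $\|v\|_{C(\Omega;\R^p)} \le C_{\text{emb}} \|v\|_{\cY}$ for all $v\in\cY$. Viewing $v_0$ as an element of $C(\Omega;\R^p)$ and using $v_0 \neq 0$, pick $y_0 \in \Omega$ and a coordinate index $j\in\{1,\dots,p\}$ such that $a := (v_0)_j(y_0) \neq 0$. Applying $\ev_{y_0}$ and projecting to the $j$-th component yields, for every $u\in K$,
\begin{equation*}
\bigl| a f^\dagger(u) - (\cS_\epsilon(u))_j(y_0) \bigr| \le C_{\text{emb}} \,\epsilon .
\end{equation*}
By Definition \ref{def:onntype}, we may write $\ev_{y_0}\!\circ\cS_\epsilon (u) = \Phi_{y_0}(\cL u)$ with $\Phi_{y_0}: \R^\ell \to \R^p$ a ReLU network of size $\cmplx(\cS_\epsilon)$. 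Composing the linear map $w \mapsto w_j/a$ with $\Phi_{y_0}$ and invoking \eqref{eq:calculus2} produces a ReLU network $\tilde\Phi_{y_0} : \R^\ell \to \R$ of size at most $2\,\cmplx(\cS_\epsilon) + 2$, yielding a neural-network-type functional $\tilde{\cS}_\epsilon := \tilde\Phi_{y_0} \circ \cL$ on $\cX$ satisfying
\begin{equation*}
\sup_{u\in K} |f^\dagger(u) - \tilde{\cS}_\epsilon(u)| \le \tfrac{C_{\text{emb}}}{|a|}\, \epsilon =: \eta.
\end{equation*}

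Finally I would combine these bounds. Setting $\bar\epsilon := \min\{\bar\epsilon_0\,|a|/C_{\text{emb}},\, 1\}$, every $\epsilon\le\bar\epsilon$ yields $\eta \le \bar\epsilon_0$, so Theorem \ref{thm:codexp} gives
\begin{equation*}
2\,\cmplx(\cS_\epsilon)+2 \;\ge\; \size(\tilde\Phi_{y_0}) \;\ge\; \exp\!\Bigl(c_0\,\bigl(C_{\text{emb}}/|a|\bigr)^{-1/(\alpha+1+\delta)r}\, \epsilon^{-1/(\alpha+1+\delta)r}\Bigr),
\end{equation*}
and absorbing the multiplicative prefactor into a slightly smaller constant $c>0$ and shrinking $\bar\epsilon$ once more gives the stated lower bound $\cmplx(\cS_\epsilon) \ge \exp\bigl(c\, \epsilon^{-1/(\alpha+1+\delta)r}\bigr)$.

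I do not expect any serious obstacle; the main bookkeeping point is to verify that selecting a single output component of the ReLU network $\Phi_{y_0}$ and rescaling by $1/a$ produces another ReLU network whose size is controlled by $\cmplx(\cS_\epsilon)$ up to a universal factor, which is exactly what \eqref{eq:calculus2} provides. The only other subtle point is ensuring $v_0 \in \cY$ has a coordinate that does not vanish identically on $\Omega$, which follows from $v_0 \ne 0$ together with the continuous embedding into $C(\Omega;\R^p)$.
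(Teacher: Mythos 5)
Your proof is correct and follows essentially the same strategy as the paper's Appendix~\ref{Acor:codexp}: tensor the hard functional from Theorem~\ref{thm:codexp} with a fixed nonzero $v_0 \in \cY$, then reduce back to a scalar-valued functional of neural-network-type via point evaluation, using the continuous embedding $\cY \hookrightarrow C(\Omega;\R^p)$ to control the resulting scaling factor. The only (cosmetic) differences are that you treat general $p$ explicitly through a coordinate read-out combined with the sparse-concatenation bound \eqref{eq:calculus2}, whereas the paper argues for $p=1$ and passes from scalar to operator complexity directly via $\cmplx(\cS_\epsilon)=\sup_{y}\cmplx(\ev_y\circ\cS_\epsilon)$.
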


\begin{proof}[Proof of Theorem \ref{thm:codexp} (Sketch)]
Let $\cS_\epsilon: \cX\to \R$ be any functional of neural network-type, achieving approximation accuracy \eqref{eq:accuracy}. In view of our definition of $\cmplx(\cS_\epsilon)$ in \eqref{eq:cmplx}, to prove the claim, it suffices to show that if $\cL: \cX\to \R^\ell$ is a linear map and $\Phi: \R^\ell \to \R$ is a ReLU neural network, such that 
\[
\cS_\epsilon(u) = \Phi(\cL u), \quad \forall \, u \in \cX,
\]
then $\size(\Phi) \ge \exp(c \epsilon^{-1/(\alpha+1+\delta) r})$.

The idea behind the proof of this fact is that if $K\subset \cX$ contains a hypercube $Q_\alpha$, then for any $D\in \N$, a suitable rescaling of the finite-dimensional cube $[0,1]^D$ can be embedded in $K$. More precisely, for any $D \in \N$ there exists an injective linear map $\iota_D: [0,1]^D \to K$.

If we now consider the composition $\cS_\epsilon \circ \iota_D: \R^D \to \R$, then we observe that we have a decomposition $\cS_\epsilon \circ \iota_D (x) = \Phi ( \cL \circ \iota_D(x))$, where
\[
\cL \circ \iota_D: \R^D \to \R^\ell, 
\]
is linear, and 
\[
\Phi: \R^\ell \to \R,
\]
is a ReLU neural network. In particular, there exists a matrix $A\in \R^{\ell\times D}$, such that $\cL \circ \iota_D(x) = Ax$ for all $x\in \R^D$, and the mapping $\Phi_D(x) := \cS_\epsilon \circ \iota_D(x)$ defines a ReLU neural network $\Phi_D(x) = \Phi(Ax)$, whose size can be bounded by
\begin{align*}
\begin{alignedat}{3}
\size(\Phi_D ) 
&\le 2\,\size(\Phi) + 2\Vert A \Vert_0
&\text{(Equation \ref{eq:calculus1})}
\\
&\le
 2\,\size(\Phi) + 2\ell D 
 &
 \\
 &\le 2\,\size(\Phi) + 2\,\size(\Phi) D 
 \hspace{1cm}
 & \text{(Remark \ref{rem:ellsize})}
 \\
 &\le 4 \,\size(\Phi) D.
 &
 \end{alignedat}
\end{align*}

Using Proposition \ref{prop:nn-cod}, for any $D\in \N$, we are then able to construct a functional $\cF_D: K\subset \cX \to \R$, mimicking the function $f_D: [0,1]^D \subset \R^D \to \R$ constructed in Proposition \ref{prop:nn-cod}, and such that uniform approximation of $\cF_D$ by $\cS_\epsilon$ (implying similar approximation of $f_D$ by $\Phi_D$) incurs a lower complexity bound 
\[
\size(\Phi) \ge (4D)^{-1} \, \size(\Phi_D) \ge C_D \epsilon^{-\gamma D/r},
\]
where $C_D>0$ is a constant depending on $D$. For this particular functional $\cF_D$, and given the uniform lower bound on $\size(\Phi)$ above, it then follows that 
\[
\cmplx(\cS_\epsilon) \ge C_D \epsilon^{-\gamma D/r}.
\]
The first challenge is to make this strategy precise, and to determine the $D$-dependency of the constant $C_D$. As we will see, this argument leads to a lower bound of roughly the form $\cmplx(\cS_\epsilon) \gtrsim (D^{r(1+\alpha)}\epsilon)^{-\gamma D/r}$. 

At this point, the argument is still for fixed $D\in \N$, and would only lead to an algebraic complexity in $\epsilon^{-1}$. To extend this to an \emph{exponential} lower bound in $\epsilon^{-1}$, we next observe that if the estimate $\cmplx(\cS_\epsilon) \gtrsim (D^{r(1+\alpha)}\epsilon)^{-\gamma D/r}$ could in fact be established for all $D\in \N$ simultaneously, i.e. if we could construct a \emph{single} functional $\cS^\dagger$, for which the lower complexity bound $\cmplx(\cS_\epsilon) \gtrsim \sup_{D\in \N} (D^{r(1+\alpha)}\epsilon)^{-\gamma D/r}$ were to hold, then setting $D \approx (e \epsilon)^{-1/(1+\alpha) r}$ on the right would imply that 
\[
\cmplx(\cS_\epsilon) \gtrsim \sup_{D\in \N} (D^{r(1+\alpha)}\epsilon)^{-\gamma D/r} \gtrsim \exp\left(c \epsilon^{-1/(1+\alpha) r}\right),
\]
with suitable $c>0$.
Leading to an exponential lower complexity bound for such $\cS^\dagger$. The second main challenge is thus to construct a single $\cS^\dagger: K \subset \cX \to \R$ which effectively ``embeds'' an infinite family of functionals $\cF_D: K\subset \cX \to \R$ with complexity bounds as above. This will be achieved by defining $\cS^\dagger$ as a weighted sum of suitable functionals $\cF_D$. The detailed proof is provided in Appendix \ref{A:codexp}.
\end{proof}

Several remarks are in order:

\begin{remark}
Theorem \ref{thm:codexp} shows rigorously that \emph{in general}, operator learning suffers from a curse of parametric complexity, in the sense that it is not possible to achieve better than exponential complexity bounds for general classes of operators which are merely determined by their ($C^r$- or Lipschitz-) regularity. As explained above, this is a natural infinite-dimensional analogue of the curse of dimensionality in finite-dimensions (cp. Proposition \ref{prop:nn-cod}), and motivates our terminology. We note that the lower bound of Theorem \ref{thm:codexp} qualitatively matches general upper bounds for DeepONets derived in \cite{liu2024deep}. It would be of interest to determine sharp rates.
\end{remark}

\begin{remark}
Theorem \ref{thm:codexp} is derived for ReLU neural networks. With some effort, the argument could be extended to more general, piecewise polynomial activation functions. While we believe that the curse of parametric complexity has a fundamental character, we would like to point out that, \emph{for non-standard neural network architectures}, algebraic approximation rates have been obtained \cite{schwab2023deep}; these results build on either ``superexpressive'' activation functions or other non-standard architectures. Since these networks are not ordinary feedforward ReLU neural networks, the algebraic approximation rates of \cite{schwab2023deep} are not in contradiction with Theorem \ref{thm:codexp}. While the parametric complexity of the non-standard neural operators in \cite{schwab2023deep} is exponentially smaller than the lower bound of Theorem \ref{thm:codexp}, it is conceivable that storing the neural network weights in practice would require exponential accuracy (number of bits), to account for the highly unstable character of super-expressive constructions. 
\end{remark}

\begin{remark}
Theorem \ref{thm:codexp} differs from previous results on the limitations of operator learning frameworks, as e.g. addressed in \cite{nonlinrec,seidman2022nomad,lanthaler2021error}. Earlier work focuses on the limitations imposed by a linear choice of the reconstruction mapping $\cR$. In contrast, the results of the present work exhibit $C^k$-smooth operators and functionals which are fundamentally hard to approximate by neural network-based methods (with ReLU activation), irrespective of the choice of reconstruction.
\end{remark}

\begin{remark}
We finally link our main theorem to
a related result for PCA-Net \cite[Thm. 3.3]{lanthaler2023operator}, there derived in a complementary Hilbert space setting for $\cX$ and $\cY$; 
the result of \cite{lanthaler2023operator} shows that, for PCA-Net, no fixed algebraic convergence rate can be achieved in the operator learning of 
general $C^r$-operators between Hilbert spaces; this can be viewed as a milder version of the full curse of parametric complexity identified in the present work, expressed by an \emph{exponential lower complexity} bound in Theorem \ref{thm:codexp}.
\end{remark}

To further illustrate an implication of Theorem \ref{thm:codexp}, we provide the following example:

\begin{example}[Operator Learning CoD]
\label{ex:cod}
Let $\Omega \subset \R^d$ be a domain. Let $s,\rho \in \Z_{\ge 0}$ be given, with $s < \rho$, and consider the compact set 
\[
K = \set{u \in C^\rho(\Omega)}{\Vert u \Vert_{C^\rho} \le 1} \subset C^s(\Omega).
\]
Fix $r\in \N$. By Lemma \ref{lem:Cs}, $K$ contains an infinite-dimensional hypercube $Q_\alpha$ for \emph{any} $\alpha > 1 + \frac{\rho-s}{d}$. Fix such $\alpha$. Applying Theorem \ref{thm:codexp}, it follows that there exists a $r$-times Fr\'echet differentiable functional $\cS^\dagger: C^s(\Omega) \to \R$ and constant $c,\bar{\epsilon}>0$, such that any family $\cS_\epsilon: C^s(\Omega) \to \R$  of functionals of neural network-type, achieving accuracy
\[
\sup_{u\in K} | \cS^\dagger(u) - \cS_\epsilon(u) | \le \epsilon,
\quad \forall \, \epsilon \le \bar{\epsilon},
\]
has complexity at least $\cmplx(\cS_\epsilon) \ge \exp(c\epsilon^{-1/(1+\alpha) r})$ for $\epsilon \le \bar{\epsilon}$. Furthermore, the constants $c , \, \bar{\epsilon}>0$ depend only on the parameters $r,s,\rho,\alpha$.
\end{example}

In the next subsection, we aim to show the relevance of the above abstract result for concrete neural operator architectures. Specifically, we show that three operator learning architectures from the literature are of neural network-type (PCA-Net, DeepONet, NOMAD), and relate our notion of complexity to the required number of tunable parameters for each. Finally, we show that even frameworks which are not necessarily of neural network-type could suffer from a similar curse of parametric complexity. We make this explicit for the Fourier neural operator in subsection \ref{ssec:curseopFNO}.

\subsection{Examples of Operators of Neural Network-Type}
\label{ssec:examples}
We describe three representative neural operator architectures and
show that they can be cast in the above framework. 

\paragraph{PCA-Net.}
 \label{par:1}
We start with the PCA-Net architecture from \cite{bhattacharya2021model},
anticipated in the work \cite{hesthaven2018non}.
 If $\cX$ and $\cY$ are Hilbert spaces, then a neural network can be combined with principal component analysis (PCA) for the encoding and reconstruction on the underlying spaces, to define a neural operator architecture termed PCA-Net; the ingredients of this architecture are orthonormal PCA bases $\phi^{\cX}_1,\dots, \phi^{\cX}_{\DX} \in \cX$ and $\phi^{\cY}_1,\dots, \phi^{\cY}_{\DY} \in \cY$, and a neural network mapping $\Psi: \R^{\DX} \to \R^{\DY}$. The encoder $\cE$ is obtained by projection onto the $\{\phi^{\cX}_j\}_{j=1}^{\DX}$, whereas the reconstruction $\cR$ is defined by a linear expansion in $\{\phi^{\cY}_j\}_{j=1}^{\DY}$. The resulting PCA-Net neural operator is defined as 
\begin{align}
\label{eq:s-pca}
\cS(u)(y) = \sum_{k=1}^{\DY}
\Psi_k(\alpha_1,\dots, \alpha_\DX) \phi^{\cY}_k(y),
\qquad
\text{with }\alpha_j := \langle u, \phi^\cX_j\rangle,\;j=1,\dots, \DX.
\end{align}
Here the neural network $\Psi:\R^\DX \to \R^\DY$, with components 
$\Psi_k(\slot) = \Psi_k(\slot;\theta)$, depends on parameters 
which are optimized during training of the PCA-Net. The
PCA basis functions $\phi^\cY_1,\dots \phi^\cY_\DY: \Omega \to \R^{p}$, defining the reconstruction, are precomputed from the data using PCA analysis.

Given an evaluation point $y\in \Omega$, the composition $\ev_y \circ \cS$, between $\cS$ and the point-evaluation mapping $\ev_y(v) = v(y)$, can now be written in the form,
\[
\ev_y \circ \cS(u)
=
\Phi_y(\cL u),
\]
where $\cL: \cX \to \R^\DX$, $\cL u := (\langle u, \phi_1^\cX\rangle, \dots, \langle u, \phi^\cX_{\DX}\rangle$ is a linear mapping, and $\Phi_y(\alpha) := \sum_{k=1}^\DY \Psi_k(\alpha) \phi_k^\cY(y)$, for fixed $y$, is the composition of a neural network $\Psi$ with a linear read-out; thus, $\Phi_y$ is itself a neural network for fixed $y$. This shows that PCA-Net is of neural network-type.

The following lemma shows that the complexity of PCA-Net gives a lower bound on the number of free parameters for the underlying neural network architecture. 
 
\begin{lemma}[Complexity of PCA-Net]
\label{lem:c-pca}
Assume that $\cX$ and $\cY$ are Hilbert spaces, so that PCA-Net is well-defined.
Any PCA-Net $\cS = \cR \circ \Psi \circ \cE$ is of neural network-type, and 
\[
\size(\Psi) \ge (2p+2)^{-1} \cmplx(\cS),
\]
\rev{where the dimension $p\in \N$ is fixed by the output-function space $\cY = \cY(\Omega;\R^p)$.}
\end{lemma}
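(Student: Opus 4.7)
The plan is to verify the decomposition $\ev_y \circ \cS = \Phi_y \circ \cL$ demanded by Definition \ref{def:onntype}, and then to bound $\size(\Phi_y)$ in terms of $\size(\Psi)$ by combining the ReLU calculus rule \eqref{eq:calculus2} with an elementary observation that eliminates trivially-zero output components of $\Psi$.

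First I would read off the decomposition directly from \eqref{eq:s-pca}. Define the linear map $\cL : \cX \to \R^{\DX}$ by $\cL u := (\langle u, \phi^\cX_1\rangle, \dots, \langle u, \phi^\cX_\DX\rangle)$, and, for each $y \in \Omega$, define the readout matrix $V_y \in \R^{p \times \DY}$ whose $k$-th column is $\phi^{\cY}_k(y) \in \R^p$. Then \eqref{eq:s-pca} rewrites as $\ev_y\circ\cS(u) = V_y \Psi(\cL u)$, so $\ev_y\circ\cS = \Phi_y\circ\cL$ with $\Phi_y(\alpha) := V_y\Psi(\alpha)$. Applying the calculus rule \eqref{eq:calculus2} to the linear factor $V_y$ and the ReLU network $\Psi$, the composition $\Phi_y : \R^{\DX} \to \R^p$ can itself be realised as a ReLU neural network satisfying $\size(\Phi_y) \le 2\|V_y\|_0 + 2\,\size(\Psi)$. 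This already verifies that PCA-Net is of neural network-type in the sense of Definition \ref{def:onntype}.

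The main step, and the only real obstacle, is to convert $\|V_y\|_0$ (which a priori can be as large as the ambient value $p\,\DY$, with $\DY$ unrelated to $\size(\Psi)$) into a bound that is uniform in $y$ and scales with $\size(\Psi)$. Writing the final affine layer of $\Psi$ as $\Psi(\alpha) = A_L x_L(\alpha) + b_L$ with $A_L \in \R^{\DY\times d_L}$ and $b_L \in \R^{\DY}$, any output index $k$ for which the $k$-th row of $A_L$ vanishes and $(b_L)_k = 0$ produces $\Psi_k \equiv 0$, so the $k$-th column of $V_y$ never contributes to the sum in \eqref{eq:s-pca} and may be discarded together with that output of $\Psi$ without altering $\cS$. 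After this pruning, every retained output index contributes at least one non-zero entry to $\|A_L\|_0 + \|b_L\|_0$, so the effective output dimension satisfies $\DY \le \|A_L\|_0 + \|b_L\|_0 \le \size(\Psi)$, and hence $\|V_y\|_0 \le p\,\DY \le p\,\size(\Psi)$ uniformly in $y$.

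Combining the two estimates gives $\size(\Phi_y) \le 2p\,\size(\Psi) + 2\,\size(\Psi) = (2p+2)\,\size(\Psi)$ for every $y\in\Omega$; taking the minimum over neural network representations in the definition of $\cmplx(\cS)$ and then the supremum over $y\in\Omega$ yields $\cmplx(\cS) \le (2p+2)\,\size(\Psi)$, which rearranges to $\size(\Psi) \ge (2p+2)^{-1}\cmplx(\cS)$. Apart from the pruning argument used to replace the ambient dimension $\DY$ by $\size(\Psi)$ in the bound on $\|V_y\|_0$, everything reduces to direct bookkeeping within the neural network calculus recalled in Subsection \ref{sec:nncalc}.
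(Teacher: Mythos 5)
Your proposal is correct and follows essentially the same route as the paper's proof in Appendix \ref{A:c-pca}: identify $\cL = \cE$ and $\Phi_y = V_y\Psi$, then use the sparse-concatenation estimate \eqref{eq:calculus2} together with the observation that the number of output components of $\Psi$ that can actually contribute is bounded by $\size(\Psi)$, so that $\|V_y\|_0 \le p\,\size(\Psi)$ after discarding columns paired with identically-zero $\Psi_k$. The paper phrases the pruning in terms of the index set $\cJ = \{j : \Psi_j \not\equiv 0\}$ while you read it off directly from the rows of $A_L$ and entries of $b_L$, but these are the same argument.
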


Thus, Lemma \ref{lem:c-pca} implies a lower complexity bound $\size(\Psi) \gtrsim \cmplx(\cS)$.
The detailed proof is given in Appendix \ref{A:c-pca}.
It thus follows from Corollary \ref{cor:codexp} that operator learning with PCA-Net suffers from a curse of parametric complexity: 
\begin{proposition}[Curse of parametric complexity for PCA-Net]
Assume the setting of Corollary \ref{cor:codexp}, with $\cX$, $\cY$ are Hilbert spaces. Then for any $r\in \N$ and $\delta > 0$, there exists $\bar{\epsilon}>0$ and an $r$-times Fr\'echet differentiable functional $\cS^\dagger: K \subset \cX \to \cY$, 
such that approximation to accuracy $\epsilon \le \bar{\epsilon}$ by a PCA-Net $\cS_\epsilon: \cX \to \cY$
\begin{align}
\sup_{u \in K} \Vert \cS^\dagger(u) - \cS_\epsilon(u) \Vert \le \epsilon,
\end{align}
with encoder $\cE$, neural network $\Psi$ and reconstruction $\cR$, implies complexity bound 
$\size(\Psi) \ge \exp(c \epsilon^{-1/(\alpha+1+\delta) r})$; here $c$, $\bar{\epsilon}>0$ are constants depending only on $\alpha$, $\delta$, $r$ and $p$.
\end{proposition}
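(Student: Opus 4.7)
The proof is a direct combination of Corollary~\ref{cor:codexp} and Lemma~\ref{lem:c-pca}. First, I would invoke Corollary~\ref{cor:codexp}: under its hypotheses (which hold by assumption, since $\cX,\cY$ Hilbert is a strengthening of the setting of that corollary), for the given $r\in\N$ and $\delta>0$ it produces an $r$-times Fr\'echet differentiable functional $\cS^\dagger:K\subset\cX\to\cY$ together with constants $c',\bar{\epsilon}'>0$ depending only on $\alpha,\delta,r$, such that any operator $\cS_\epsilon:\cX\to\cY$ of neural network-type achieving
\[
\sup_{u\in K}\Vert \cS^\dagger(u)-\cS_\epsilon(u)\Vert\le\epsilon\le\bar{\epsilon}'
\]
satisfies $\cmplx(\cS_\epsilon)\ge\exp\bigl(c'\epsilon^{-1/(\alpha+1+\delta)r}\bigr)$. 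I would take this $\cS^\dagger$ as the target functional claimed by the Proposition.

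Second, I would apply Lemma~\ref{lem:c-pca}: every PCA-Net $\cS_\epsilon=\cR\circ\Psi\circ\cE$ is of neural network-type in the sense of Definition~\ref{def:onntype}, and its neural-network component obeys the explicit comparison $\size(\Psi)\ge(2p+2)^{-1}\cmplx(\cS_\epsilon)$. Chaining these two inequalities gives
\[
\size(\Psi)\ge\frac{1}{2p+2}\exp\bigl(c'\epsilon^{-1/(\alpha+1+\delta)r}\bigr)
\]
for every $\epsilon\le\bar{\epsilon}'$.

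The only remaining task is to absorb the $p$-dependent prefactor $(2p+2)^{-1}$ into the exponential, at the cost of shrinking the constant and the admissible range of $\epsilon$. Concretely I set $c:=c'/2$ and choose $\bar{\epsilon}\in(0,\bar{\epsilon}']$ small enough that $(c'/2)\epsilon^{-1/(\alpha+1+\delta)r}\ge\log(2p+2)$ for all $\epsilon\le\bar{\epsilon}$; since the left-hand side diverges as $\epsilon\to 0^+$, such a $\bar{\epsilon}$ exists and depends only on $c',p$, hence ultimately only on $\alpha,\delta,r,p$. For all $\epsilon\le\bar{\epsilon}$ one then obtains $\size(\Psi)\ge\exp\bigl(c\epsilon^{-1/(\alpha+1+\delta)r}\bigr)$, which is the desired bound. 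There is no genuine obstacle in this argument; the only subtlety — and the sole reason $c$ and $\bar{\epsilon}$ acquire a dependence on $p$ that is absent in Corollary~\ref{cor:codexp} — is precisely this absorption step, which is accurately reflected in the statement of the Proposition.
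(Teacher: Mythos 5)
Your proof is correct and is exactly the argument the paper intends: the proposition is presented as an immediate consequence of Corollary~\ref{cor:codexp} and Lemma~\ref{lem:c-pca}, and your absorption of the prefactor $(2p+2)^{-1}$ into the exponent (by halving $c'$ and shrinking $\bar{\epsilon}$) is the only step left implicit in the paper, which you handle cleanly and which correctly accounts for the $p$-dependence of $c,\bar{\epsilon}$.
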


\paragraph{DeepONet.}
 \label{par:2}

A conceptually similar approach is followed by the DeepONet 
of \cite{lu2021learning} which differs by learning the form of the
 representation in $\cY$ concurrently with the coefficients,
and by allowing for quite general input linear 
functionals $\{\alpha_j\}_{j=1}^{\DX}.$

The DeepONet architecture defines the encoding $\cE: \cX \to \R^\DX$ by a fixed choice of general linear functionals $\ell_1,\dots, \ell_\DX$; these
may be obtained, for example, by point evaluation at distinct ``sensor points''
or by projection onto PCA modes as in PCA-Net.
The reconstruction $\cR: \R^\DY \to \cY$ is defined by expansion 
with respect to a set of functions $\phi_1,\dots, \phi_\DY \in \cY$ 
which are themselves finite dimensional neural networks to be learned.
The resulting DeepONet can be written as
\begin{align}
\label{eq:s-don}
\cS(u)(y) = \sum_{k=1}^{\DY}
\Psi_k(\alpha_1,\dots, \alpha_\DX) \phi_k(y),
\qquad
\text{with }\alpha_j := \ell_j(u),\;j=1,\dots, \DX.
\end{align}
Here, both the neural networks $\Psi:\R^\DX \to \R^\DY$, with components $\Psi_k = \Psi_k(\slot;\theta)$, and $\phi: \Omega \to \R^{p \times \DY}$ with components $\phi_k = \phi_k(\slot;\theta)$, depend on parameters which are optimized during training of the DeepONet.

Given a evaluation point $y\in \Omega$, the composition $\ev_y \circ \cS$, with the point-evaluation mapping $\ev_y(v) = v(y)$ can again be written in the form,
\[
\ev_y \circ \cS(u)
=
\Phi_y(\cL u),
\]
where $\cL: \cX \to \R^\DX$, $\cL u := (\ell_1(u), \dots, \ell_{\DX}(u))$ is linear, where 
$$\Phi_y(\alpha) := \sum_{k=1}^\DY \Psi_k(\alpha) \phi_k(y),$$ and for fixed $y$ the values $\phi_k(y) \in \R^p$ are just (constant) vectors. Thus, $\Phi_y$ is the composition of a neural network $\Psi$ with a linear read-out, and hence is itself a neural network. This shows that DeepONet is of neural network-type.

The next lemma shows that the size can be related to the 
complexity of DeepONet: Also in this case, $\cmplx(\cS)$ provides a lower bound on
the total number of non-zero degrees of freedom of a DeepONet.
The detailed proof is provided in Appendix \ref{A:c-don}.

\begin{lemma}[Complexity of DeepONet]
\label{lem:c-don}
Any DeepONet $\cS = \cR \circ \Psi \circ \cE$, defined by a branch-net $\Psi$ and trunk-net $\phi$, is of neural network-type, and 
\[
2(\size(\Psi) + \size(\phi)) \ge \cmplx(\cS).
\]
\end{lemma}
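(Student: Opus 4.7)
The plan is to construct, for each evaluation point $y\in \Omega$, an explicit representation of $\ev_y\circ \cS$ in the form \eqref{eq:onof}, and then to control the size of the resulting ReLU network in terms of $\size(\Psi)$ and $\size(\phi)$. Inserting the DeepONet formula \eqref{eq:s-don}, for any $y\in\Omega$ we can write
\[
\ev_y \circ \cS(u) \;=\; \sum_{k=1}^{\DY} \Psi_k(\cL u)\, \phi_k(y) \;=\; V_y\, \Psi(\cL u),
\]
where $\cL u := (\ell_1(u),\dots,\ell_{\DX}(u))$ is a linear map $\cX \to \R^{\DX}$ that does not depend on $y$, and $V_y \in \R^{p\times \DY}$ is the matrix whose columns are the trunk-net vectors $\phi_1(y),\dots,\phi_{\DY}(y) \in \R^p$. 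Thus $\Phi_y := V_y \Psi$ is a composition of a ReLU network with a fixed linear read-out; applying the ReLU calculus rule \eqref{eq:calculus2} with $V = V_y$ turns $\Phi_y$ into a bona fide ReLU network whose size is bounded by
\[
\size(\Phi_y) \;\le\; 2\|V_y\|_0 + 2\,\size(\Psi),
\]
which already establishes that DeepONet is an operator of neural network-type in the sense of Definition \ref{def:onntype}.

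It then remains to bound $\|V_y\|_0$ uniformly in $y$ by $\size(\phi)$, and this is the step I expect to be the main obstacle, as it requires exploiting the concrete feedforward structure of the trunk network rather than purely symbolic calculus. The key observation is that $\|V_y\|_0$ is exactly the number of non-zero coordinates of the trunk-net output $\phi(y) \in \R^{p\DY}$. Writing the final affine layer of $\phi$ as $x \mapsto A_L x + b_L$, a given output coordinate $i$ can be non-zero only if either the $i$-th row of $A_L$ contains at least one non-zero entry, or $(b_L)_i \ne 0$. Since distinct output coordinates are witnessed by disjoint rows of $A_L$ and disjoint entries of $b_L$, the number of non-zero output coordinates is at most $\|A_L\|_0 + \|b_L\|_0 \le \size(\phi)$, so $\|V_y\|_0 \le \size(\phi)$ holds uniformly in $y$.

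Combining the two estimates gives $\size(\Phi_y) \le 2\,\size(\phi) + 2\,\size(\Psi)$ for every $y\in \Omega$. Taking the minimum over admissible representations and then the supremum over $y\in\Omega$ in the definition of $\cmplx(\cS)$ from Definition \ref{def:onntype} yields
\[
\cmplx(\cS) \;\le\; 2\bigl(\size(\Psi) + \size(\phi)\bigr),
\]
which is the claimed inequality.
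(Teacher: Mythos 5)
Your proof is correct and follows essentially the same route as the paper: write $\ev_y\circ\cS$ as $\Phi_y(\cL u)$ with $\Phi_y = V_y\Psi$, apply the sparse-concatenation rule \eqref{eq:calculus2}, and bound $\|V_y\|_0$ by $\size(\phi)$. The one place where you add value is the bound $\|V_y\|_0\le\size(\phi)$: the paper simply asserts $|\cJ^2|\le\size(\phi)$ with a terse justification, whereas you supply the missing argument via the final affine layer of $\phi$ (each nonzero output coordinate must be fed by a nonzero row of $A_L$ or a nonzero bias entry, and these witnesses are disjoint across output coordinates). This is exactly the reasoning the paper is implicitly invoking, so your proof is the same in substance and slightly more complete in exposition.
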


The following result is now an immediate consequence of Corollary \ref{cor:codexp} and the above lemma.

\begin{proposition}[Curse of parametric complexity for DeepONet]
Assume the setting of Corollary \ref{cor:codexp}. Then for any $r\in \N$ and $\delta > 0$, there exists $\bar{\epsilon}>0$ and an $r$-times Fr\'echet differentiable functional $\cS^\dagger: K \subset \cX \to \cY$, 
such that approximation to accuracy $\epsilon \le \bar{\epsilon}$ by a DeepONet $\cS_\epsilon: \cX \to \cY$
\begin{align}
\sup_{u \in K} \Vert \cS^\dagger(u) - \cS_\epsilon(u) \Vert \le \epsilon,
\end{align}
with branch net $\Psi$ and trunk net $\phi$, implies complexity bound 
$\size(\Psi) + \size(\phi) \ge \exp(c \epsilon^{-1/(\alpha+1+\delta) r})$; here $c$, $\bar{\epsilon}>0$ are constants depending only on $\alpha$, $\delta$ and $r$.
\end{proposition}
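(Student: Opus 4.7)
The plan is to reduce the statement directly to the abstract curse of parametric complexity established in Corollary \ref{cor:codexp}, using the DeepONet-specific complexity comparison from Lemma \ref{lem:c-don}. Since Corollary \ref{cor:codexp} already produces, for each $r\in\N$ and $\delta>0$, an $r$-times Fréchet differentiable functional $\cS^\dagger: K \subset \cX \to \cY$ together with constants $\bar\epsilon,c_0>0$ such that every operator $\cS_\epsilon$ of neural network-type achieving $\sup_{u\in K}\Vert \cS^\dagger(u)-\cS_\epsilon(u)\Vert \le \epsilon$ must satisfy $\cmplx(\cS_\epsilon)\ge \exp(c_0\epsilon^{-1/(\alpha+1+\delta)r})$, I would simply take that same $\cS^\dagger$ as the hard target for the DeepONet claim.

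Next, given any DeepONet $\cS_\epsilon = \cR\circ\Psi\circ\cE$ with branch net $\Psi$ and trunk net $\phi$ achieving the accuracy \eqref{eq:accc}, I would invoke Lemma \ref{lem:c-don} to conclude that $\cS_\epsilon$ is an operator of neural network-type (so Corollary \ref{cor:codexp} is applicable to it) and that
\[
2(\size(\Psi)+\size(\phi)) \;\ge\; \cmplx(\cS_\epsilon) \;\ge\; \exp\!\bigl(c_0\,\epsilon^{-1/(\alpha+1+\delta)r}\bigr).
\]
Dividing by $2$ and absorbing the factor into the exponent (for instance by replacing $c_0$ with any $c<c_0$ and shrinking $\bar\epsilon$ so that $\tfrac12\exp(c_0 t)\ge \exp(c t)$ for all $t \ge \epsilon^{-1/(\alpha+1+\delta)r}$ with $\epsilon\le\bar\epsilon$) yields the desired lower bound $\size(\Psi)+\size(\phi)\ge \exp(c\,\epsilon^{-1/(\alpha+1+\delta)r})$.

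Since this argument is essentially a two-line deduction, I do not expect any genuine obstacle; the only minor care needed is bookkeeping of constants to ensure that the factor $2$ from Lemma \ref{lem:c-don} and the threshold $\bar\epsilon$ can be absorbed cleanly into the stated constants $c,\bar\epsilon$ (which, as in Corollary \ref{cor:codexp}, are permitted to depend only on $\alpha,\delta,r$). This is standard: shrinking $\bar\epsilon$ makes the exponential term dominate any fixed multiplicative factor, so no new dependencies are introduced.
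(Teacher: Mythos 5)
Your proof is correct and follows the paper's own route exactly: the paper states that the proposition ``is now an immediate consequence of Corollary \ref{cor:codexp} and the above lemma [Lemma \ref{lem:c-don}],'' which is precisely the two-step deduction you carry out. Your remark about absorbing the factor $2$ by shrinking $c$ and $\bar{\epsilon}$ is the correct bookkeeping that the paper leaves implicit.
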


\paragraph{NOMAD}
 \label{par:3}

The linearity in the reconstruction in $\cY$ for both PCA-Net and DeepONet imposes a fundamental limitation on their approximation capability \cite{lanthaler2021error,nonlinrec,seidman2022nomad}. To overcome this limitation, nonlinear extensions of DeepONet have recently been proposed. The following 
NOMAD architecture \cite{seidman2022nomad} provides an example:

 NOMAD (NOnlinear MAnifold Decoder) employs encoding by point evaluation at a fixed set of sensor points, or more general linear functionals $\ell_1,\dots, \ell_\DX: \cX \to \R$. The reconstruction $\cR: \R^\DY \to \cY$ in the output space $\cY = \cY(\Omega;\R^p)$ is defined via a neural network $Q: \R^{\DY} \times \Omega \to \R^p$, which depends jointly on encoded output coefficients in $\R^{\DY}$ and the evaluation point $y\in \Omega$; as in the two previous 
examples, $\Psi:\R^\DX \to \R^\DY$ is again a neural network.
The resulting NOMAD mapping, for $u\in \cX$ and $y\in \Omega$, is given by 
\begin{align}
\label{eq:s-nomad}
\cS(u)(y) = Q(\Psi(\alpha_1,\dots, \alpha_\DX),y),
\qquad
\text{with }\alpha_j := \ell_j(u),
\end{align}
for $j=1,\dots, \DX$. We note that the main difference between DeepONet and NOMAD is that the linear expansion in \eqref{eq:s-don} has been replaced by a nonlinear composition with the neural network $Q$. Both neural networks $\Psi$ and $Q$ are optimized during training.

Given evaluation point $y\in \Omega$, the composition $\ev_y \circ \cS$ with the point-evaluation can be written in the form,
\[
\ev_y \circ \cS(u)
=
\Phi_y(\cL u),
\]
where $\cL: \cX \to \R^\DX$, $\cL u := (\ell_1(u), \dots, \ell_\DX(u))$ is linear, and $\alpha \mapsto \Phi_y(\alpha) := Q(\Psi_k(\alpha),y)$ defines a neural network for fixed $y$. This shows that NOMAD is of neural network-type. Finally, the following lemma provides an estimate on the complexity 
of NOMAD: 

\begin{lemma}[Complexity of NOMAD]
\label{lem:c-nomad}
Any NOMAD operator $\cS = \cR \circ \Psi \circ \cE$ defined by a branch-net $\Psi$ and non-linear reconstruction $Q$ is of neural network-type, and 
\begin{align}
\label{eq:c-nomad}
2(\size(\Psi) + \size(Q)) \ge \cmplx(\cS).
\end{align}
\end{lemma}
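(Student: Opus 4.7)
The plan is to verify the two claims of the lemma in tandem: first, that NOMAD fits Definition \ref{def:onntype}; second, that the induced point-evaluation network has size controlled by $2(\size(\Psi) + \size(Q))$. Fixing $y \in \Omega$, and setting $\cL u := (\ell_1(u), \dots, \ell_\DX(u))$, one reads directly from \eqref{eq:s-nomad} that
\[
\ev_y \circ \cS(u) = Q(\Psi(\cL u), y) =: \Phi_y(\cL u),
\]
where $\cL: \cX \to \R^\DX$ is linear and $\Phi_y(\alpha) := Q(\Psi(\alpha),y)$ is a map $\R^\DX \to \R^p$. The content of the lemma is then the statement that for each $y$ the map $\Phi_y$ can be realized as a ReLU network of size at most $2(\size(\Psi) + \size(Q))$.

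To show this, I would proceed in two steps. First, I would argue that fixing the spatial slot of $Q: \R^\DY \times \R^d \to \R^p$ produces a ReLU network $Q_y := Q(\slot, y)$ with $\size(Q_y) \le \size(Q)$. Writing the first affine layer of $Q$ as $[x;y] \mapsto A_0^{(1)} x + A_0^{(2)} y + b_0$, fixing $y$ replaces the first affine layer by $x \mapsto A_0^{(1)} x + \tilde{b}_0$ with $\tilde{b}_0 := A_0^{(2)} y + b_0$; the remainder of the network is untouched. Using $\Vert a+b \Vert_0 \le \Vert a \Vert_0 + \Vert b \Vert_0$ together with the fact that $\Vert A_0^{(2)} y \Vert_0$ is at most the number of non-zero rows of $A_0^{(2)}$, hence at most $\Vert A_0^{(2)} \Vert_0$, one obtains $\Vert A_0^{(1)} \Vert_0 + \Vert \tilde{b}_0 \Vert_0 \le \Vert A_0^{(1)} \Vert_0 + \Vert A_0^{(2)} \Vert_0 + \Vert b_0 \Vert_0 = \Vert A_0 \Vert_0 + \Vert b_0 \Vert_0$, so no weights are gained.

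Second, I would invoke the sparse concatenation underlying the calculus rules \eqref{eq:calculus1}--\eqref{eq:calculus2} (as in \cite{petersen2018optimal}) to realize $\Phi_y = Q_y \circ \Psi$ as a ReLU network of size at most $2(\size(\Psi) + \size(Q_y)) \le 2(\size(\Psi) + \size(Q))$. Combining these two steps, the explicit construction provides, for every $y \in \Omega$, an admissible representation $\Phi_y$ with
\[
\size(\Phi_y) \le 2(\size(\Psi) + \size(Q)),
\]
and passing to the minimum over such representations and then to the supremum over $y \in \Omega$ in the definition of $\cmplx(\cS)$ yields the stated bound \eqref{eq:c-nomad}.

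The only subtle point — and the main technical obstacle — is the bookkeeping of $\ell_0$-sparsity in the \emph{first} step: one must check that absorbing the fixed input $y$ into the first-layer bias does not inflate the number of non-zero parameters, which is precisely where the inequality $\Vert A_0^{(2)} y \Vert_0 \le \Vert A_0^{(2)} \Vert_0$ is essential. The composition step is then a routine application of the sparse concatenation machinery, and the factor $2$ in the final bound matches exactly the overhead incurred in \eqref{eq:calculus1} and \eqref{eq:calculus2}.
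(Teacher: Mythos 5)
Your proposal is correct and follows essentially the same route as the paper's proof: exhibit $\Phi_y(\alpha) = Q(\Psi(\alpha), y)$, bound $\size(Q(\slot,y)) \le \size(Q)$ by absorbing $y$ into the first-layer bias, and then invoke sparse concatenation to get $\size(\Phi_y) \le 2(\size(\Psi) + \size(Q))$. The only difference is one of exposition: you spell out the $\ell_0$-bookkeeping, $\Vert A_0^{(2)} y \Vert_0 \le \Vert A_0^{(2)} \Vert_0$, whereas the paper states $\size(Q(\slot,y)) \le \size(Q)$ without justification.
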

The expression on the left hand-side of \eqref{eq:c-nomad} 
represents the total number of non-zero degrees of freedom in the NOMAD 
architecture and, as for PCA-Net and DeepONet, it
is lower bounded by our notion of complexity.
For the proof, we refer to Appendix \ref{A:c-nomad}.

The following result is now an immediate consequence of Corollary \ref{cor:codexp} and the above lemma.

\begin{proposition}[Curse of parametric complexity for NOMAD]
Assume the setting of Corollary \ref{cor:codexp}. Then for any $r\in \N$ and $\delta > 0$, there exists $\bar{\epsilon}>0$ and an $r$-times Fr\'echet differentiable functional $\cS^\dagger: K \subset \cX \to \cY$, 
such that approximation to accuracy $\epsilon \le \bar{\epsilon}$ by a NOMAD neural operator $\cS_\epsilon: \cX \to \cY$
\begin{align}
\sup_{u \in K} \Vert \cS^\dagger(u) - \cS_\epsilon(u) \Vert \le \epsilon,
\end{align}
with neural networks $\Psi$ and $Q$, implies complexity bound 
$\size(\Psi) + \size(Q) \ge \exp(c \epsilon^{-1/(\alpha+1+\delta) r})$; here $c$, $\bar{\epsilon}>0$ are constants depending only on $\alpha$, $\delta$ and $r$.
\end{proposition}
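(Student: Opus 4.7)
The plan is to derive this proposition as a short composition of two previously established results: Corollary \ref{cor:codexp} supplies a ``hard'' target functional against which all operators of neural network-type must have exponentially large complexity, and Lemma \ref{lem:c-nomad} shows that NOMAD fits into that framework with its total parameter count controlling $\cmplx(\cS)$ from above up to a constant factor.

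First I would invoke Corollary \ref{cor:codexp} for the given $r\in\N$ and $\delta>0$ to produce a constant $\bar\epsilon_0>0$ and an $r$-times Fr\'echet differentiable functional $\cS^\dagger:K\subset\cX\to\cY$ such that any operator $\cS_\epsilon:\cX\to\cY$ of neural network-type satisfying the accuracy bound $\sup_{u\in K}\Vert\cS^\dagger(u)-\cS_\epsilon(u)\Vert\le\epsilon$ with $\epsilon\le\bar\epsilon_0$ obeys
\[
\cmplx(\cS_\epsilon)\ \ge\ \exp\bigl(c_0\,\epsilon^{-1/(\alpha+1+\delta)r}\bigr),
\]
where $c_0,\bar\epsilon_0>0$ depend only on $\alpha,\delta,r$. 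This functional $\cS^\dagger$ will serve as the witness claimed by the proposition.

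Next I would specialize to NOMAD. Given any NOMAD operator $\cS_\epsilon=\cR\circ\Psi\circ\cE$ with branch network $\Psi$ and nonlinear reconstruction network $Q$ achieving the stated accuracy, Lemma \ref{lem:c-nomad} guarantees that $\cS_\epsilon$ is indeed of neural network-type and that
\[
2\bigl(\size(\Psi)+\size(Q)\bigr)\ \ge\ \cmplx(\cS_\epsilon).
\]
Chaining this with the previous lower bound yields
\[
\size(\Psi)+\size(Q)\ \ge\ \tfrac12\exp\bigl(c_0\,\epsilon^{-1/(\alpha+1+\delta)r}\bigr).
\]
The harmless prefactor $\tfrac12$ is then absorbed into the exponent by choosing any $c\in(0,c_0)$ and possibly shrinking $\bar\epsilon_0$ to some $\bar\epsilon\in(0,\bar\epsilon_0]$ so that $c_0\,\epsilon^{-1/(\alpha+1+\delta)r}-\log 2 \ge c\,\epsilon^{-1/(\alpha+1+\delta)r}$ for every $\epsilon\le\bar\epsilon$; this yields the claimed bound $\size(\Psi)+\size(Q)\ge\exp(c\,\epsilon^{-1/(\alpha+1+\delta)r})$, and the final constants $c,\bar\epsilon$ inherit the dependence only on $\alpha,\delta,r$ from Corollary \ref{cor:codexp}.

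There is no genuine obstacle here; all the heavy lifting is already contained in the proofs of Corollary \ref{cor:codexp} (the infinite-dimensional curse of parametric complexity) and Lemma \ref{lem:c-nomad} (the architectural bound $\cmplx(\cS)\le 2(\size(\Psi)+\size(Q))$ for NOMAD). The present proposition amounts to a one-line composition of their conclusions, with the only care required being the routine bookkeeping of constants described above.
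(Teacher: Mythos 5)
Your argument is exactly the paper's: invoke Corollary \ref{cor:codexp} to obtain the hard functional and the exponential lower bound on $\cmplx(\cS_\epsilon)$, then apply Lemma \ref{lem:c-nomad} to transfer this to $\size(\Psi)+\size(Q)$, absorbing the factor $1/2$ into the constant. The paper states this result as an immediate consequence of those two ingredients, so your proposal is correct and coincides with the intended proof.
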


\paragraph{Discussion.}

For all examples above, a general lower bound on the complexity $\cmplx(\cS)$ of operators of neural network-type implies a lower bound on the total number of degrees of freedom of the particular architecture. In particular, a lower bound on $\cmplx(\cS)$ gave a lower bound on the smallest possible number of non-zero parameters that are needed to implement $\cS$ in practice. This observation motivates our nomenclature for the complexity. 

We emphasize that our notion of complexity only relates to the size of the neural network at the core of these architectures; by design, it does not take into account other factors, such as the additional complexity associated with the practical evaluation of inner products in the PCA-Net architecture, evaluation of linear encoding functionals of DeepONets, or the numerical representation of an (optimal) output PCA basis for PCA-Net or neural network basis for DeepONet. The important point is that the aforementioned factors can only increase the overall complexity of any concrete implementation; correspondingly, our proposed notion of $\cmplx(\cS)$, which neglects some of these additional contributions, can be used to derive rigorous \emph{lower} bounds on the overall complexity of any implementation.

\begin{remark}
In passing, we point out similar approaches \cite{HUA202321,nonlinrec,patel2024variationally,zhang2023belnet} to the PCA-Net, DeepONet and NOMAD architectures, which share a closely related underlying structure to the examples given above. We fully expect that the curse of parametric complexity applies to all of these architectures.
\end{remark}

In the next subsection \ref{ssec:curseopFNO} we will show that
even for operator architectures which are not of neural network-type according to the above definition, we may nevertheless be able to link them with an associated operator of neural network-type. Specifically, we will show this for the FNO in Theorem  \ref{thm:fno-cod}. There, we will see that the size (number of tunable parameters) of the FNO can be linked to the complexity of an associated operator of neural network-type. And hence lower bounds on the complexity of operators of neural network-type imply corresponding lower bounds on the FNO.

\subsection{The Curse of (Parametric) Complexity for Fourier Neural Operators.}
\label{ssec:curseopFNO}

The definition of \emph{operators of neural network-type} introduced
in the previous subsection does not include the FNO, a widely adopted
neural operator architecture. However, in this subsection we show that a 
result similar to Theorem \ref{thm:codexp}, stated as Theorem
 \ref{thm:fno-cod} below, can be obtained for the FNO. 

Due to intrinsic constraints on the domain on which FNO can be (readily) applied, we will assume that the spatial domain $\Omega = \prod_{j=1}^d [a_j,b_j] \subset \R^d$ is rectangular.
We recall that an FNO,
\[
\cS: 
\cX(\Omega; \R^k) \to \cY(\Omega; \R^p),
\]
can be written as a composition, $\cS = \cQ \circ \cL_L \circ \dots \circ \cL_1 \circ \cP$, of a lifting layer $\cP$, hidden layers $\cL_1,\dots, \cL_L$ and an output layer $\cQ$. In the following, let us denote by $\cV(\Omega;\R^{d_v})$ a generic space of functions from $\Omega$ to $\R^{d_v}$.

The nonlinear \define{lifting layer}
\[
\cP: \cX(\Omega;\R^k) \to \cV(\Omega; \R^{d_v}), 
\quad
u(x) \mapsto \chi(x,u(x)),
\]
is defined by a neural network $\chi: \Omega \times \R^k \to \R^{d_v}$, 
depending jointly on the evaluation point $x \in \Omega$ and 
the components of the input function $u$ evaluated at $x$,
namely $u(x) \in \R^k$. The dimension $d_v$ is a free hyperparameter, and determines the number of ``channels'' (or the ``lifting dimension''). 

Each \define{hidden layer} $\cL_\ell$, $\ell=1,\dots, L$, of an FNO is of the form 
\[
\cL_\ell: \cV(\Omega; \R^{d_v}) \to \cV(\Omega; \R^{d_v}), 
\quad
v \mapsto \sigma\left(W_\ell v + K_\ell v + b_\ell \right),
\]
where $\sigma$ is a nonlinear activation function applied componentwise, $W_\ell \in \R^{d_v \times d_v}$ is a matrix, the bias $b_\ell\in \cV(\Omega;\R^{d_v})$ is a function and $K_\ell: \cV(\Omega; \R^{d_v}) \to \cV(\Omega; \R^{d_v})$ is a non-local operator defined in terms of \define{Fourier multipliers},
\[
Kv(x) = \cF^{-1} ( \hat{T}_k [\cF v]_k)(x),
\]
where $[\cF v]_k$ denotes the $k$-th Fourier coefficient of $v$ for $k \in \Z^d$, $\hat{T}_k \in \C^{d_v\times d_v}$ is a Fourier multiplier matrix indexed by $k$, and $\cF^{-1}$ denotes the inverse Fourier transform. In practice, a Fourier cut-off $k_{\mathrm{max}} \in \Z$ is introduced, and only a finite number of Fourier modes\footnote{Throughout this paper $\Vert \cdot  \Vert_{\ell^\infty}$ denotes
the maximum norm on finite dimensional Euclidean space.} $\Vert k \Vert_{\ell^\infty} \le k_{\mathrm{max}}$ is retained. In particular, the number of non-zero components of $\hat{T} = \{\hat{T}_k\}_{k\in \Z^d}$ is bounded by $\Vert \hat{T} \Vert_0 \le d_v^2 (2k_{\mathrm{max}}+1)^d$. In the following, we will also assume that the bias functions $b_\ell$ are determined by their Fourier components $[\hat{b}_\ell]_k \in \C^{d_v}$, $\Vert k \Vert_{\ell^\infty} \le k_{\mathrm{max}}$. 

Finally, the \define{output layer} 
\[
\cQ: \cV(\Omega; \R^{d_v}) \to \cY(\Omega; \R^p),
\quad
v \mapsto q(x,v(x)),
\]
is defined in terms of a neural network $q: \Omega \times \R^{d_v} \to \R^p$, a joint function of the evaluation point $x\in \Omega$ and the components
of the output $v$ of the previous layer evaluated at $x$, namely
$v(x) \in \R^{d_v}$.

To define the \define{size of an FNO}, we note that its
tunable parameters are given by: \emph{(i)} the weights and biases of the neural network $\chi$ defining the lifting layer $\cR$, \emph{(ii)} the components of the matrices $W_\ell \in \R^{d_v\times d_v}$, \emph{(iii)} the components of the Fourier multipliers $\hat{T}_k \in \C^{d_v\times d_v}$ for $\Vert k \Vert_{\ell^\infty} \le k_{\mathrm{max}}$, \emph{(iv)} the Fourier coefficients $[\hat{b}_\ell]_k\in \C^{d_v}$, for $\Vert k \Vert_{\ell^\infty} \le k_{\mathrm{max}}$, and \emph{(v)} the number of weights and biases of the neural network $q$ defining the output layer $\cQ$. Given an FNO $\cS: \cX(\Omega;\R^k) \to \cY(\Omega;\R^p)$, we define $\size(\cS)$ of an FNO as the total number of non-zero parameters in this construction. We also follow the convention that for a matrix (or vector) $A$ with complex entries, the number of parameters is defined as $\Vert A \Vert_0 = \Vert \Re(A) \Vert_0 + \Vert \Im(A) \Vert_0$.

\begin{remark}[FNO approximation of functionals]
\label{rem:FNO-func}
If $\cS^\dagger: \cX(\Omega;\R^k) \to \R$ is a (scalar-valued) functional, then we will again identify the output-space $\cY(\Omega;\R^p)$ with a space of constant functions. In this case, it is natural to add an averaging operation after the last output layer $\cQ: \cV(\Omega;\R^{d_v}) \to \cY(\Omega;\R)$, i.e. we replace $\cQ$ by $\tQ: \cV(\Omega;\R^{d_v}) \to \R$, given by
\begin{align}
\label{eq:avg}
\tQ(v) := \frac{1}{|\Omega|} \int_{\Omega} q(x,v(x)) \, dx.
\end{align}
This does not introduce any additional degrees of freedom, and ensures that the output is constant. We also note that \eqref{eq:avg} is a special case of a Fourier multiplier $K$, involving only the $k=0$ Fourier mode.
\end{remark}

In the following, we will restrict attention to the approximation of functionals, taking into account Remark \ref{rem:FNO-func}. 
We first mention the following result, proved in Appendix \ref{A:fno-nnt}, which shows that FNOs are not of neural network-type, in general:
\begin{lemma}
\label{lem:fno-nnt}
Let $\sigma$ be the ReLU activation function. Let $\T \simeq [0,2\pi]$ denote the $2\pi$-periodic torus. The FNO,
\[
\cS: L^2(\T;\R) \to \R, \quad \cS(u) := \int_{\Omega} \sigma(u(x)) \, dx,
\]
is not of neural network-type.
\end{lemma}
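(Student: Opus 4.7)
The plan is a proof by contradiction. Suppose there exist $\ell \in \mathbb{N}$, a linear map $\cL: L^2(\T;\R) \to \R^\ell$, and a ReLU neural network $\Phi: \R^\ell \to \R$ such that $\cS(u) = \Phi(\cL u)$ for all admissible $u$. Since $L^2(\T;\R)$ is infinite-dimensional and $\cL$ has finite-dimensional target, $\ker(\cL)$ is infinite-dimensional; in particular it contains some $v_0 \ne 0$. For every scalar $\lambda$, the element $\lambda v_0$ also lies in $\ker(\cL)$, so $\Phi(\cL(\lambda v_0)) = \Phi(0)$ is independent of $\lambda$.

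The core of the argument exploits the positive homogeneity of $\sigma(\cdot) = \min\{0,\cdot\}$: for $\lambda > 0$,
\[
\cS(\lambda v_0) \;=\; \int_{\T} \min\{0, \lambda v_0(x)\}\,dx \;=\; \lambda \int_{\T} \min\{0, v_0(x)\}\,dx.
\]
Hence $\lambda \mapsto \cS(\lambda v_0)$ is linear in $\lambda \geq 0$ with slope $\int_{\T} \min\{0, v_0\}\,dx$. It will suffice to choose $v_0 \in \ker(\cL)$ so that this slope is nonzero, for then $\cS(\lambda v_0)$ varies with $\lambda$ while $\Phi(\cL(\lambda v_0))$ does not, a contradiction.

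To arrange a nonzero slope, I observe that if some nonzero $v_0 \in \ker(\cL)$ happens to be non-negative almost everywhere, then $-v_0 \in \ker(\cL)$ satisfies $\int_\T \min\{0, -v_0(x)\}\,dx = -\int_\T v_0(x)\,dx < 0$. So, after possibly replacing $v_0$ by $-v_0$, I may assume $\int_\T \min\{0, v_0\}\,dx < 0$, completing the contradiction.

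I expect no real obstacle here; the only point to be careful about is how to interpret Definition~\ref{def:nntype} in a lemma that does not specify a set $K$. The scaling argument above makes sense whenever the representation $\Phi \circ \cL$ is valid on a subset containing a line segment of the form $\{\lambda v_0 : \lambda \in [0,\lambda_0]\}$ for some $v_0 \in \ker(\cL)\setminus\{0\}$ and some $\lambda_0 > 0$; this is automatic for $K = L^2(\T;\R)$, or any $L^2$-ball around the origin, so the lemma holds in all reasonable readings.
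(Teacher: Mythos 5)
Your proof is correct and establishes the same contradiction as the paper, but via a slightly cleaner route. The paper finds a nonzero kernel element by fixing $D > \ell$, embedding $\R^D$ into $L^2(\T)$ via the sine basis $\beta \mapsto \sum_{j=1}^D \beta_j \sin(jx)$, and observing that the induced matrix $W \in \R^{\ell\times D}$ must have nontrivial kernel; you instead note directly that $\ker(\cL)$ has finite codimension in the infinite-dimensional $L^2(\T)$ and is therefore nontrivial, avoiding the finite-dimensional detour. To see that $\cS$ is nonconstant on the line through a kernel element $v_0 \ne 0$, you exploit the positive homogeneity of $\sigma$ to compute the slope $\int_\T \sigma(v_0)\,dx$ along the ray $\lambda \mapsto \lambda v_0$, handling the degenerate case (slope zero, i.e.\ $\sigma(v_0)=0$ a.e.) by passing to $-v_0$; the paper reaches the same conclusion via the equivalence $\cS(u)=0$ if and only if $\sigma(u)=0$ a.e., with an analogous sign swap. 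The two arguments are comparably elementary, but yours stays in the infinite-dimensional setting throughout. Your closing observation about the unspecified set $K$ in Definition~\ref{def:nntype} is fair: the paper's own proof likewise tacitly assumes the representation $\cS = \Phi\circ\cL$ holds on all of $L^2(\T)$.
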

The fact that FNO is not of neural network-type is closely related to the fact that the Fourier transforms at the core of the FNO mapping $\cS: \cX(\Omega;\R^k)\to \R$ cannot be computed exactly. In practice, the FNO therefore needs to be \define{discretized}.

A simple discretization $\cS^N$ of $\cS$ is readily obtained and commonly used in applications of FNOs. To this end, fix $N\in \N$, and let $x_{j_1,\dots, j_d} \in \Omega$, $j_1,\dots, j_d \in \{1,\dots, N\}$ be a grid consisting of $N$ equidistant points in each coordinate direction. A numerical approximation $\cS^N: \cX(\Omega;\R^k) \to \R$ of $\cS$ is obtained by replacing the Fourier transform $\cF$ and its inverse $\cF^{-1}$ in each hidden layer by the \emph{discrete} Fourier transform $\cF_N$ and its inverse $\cF_N^{-1}$, computed on the equidistant grid. Similarly, the exact average \eqref{eq:avg} is replaced by an average over the grid values. This ``discretized'' FNO $\cS^N$ thus defines a mapping 
\[
\cS^N: \cX(\Omega; \R^k) \to \R,
\quad
u \mapsto 
\cS^N(u),
\]
which depends only on the grid values $u(x_{j_1,\dots, j_d}) \in \R^k$ of the input function $u$, for multi-indices $(j_1,\dots, j_d) \in \{1,\dots, N\}^d$. In contrast to $\cS$, the discretization $\cS^N$ is readily implemented in practice. We expect that
$\cS^N({u}) \approx \cS(u)$
for sufficiently large $N$. Note also that $\size(\cS^N)=\size(\cS)$ by construction.

Given these preparatory remarks, we can now state our result on the curse of parametric complexity for FNOs, with proof in Appendix \ref{A:cod2}.



\begin{theorem}
 \label{thm:fno-cod}
Let $\Omega \subset \R^d$ be a cube. Let $K \subset \cX$ be a compact subset of a Banach space $\cX = \cX(\Omega;\R^k)$. Assume that $K$ contains an infinite-dimensional hypercube $Q_\alpha$ for some $\alpha > 1$. Then for any $r\in \N$ and $\delta > 0$, there exists $\bar{\epsilon}>0$ and an $r$-times Fr\'echet differentiable functional $\cS^\dagger: K \subset \cX \to \R$, 
such that approximation to accuracy $\epsilon \le \bar{\epsilon}$ by a discretized FNO $\cS_\epsilon^{N_\epsilon}$, 
\[
\sup_{u \in K} | \cS^\dagger(u) - \cS_\epsilon^{N_\epsilon}(u) | \le \epsilon,
\]
requires either (i) complexity bound $\size(\cS_\epsilon^{N_\epsilon}) \ge \exp(c \epsilon^{-1/(\alpha+1+\delta) r})$, or (ii) discretization parameter $N_\epsilon \ge \exp(c \epsilon^{-1/(\alpha+1+\delta) r})$; here $c$, $\bar{\epsilon}>0$ are constants depending only on $\alpha$, $\delta$ and $r$.
\end{theorem}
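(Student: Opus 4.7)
The plan is to reduce Theorem \ref{thm:fno-cod} to Theorem \ref{thm:codexp}. The witness $\cS^\dagger$ will be the functional already produced by Theorem \ref{thm:codexp} on the hypercube $Q_\alpha \subset K$, and the additional ingredient to supply is that every discretized FNO $\cS^N: \cX\to \R$ is itself a functional of neural network-type in the sense of Definition \ref{def:nntype}, with complexity bounded polynomially in $N$ and $\size(\cS^N)$.

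For the factorization I would use that $\cS^N(u)$ depends on $u$ only through the grid values $u(x_{j_1,\dots,j_d}) \in \R^k$, so $\cS^N = \Phi^N \circ \cL^N$, where $\cL^N : \cX \to \R^{kN^d}$ is the linear sampling map and $\Phi^N : \R^{kN^d} \to \R$ implements the pointwise lifting, the $L$ hidden layers, the pointwise output layer and the final grid average. I would then verify that $\Phi^N$ is itself a ReLU neural network: the lifting and output layers become $N^d$ parallel copies of the underlying ReLU networks $\chi$ and $q$ (with the grid coordinates absorbed into input biases); each hidden layer collapses to an affine map on $\R^{N^d d_v}$, combining the pointwise matrix $W_\ell$, the composition of discrete Fourier transform with diagonal multiplier $\hat{T}_{\ell,k}$ and inverse transform (which fuse into a single dense linear layer), and the bias $b_\ell$, followed by pointwise ReLU. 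Using the composition rules \eqref{eq:calculus1}--\eqref{eq:calculus2}, all pieces assemble into one ReLU network $\Phi^N$, and a crude parameter count (noting $L \le \size(\cS^N)$ and $d_v^2 \le \size(\cS^N)$) gives
\[
\cmplx(\cS^N) \le \size(\Phi^N) \le C\, N^{2d}\, \size(\cS^N)^2,
\]
the $N^{2d}$ factor arising from the generically dense DFT-multiplier-IDFT composition inside each hidden layer.

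To conclude, I would invoke Theorem \ref{thm:codexp} for the given $\alpha,\delta,r$: if a family $\cS^{N_\epsilon}_\epsilon$ of discretized FNOs approximates the resulting $\cS^\dagger$ uniformly on $K$ to accuracy $\epsilon$, then
\[
C\, N_\epsilon^{2d}\, \size(\cS^{N_\epsilon}_\epsilon)^2 \ge \cmplx(\cS^{N_\epsilon}_\epsilon) \ge \exp\!\bigl(c\, \epsilon^{-1/(\alpha+1+\delta)r}\bigr).
\]
Taking logarithms and splitting into the cases $N_\epsilon^{2d} \ge \size(\cS^{N_\epsilon}_\epsilon)^2$ and its converse yields the claimed dichotomy with a possibly smaller constant $c'>0$.

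The main obstacle is the construction and parameter-count of $\Phi^N$: correctly encoding the hidden layers on the grid as a single ReLU network while keeping track of how many non-zero weights are needed. The delicate point is the DFT--multiplier--IDFT chain on $\R^{N^d d_v}$, which is generically a dense linear map producing the dominant $N^{2d}$ factor; a sparser FFT-based implementation could reduce this but is unnecessary, because any polynomial bound in $N$ and $\size(\cS^N)$ is enough to transform the exponential lower bound on $\cmplx$ from Theorem \ref{thm:codexp} into the exponential lower bound on $\max(N_\epsilon, \size(\cS^{N_\epsilon}_\epsilon))$ required by Theorem \ref{thm:fno-cod}.
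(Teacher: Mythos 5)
Your proposal follows the same strategy as the paper's proof: first show that the discretized FNO $\cS^{N}$ is of neural-network type by using the grid-sampling map as $\cL$ and assembling the lifted/hidden/output layers into a single ReLU network $\Phi^N$, with the DFT--multiplier--IDFT chain absorbed into a dense affine map of size $O(N^{2d})$, yielding $\cmplx(\cS^N) \lesssim N^{2d}\,\size(\cS^N)^2$; then feed this into Theorem~\ref{thm:codexp}. The paper derives exactly the same bound $\size(\Phi) \le N^{2d}\size(\cS^N_\epsilon)^2$. The only difference is at the very end: the paper argues by contradiction, introducing an auxiliary $0<\delta'<\delta$ so that the lower rate $\epsilon^{-1/(1+\alpha+\delta')r}$ strictly dominates the assumed upper rate $\epsilon^{-1/(1+\alpha+\delta)r}$, whereas you apply Theorem~\ref{thm:codexp} directly at the given $\delta$, take logarithms of $CN_\epsilon^{2d}\size(\cS^{N_\epsilon}_\epsilon)^2 \ge \exp(c\epsilon^{-1/(\alpha+1+\delta)r})$, and split into the cases $N_\epsilon^{2d}\ge \size^2$ and its complement to obtain the dichotomy with a smaller constant $c'$. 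Your direct argument is slightly cleaner since it sidesteps the auxiliary $\delta'$ and the quantifier gymnastics of the contradiction; both are valid and rest on the same parameter-counting lemma for the discretized FNO.
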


\begin{proof}{(\emph{Sketch})}
The proof of Theorem  \ref{thm:fno-cod} relies on the curse of parametric complexity for operators of neural network-type in Theorem \ref{thm:codexp}. The first step is to show that discrete FNOs are of neural network-type. As a consequence, Theorem \ref{thm:codexp} implies a lower bound of the form $\cmplx(\cS_\epsilon^{N_\epsilon}) \ge \exp(c \epsilon^{-1/(\alpha+1+\delta) r})$. The main additional difficulty is that the discrete FNO is not a standard ReLU neural network according to our definition, since it employs a very non-standard architecture involving convolution and Fourier transforms. Hence more work is needed, in order to relate $\cmplx(\cS_\epsilon^{N_\epsilon})$ to $\size(\cS_\epsilon^{N_\epsilon})$; while the complexity is the minimal number of parameters required to represent $\cS_\epsilon^{N_\epsilon}$ by an \emph{ordinary} ReLU network architecture, we recall that the $\size(\cS_\epsilon^{N_\epsilon})$ is defined as the number of parameters defining $\cS_\epsilon^{N_\epsilon}$ via the \emph{non-standard} FNO architecture. Our analysis leads to an upper bound of the form 
\begin{align}
\label{eq:cmplx-size}
 \rev{
 \cmplx(\cS_\epsilon^{N_\epsilon}) 
\lesssim N_\epsilon^{2d} \size(\cS_\epsilon^{N_\epsilon}).
}
\end{align}
As a consequence of the exponential lower bound, $\cmplx(\cS_\epsilon^{N_\epsilon}) \ge \exp(c \epsilon^{-1/(\alpha+1+\delta) r})$, inequality \eqref{eq:cmplx-size} implies that either $N_\epsilon$ or $\size(\cS_\epsilon^{N_\epsilon})$ have to be exponentially large in $\epsilon^{-1}$, proving the claim.
For the detailed proof, we refer to Appendix \ref{A:cod2}.
\end{proof}

\rev{
\begin{remark}
We would like to point out that the additional factor in \eqref{eq:cmplx-size}, depending on $N_\epsilon$, is natural in view of the fact that even a linear discretized FNO layer $v \mapsto Wv$, with matrix $W \in \R^{d_v\times d_v}$, actually corresponds to a mapping
$\R^{N_\epsilon^d \times d_v} \to \R^{N_\epsilon^d\times d_v}$, $(v(x_{j_1,\dots, j_d})) \mapsto ( Wv(x)_{j_1,\dots,j_d})$; i.e. the matrix multiplication should be viewed as being carried out in parallel at all $N_\epsilon^d$ grid points. Representing this mapping by an ordinary matrix multiplication $\R^{N_\epsilon^d \times d_v} \to \R^{N_\epsilon^d\times d_v}$ requires $N_\epsilon^d \Vert W\Vert_0$ degrees of freedom, and thus depends on $N_\epsilon$ in addition to the number of FNO parameters $\Vert W \Vert_0$. 
\end{remark}
}

\begin{remark}
\label{rem:MTI}
Theorem  \ref{thm:fno-cod} shows that FNO suffers from a similar curse of complexity as do the variants on DeepONet and PCA-Net covered by Theorem \ref{thm:codexp}: approximation to accuracy $\epsilon$ of general ($C^r$- or Lipschitz-) operators requires either an exponential number of non-zero degree of freedom, or requires exponential computational resources to evaluate even a single forward pass. We note the difference in how the curse is expressed in Theorem  \ref{thm:fno-cod} compared to Theorem \ref{thm:codexp}; this is due to the fact that FNO is not of neural network-type (see Lemma \ref{lem:fno-nnt}). Instead, as outlined in the proof sketch above, only upon discretization does the FNO define an operator/functional of neural network type. The computational complexity of this discretized FNO is determined by both the FNO coefficients and the discretization parameter $N$.
\end{remark}

\subsubsection{Discussion}
To overcome the general curse of parametric complexity implied by Theorem \ref{thm:codexp} (and Theorem  \ref{thm:fno-cod}), efficient operator learning frameworks therefore have to leverage additional structure present in the operators of interest, going beyond $C^r$- or Lipschitz-regularity. Previous work on overcoming the curse of parametric complexity for operator learning has mostly focused on operator holomorphy \cite{herrmann2020deep,schwab2019deep,lanthaler2021error} and the emulation of numerical methods \cite{deng2021convergence,kovachki2021universal,lanthaler2021error} as two basic mechanisms for overcoming the curse of parametric complexity for specific operators of interest.  An abstract characterization of the entire class of operators that allow for efficient approximation by neural operators would be very desirable. Unfortunately, this appears to be out of reach, at the current state of analysis.
Indeed, as far as the authors are aware, there does not even exist such a 
characterization for any class of standard numerical methods, 
such as finite difference, finite element or spectral, viewed as operator approximators. 

The second contribution of the present work is to expose additional structure, different from holomorphy and emulation, that can be leveraged by neural operators. In particular,
in the remainder of this paper we identify such structure in the Hamilton-Jacobi equations, and propose a neural operator framework which can build on this structure to provably overcome the curse of parametric complexity 
in learning the associated solution operator.

\section{The Hamilton-Jacobi Equation}
\label{sec:S}


In the previous section we demonstrate that, generically, a scaling-limit of the curse
of dimensionality is to be expected in operator approximation, if only
$C^r-$ regularity of the map is assumed. However we also outlined in the
introduction the many cases where specific structure can be exploited to
overcome this curse. In the remainder of the paper we show how the curse
can be removed for Hamilton-Jacobi equations. To this end we start, in this
section, by setting up the theoretical framework for operator learning
in this context.

We are interested in deriving error and complexity estimates for the approximation of the solution operator $\cS_t^\dagger: C^r_\per(\Omega) \to C^r_\per(\Omega)$ associated with the following Hamilton-Jacobi PDE:
\begin{align}
\left\{
\begin{aligned}
\partial_t u + H(q,\nabla_q u) &= 0, (x,t) \in \Omega \times (0,T],\\
u(t=0) &= u_0, (x,t) \in \Omega \times \{0\} 
\end{aligned}
\right.
\tag{HJ}
\label{eq:HJ}
\end{align}
To simplify our analysis we consider only the case of a domain $\Omega = [0,2\pi]^d$ with periodic boundary conditions (so that we may identify $\Omega$ with $\mathbb{T}^d$.) We denote by $C^r_\per(\Omega)$ the space of $r$-times continuously differentiable real-valued functions with $2\pi$-periodic derivatives, with norm 
\[
\Vert u \Vert_{C^r(\Omega)} := \sup_{|\alpha|\le r} \sup_{x\in \Omega} |D^\alpha u(x)|.
\]
By slight abuse of notation, we will similarly denote by $u(q,t) \in C^r_\per(\Omega \times [0,T])$ and $H(q,p) \in C^r_\per(\Omega \times \R^d)$ functions that have $C^r$ regularity in all variables, and that are periodic in the first variable $q\in \Omega$, so that in particular,
\[
q \mapsto u(q,t), \quad \text{and} \quad  q\mapsto H(q,p),
\]
belong to $C^r_\per(\Omega)$, for fixed $p\in \R^d$ or $t\in [0,T].$ 

In equation \eqref{eq:HJ}, $u: \Omega \times [0,T] \to \mathbb{R}$, $(q,t) \mapsto u(q,t)$ is a function, depending on the spatial variable $q\in \Omega$ and on time $t\ge0$. We will restrict attention to problems for which a \emph{classical solution} $u \in C^r_\per(\Omega \times [0,T])$, $r\ge 2$, exists. In this setting the initial value problem \eqref{eq:HJ} can be solved by the method of characteristics and a unique solution may be proved to exist for some $T=T(u_0)>0.$ We may
then define solution operator $\cS_t^\dagger$ with property
$u(\cdot,t)=\cS_t^\dagger(u_0);$ \rev{the maximal time $T$ of existence} will,
in general, depend on the input $u_0$ to $\cS_t.$ The next two subsections describe, respectively, the method of characteristics and
the existence of the solution operator on a time-interval $t \in [0,T]$,
for all initial data from compact $\cF$ in $C^r_\per(\Omega),$ for $r \ge 2.$ 
Thus we define
$\{\cS_t^\dagger: \cF \subset C^r_\per(\Omega) \to C^r_\per(\Omega)\}$
for all $t \in [0,T]$, $T=T(\cF)$ sufficiently small.

We recall that throughout this paper, \rev{use of a superscript
$\dagger$ denotes an object defined through construction of an exact solution of \eqref{eq:HJ},} or objects used in the construction of the solution; 
identical notation without a superscript $\dagger$ denotes an approximation of that object.

\subsection{Method of Characteristics for \eqref{eq:HJ}}

We briefly summarize this methodology; for more details see \cite[Section 3.3]{evans}.
Consider the following Hamiltonian system for $t \mapsto (q(t),p(t)) \in \Omega \times \R^d$
defined by
\begin{subequations}
\label{eq:traj}
\begin{align}
&\left\{
\begin{aligned}
\dot{q} &= \nabla_p H(q,p), \quad q(0) = q_0,\\
\dot{p} &= -\nabla_q H(q,p), \quad p(0) = p_0,
\end{aligned}
\right. \\
&\quad p_0=\nabla_qu_0(q_0).
\end{align}
\end{subequations}
If the solution $u(q,t)$ of \eqref{eq:HJ} is twice continuously differentiable, then  $u$ can be evaluated by solving the ODE (\ref{eq:traj}a) with the
specific parameterized initial conditions (\ref{eq:traj}b).
Given these trajectories and $t\ge 0$, the values $u(q(t),t)$ can be computed in terms of the ``action'' along this trajectory:
\begin{align}
\label{eq:quad}
u(q(t),t) = u_0(q_0) + \int_0^t \cL(q(t),p(t)) \, d\tau,
\end{align}
where $\cL: \Omega \times \R^d \to \R$ is the Lagrangian associated with $H$, i.e.
\begin{align}
\cL(q,p) := p\cdot \nabla_p H(q,p) - H(q,p).
\end{align}

Equivalently, the solution $z(t) = u(q(t),t)$ can be expressed in terms of the solution of the following system of ODEs, $t\mapsto (q(t),p(t),z(t))$:
\begin{subequations}
\label{eq:flow}
\begin{align}
&\left\{
\begin{aligned}
\dot{q} &= \nabla_pH(q,p), \quad q(0) = q_0,\\
\dot{p} &= -\nabla_qH(q,p), \quad p(0) = p_0,\\
\dot{z} &= \cL(q,p), \quad z(0) = z_0.
\end{aligned}
\right. \\
&\quad p_0=\nabla_qu_0(q_0), \quad z_0=u_0(q_0).
\end{align}
\end{subequations}
The system of ODEs \eqref{eq:flow} is defined on $\Omega \times \R^d \times \R$, with a $2\pi$-periodic spatial domain $\Omega = [0,2\pi]^d$. It can be shown that 
\begin{align}
\label{eq:p}
p(t) \equiv \nabla_q u(q(t),t), \quad \text{for $t\ge 0$,} 
\end{align} 
tracks the evolution of the gradient of $u$ along this trajectory. 
To ensure the existence of solutions to \eqref{eq:traj}, i.e. to avoid trajectories escaping to infinity, we make the following assumption on $H$,
in which $|\slot|$ denotes the Euclidean distance on $\R^d$:

\begin{assumption}[Growth at Infinity]
\label{ass:blowup}
There exists $L_H>0$, such that
\begin{align}
\label{eq:blowup}
\sup_{q \in \Omega} \left\{ -p\cdot \nabla_q H(q,p)\right\} \le L_H (1+|p|^2),
\end{align}
for all $p\in \R^d$.
\end{assumption}

In the following, we will denote by 
\begin{align}
\label{eq:flowmap}
\left\{
\begin{aligned}
\Psi_t^\dagger: \Omega \times \R^d \times \R &\to \Omega \times \R^d \times \R, \\
(q_0,p_0,z_0) &\mapsto (q(t),p(t),z(t)),
\end{aligned}
\right.
\end{align}
the semigroup (flow map) generated by the system 
of ODEs (\ref{eq:flow}a); 
existence of this semigroup, and hence of the solution operator 
$\cS_t^\dagger$, is the topic of the next subsection.

\subsection{Short-time Existence of $C^r$-solutions}

The goal of the present section is to show that for any $r\ge 2$, and for any compact subset $\cF \subset C^r_\per(\Omega)$, there exists a maximal $T^\ast = T^\ast(\cF)>0$, such that for any $u_0 \in \cF$, there exists a solution $u: \Omega \times [0,T^\ast) \to \R$ of \eqref{eq:HJ}, with property $q \mapsto u(q,t)$ belongs to $C^r_\per(\Omega)$ for all $t\in [0,T^\ast)$. Thus we may take any $T<T^\ast$ in \eqref{eq:HJ}, for data in $\cF$. Our proof of this fact relies on the Banach space version of the implicit function theorem; see 
Appendix \ref{A:E}, Theorem \ref{thm:implicit}, for a precise statement.

In the following, given $t\ge 0$ and given initial values $(q_0,p_0) \in \Omega \times \R^d$, we define $q_t(q_0,p_0)$, $p_t(q_0,p_0)$ as the spatial- and momenta-components of the solution of the Hamiltonian ODE (\ref{eq:traj}a), respectively; 
i.e. the solution of (\ref{eq:traj}a) is given by 
$t \mapsto (q_t(q_0,p_0), p_t(q_0,p_0))$ 
for any initial data $(q_0,p_0) \in \Omega \times \R^d$.

\begin{proposition}[{\bf Short-time Existence of Classical Solutions}]
\label{prop:short-time}
Let Assumption \ref{ass:blowup} hold, let $r\ge 2$ and assume that the Hamiltonian $H\in C^{r+1}_\per(\Omega\times \R^d)$. Then for any compact subset $\cF \subset C^r_\per(\Omega)$, there exists $T^\ast = T^\ast(\cF) > 0$ such that for any $u_0\in \cF$, there exists a classical solution $u\in C^r_\per(\Omega\times [0,T^\ast))$ of the Hamilton-Jacobi equation \eqref{eq:HJ}. Furthermore, for any $T<T^\ast$, there exists a constant $C = C(T,r,\cF,H) > 0$ such that $\sup_{t\in[0,T]}\Vert u(\slot,t) \Vert_{C^r} \le C$.
\end{proposition}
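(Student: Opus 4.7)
The plan is to use the method of characteristics as set up in the preceding subsection: solve the ODE system \eqref{eq:flow} with initial data $(q_0, \nabla u_0(q_0), u_0(q_0))$, and recover $u(q,t)$ at any $q \in \Omega$ by inverting the spatial characteristic map
\[
\Phi_t^{u_0}(q_0) := q_t(q_0, \nabla u_0(q_0)).
\]
Since $\cF$ is compact in $C^r_\per(\Omega)$ with $r \ge 2$, there is $R_0 > 0$ with $\sup_{u_0 \in \cF} \Vert u_0 \Vert_{C^r} \le R_0$, and in particular $|p_0| = |\nabla u_0(q_0)| \le R_0$. Assumption \ref{ass:blowup} gives $\tfrac{d}{dt}(1+|p|^2) \le 2 L_H (1+|p|^2)$, hence $|p_t|^2 \le (1+R_0^2) e^{2 L_H T} - 1$ on $[0,T]$. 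Thus the characteristic trajectories $(q_t, p_t, z_t)$ are confined to a compact set $K_T \subset \Omega \times \R^d \times \R$, uniformly in $u_0 \in \cF$ and $q_0 \in \Omega$, and since $H \in C^{r+1}$, the flow map \eqref{eq:flowmap} is $C^r$ jointly on $[0,T] \times K_T$.

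The next step is to establish a uniform $T^\ast > 0$ on which $\Phi_t^{u_0}$ is a diffeomorphism of $\Omega$. Differentiating yields
\[
D_{q_0} \Phi_t^{u_0}(q_0) = \partial_{q_0} q_t + \partial_{p_0} q_t \cdot D^2 u_0(q_0),
\]
which is the identity at $t = 0$. Because the $t$-derivatives of $\partial_{q_0} q_t,\partial_{p_0} q_t$ are uniformly bounded on $K_T$, and $\Vert D^2 u_0 \Vert_\infty \le R_0$ holds uniformly on $\cF$, there is $T^\ast > 0$ such that $\Vert I - D_{q_0} \Phi_t^{u_0}(q_0) \Vert \le \tfrac{1}{2}$ on $[0, T^\ast) \times \Omega \times \cF$. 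I would then apply the Banach-space IFT (Theorem \ref{thm:implicit}) to $F(t, u_0, \phi) := \Phi_t^{u_0} \circ \phi - \mathrm{id}_\Omega$ at the reference solution $(0, u_0, \mathrm{id})$, where the $\phi$-derivative is the identity on $C^{r-1}_\per(\Omega;\Omega)$; this produces a $C^{r-1}$ inverse $\phi_t^{u_0} := (\Phi_t^{u_0})^{-1}$ depending continuously on $(t, u_0)$. (Equivalently, the periodic lift of $\Phi_t^{u_0}$ to $\R^d$ inherits a uniform bi-Lipschitz bound from the estimate on $D_{q_0} \Phi_t^{u_0}$ and descends to a diffeomorphism of $\Omega$.)

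The candidate solution is then $u(q,t) := z\bigl(t, \phi_t^{u_0}(q)\bigr)$, where $z(t, q_0)$ denotes the $z$-component of \eqref{eq:flow} with $p_0 = \nabla u_0(q_0)$, $z_0 = u_0(q_0)$. Using $\dot z = \cL(q,p) = p \cdot \nabla_p H - H$, the chain rule along characteristics, and the identity $\nabla_q u(q,t) = p_t\bigl(\phi_t^{u_0}(q), \nabla u_0(\phi_t^{u_0}(q))\bigr)$ from \eqref{eq:p}, one verifies directly that $\partial_t u + H(q,\nabla_q u) = 0$. A priori this construction only gives $u(\cdot,t) \in C^{r-1}_\per(\Omega)$ (since $\nabla u_0$ is $C^{r-1}$), but the expression for $\nabla_q u$ shows that it is a $C^{r-1}$ composition, so $\nabla_q u \in C^{r-1}$ and hence $u(\cdot, t) \in C^r_\per(\Omega)$. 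Repeated differentiation of $\Phi_t^{u_0} \circ \phi_t^{u_0} = \mathrm{id}$, combined with the flow bounds and the uniform lower bound $\det D_{q_0} \Phi_t^{u_0} \ge (1/2)^d$, yields the quantitative estimate $\sup_{t \in [0,T]} \Vert u(\cdot, t) \Vert_{C^r} \le C(T, r, \cF, H)$ for any $T < T^\ast$.

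The main obstacle is the \emph{uniformity} of $T^\ast$ and of the $C^r$ bound across all $u_0 \in \cF$; this is exactly what forces the hypothesis that $\cF$ be compact, since one needs a uniform bound on $\Vert u_0 \Vert_{C^r}$ (not merely on some weaker norm) in order to apply the parametric Banach IFT with a modulus of invertibility independent of $u_0$. Everything else is continuous dependence of ODE flows on parameters, the chain rule, and bookkeeping of derivative losses.
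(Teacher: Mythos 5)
Your approach mirrors the paper's up to a point: a priori bounds on $p_t$ from Assumption~\ref{ass:blowup} plus compactness of $\cF$ confine all characteristics to a uniform compact set; an implicit/inverse function theorem argument (the paper's Lemma~\ref{lem:invert}) yields a $u_0$-uniform $T^\ast$ on which the spatial characteristic map is a $C^{r-1}$-diffeomorphism; and one defines $u$ by transporting $z$ back along inverted characteristics. Two remarks on the inversion step. First, your function-space IFT applied to $F(t,u_0,\phi)=\Phi_t^{u_0}\circ\phi-\mathrm{id}$ runs into a regularity loss: since $\Phi_t^{u_0}(q_0)=q_t(q_0,\nabla u_0(q_0))$ with $\nabla u_0\in C^{r-1}$ only, the composition operator $\phi\mapsto\Phi_t^{u_0}\circ\phi$ is not $C^1$ from $C^{r-1}$ to $C^{r-1}$, so that IFT would hand you a $C^{r-2}$ inverse, one degree short. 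Your parenthetical bi-Lipschitz argument (or the paper's device of running the IFT with the \emph{finite-dimensional} unknown $q_0\in\R^d$, the Banach-space variables being parameters) is the correct fix. This is a minor difference of packaging, not a gap.

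The genuine gap is in the verification that the constructed $u$ solves \eqref{eq:HJ}. You invoke ``the identity $\nabla_q u(q,t)=p_t(\dots)$ from \eqref{eq:p}'' as an input to the computation. But \eqref{eq:p} is a statement \emph{about a solution} $u$ of \eqref{eq:HJ}; for the $u$ you have constructed from the ODEs \eqref{eq:flow} and the map inversion, the relation $p_t=\nabla_q u(q_t,t)$ is precisely what must be \emph{proved}, and it is not automatic from the definition of $u$. (What is automatic is $u(q_t,t)=z_t$; compatibility of the $p$-variable with the gradient of $u$ is the content of the assertion, not a hypothesis.) Using it as given makes the argument circular, and this is where the real work of the proposition lives: one has to compute $\tfrac{d}{dt}\bigl[p_t-\nabla_q u(q_t,t)\bigr]$, recognize (after using the equation satisfied by $\partial_t u$ along characteristics and some cancellation via a matrix $A(q_t,t)$ built from second derivatives of $H$) that the difference satisfies a linear homogeneous ODE with zero initial data, and conclude it vanishes. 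Only then does \eqref{eq:1} close to $\partial_t u+H(q,\nabla_q u)=0$ and \eqref{eq:gradu} upgrade $u$ from $C^{r-1}$ to $C^r$.

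A second, smaller omission: the computation just sketched requires $u\in C^2$, but the construction a priori only gives $u\in C^{r-1}$, which is $C^1$ when $r=2$. The paper handles the borderline case $r=2$ by a separate mollification argument (approximate $H$ and $u_0$ smoothly, run the computation for the regularized problem, then pass to the limit). Your proposal is silent on this and as written does not cover $r=2$.
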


The proof, which may be found in Appendix \ref{A:E}, uses the following
two lemmas, also proved in Appendix \ref{A:E}. The 
first result shows that, under Assumption \ref{ass:blowup}, the semigroup 
$\Psi_t^\dagger$ in \eqref{eq:flowmap} is well-defined, globally in time.

\begin{lemma}
\label{lem:traj-existence}
Let $H\in C_\per^{r}(\Omega \times \R^d)$ for $r\ge 1$. Let Assumption \ref{ass:blowup} hold. Then the mapping $\Psi_t^\dagger$ given by \eqref{eq:flowmap} exists for any $t\ge 0$ and $t\mapsto \Psi_t^\dagger$ defines a semigroup. In particular, for any $(q_0,p_0) \in  \Omega\times \R^d$, there exists a solution $t\mapsto (q(t),p(t))$ of the ODE system (\ref{eq:traj}a)
with initial data $q(0) = q_0$, $p(0) = p_0$, for all $t\ge 0$. 
\end{lemma}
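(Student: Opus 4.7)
The plan is to prove Lemma \ref{lem:traj-existence} in the standard ODE continuation style: establish local existence and uniqueness from the regularity of $H$, derive an a priori bound ensuring that the solution cannot escape to infinity in finite time, and then invoke the ODE extension principle and autonomy to obtain a global semigroup.

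First, since $H \in C^r_\per(\Omega \times \R^d)$ with $r \geq 1$, the right-hand side of the system (\ref{eq:flow}a)
\begin{equation*}
F(q,p,z) := \bigl(\nabla_p H(q,p),\, -\nabla_q H(q,p),\, \cL(q,p)\bigr)
\end{equation*}
is $C^{r-1}$ and in particular locally Lipschitz on $\Omega \times \R^d \times \R$. The classical Picard--Lindel\"of theorem therefore yields, for every initial datum $(q_0,p_0,z_0)$, a unique maximal solution defined on some open interval $[0,T_{\max})$ with $T_{\max} \in (0,\infty]$.

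The main step is to rule out blow-up, i.e.\ to show $T_{\max} = \infty$. The $q$-component lies automatically in the compact torus $\Omega = \T^d$ and so remains bounded. For the $p$-component I would exploit Assumption \ref{ass:blowup} directly: on $[0,T_{\max})$ we have
\begin{equation*}
\frac{d}{dt}\bigl(1+|p(t)|^2\bigr) \;=\; -2\,p(t)\cdot \nabla_q H(q(t),p(t)) \;\le\; 2L_H\bigl(1 + |p(t)|^2\bigr),
\end{equation*}
so Gr\"onwall's inequality yields
\begin{equation*}
1 + |p(t)|^2 \;\le\; \bigl(1+|p_0|^2\bigr)\,e^{2L_H t}
\end{equation*}
on any compact subinterval of $[0,T_{\max})$. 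Thus $p(t)$ cannot diverge in finite time. Since $(q,p)$ remains in a bounded set on each finite subinterval and $\cL$ is continuous on that set, the final equation $\dot{z} = \cL(q,p)$ yields a locally bounded $\dot z$, so $z(t)$ stays finite as well. The standard continuation criterion for ODEs then forces $T_{\max} = \infty$.

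Finally, the semigroup property $\Psi_{t+s}^\dagger = \Psi_t^\dagger \circ \Psi_s^\dagger$ follows from the autonomy of the system together with uniqueness: if $\gamma$ is the solution starting at $(q_0,p_0,z_0)$ and $\tilde\gamma$ is the solution starting at $\gamma(s)$, then $t \mapsto \gamma(t+s)$ and $\tilde\gamma(t)$ both solve the same IVP, hence coincide. I expect the only delicate point to be the a priori bound on $p$; this is exactly where Assumption \ref{ass:blowup} is designed to be used, and Gr\"onwall closes the argument cleanly. Periodicity in $q$ plays no role beyond ensuring the $q$-component stays in $\Omega$ automatically.
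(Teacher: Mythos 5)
Your proof is correct and follows essentially the same route as the paper: local existence by standard ODE theory, the Gr\"onwall estimate $1+|p(t)|^2 \le (1+|p_0|^2)e^{2L_H t}$ from Assumption \ref{ass:blowup} to rule out blow-up in $p$, boundedness of $q$ from periodicity, and boundedness of $z$ from the structure $\dot z = \cL(q,p)$. The only minor caveat — shared with the paper's own proof — is that the claim ``$C^{r-1}$ and in particular locally Lipschitz'' requires $r\ge 2$; for $r=1$ the vector field is merely continuous, so Picard--Lindel\"of does not directly give uniqueness, though the intended applications all take $r\ge 2$.
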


The following result will be used to show that the method of characteristics 
can be used to construct solutions, at least over a sufficiently 
short time-interval.

\begin{lemma}
\label{lem:invert}
Let Assumption \ref{ass:blowup} hold, let $r\ge 2$ and assume that the Hamiltonian $H \in C^{r+1}_\per(\Omega\times \R^d)$. Let $\cF \subset C^r_\per(\Omega)$ be compact. Then there exists a (maximal) time $T^\ast = T^\ast(\cF) > 0$, such that for all $u_0\in \cF$, the {\em spatial characteristic mapping} 
\[
\Phi_t^\dagger(\slot;u_0): \Omega \to \Omega, \quad q_0 \mapsto q_t\bigl(q_0,\nabla_qu_0(q_0)\bigr),
\]
defined by \eqref{eq:traj} on the periodic domain $\Omega = [0,2\pi]^d$, is a $C^{r-1}$-diffeomorphism for any $t\in [0,T^\ast)$.
\end{lemma}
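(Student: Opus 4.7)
The plan is to proceed in three stages: (i) verify $C^{r-1}$-regularity of $\Phi_t^\dagger(\cdot; u_0)$ and confine all characteristics to a common compact set; (ii) establish that the Jacobian of $\Phi_t^\dagger$ stays close to the identity for short times, uniformly over $\cF$; (iii) upgrade this local invertibility to a global diffeomorphism of the torus $\Omega = \T^d$.

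For stage (i), Lemma \ref{lem:traj-existence} makes $\Psi_t^\dagger$ globally defined in time, and standard ODE regularity theory (applicable because the Hamiltonian vector field $(\nabla_p H,-\nabla_q H)$ is $C^r$ by $H\in C^{r+1}_\per$) yields that $(q_0,p_0)\mapsto q_t(q_0,p_0)$ is jointly $C^r$. Composing with the $C^{r-1}$-map $q_0 \mapsto (q_0, \nabla_q u_0(q_0))$ gives $\Phi_t^\dagger(\cdot;u_0)\in C^{r-1}(\Omega;\Omega)$. Compactness of $\cF\subset C^r_\per(\Omega)$ provides $M := \sup_{u_0\in \cF}\|u_0\|_{C^r}<\infty$; in particular $|p_0|=|\nabla u_0(q_0)|\le M$. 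Assumption \ref{ass:blowup} together with the Gronwall argument behind Lemma \ref{lem:traj-existence} then confines the entire family of characteristics issued from $\{(q_0,\nabla u_0(q_0))\mid q_0\in\Omega,\,u_0\in\cF\}$ to a fixed compact set $\Omega\times\overline{B_P}\subset\Omega\times\R^d$ over any bounded time window, on which $H$ and its derivatives up to order $r+1$ admit uniform bounds.

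For stage (ii), differentiating the integral equation $q(t)=q_0+\int_0^t \nabla_p H(q(s),p(s))\,ds$ with respect to $q_0$ along the characteristic initial data $p_0=\nabla u_0(q_0)$, and using the variational equations for the sensitivities $\partial q_s/\partial q_0$ and $\partial p_s/\partial q_0$ (with initial values $I$ and $D^2 u_0(q_0)$, respectively), yields a uniform expansion
\begin{align*}
D_{q_0}\Phi_t^\dagger(q_0;u_0)=I+t\,R_t(q_0;u_0),
\end{align*}
where $\|R_t\|_{\mathrm{op}}$ is controlled in terms of the $C^2$-bounds on $H$ over $\Omega\times\overline{B_P}$ and the uniform bound $\|D^2 u_0\|_\infty\le M$ (here $r\ge 2$ is crucial). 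Consequently there exists $T^\ast=T^\ast(\cF)>0$ such that $\|D_{q_0}\Phi_t^\dagger-I\|_{\mathrm{op}}<1/2$ uniformly in $q_0\in\Omega$ and $u_0\in\cF$ for every $t\in[0,T^\ast)$.

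For stage (iii), pass to the universal cover $\R^d$ of $\Omega=\T^d$. Since $\Phi_0^\dagger$ is the identity and $t\mapsto \Phi_t^\dagger$ is continuous, the lift takes the form $\tilde{\Phi}_t^\dagger(q_0;u_0) = q_0 + F_t(q_0;u_0)$ with $F_t(\cdot;u_0)$ being $(2\pi\Z)^d$-periodic. The Jacobian estimate of stage (ii) gives $\|DF_t\|_{\mathrm{op}}<1/2$, so the mean value theorem provides
\begin{align*}
\tfrac{1}{2}|x-y|\le|\tilde{\Phi}_t^\dagger(x)-\tilde{\Phi}_t^\dagger(y)|\le\tfrac{3}{2}|x-y|, \qquad \forall\,x,y\in\R^d.
\end{align*}
This forces $\tilde{\Phi}_t^\dagger$ to be injective and proper, while pointwise invertibility of its Jacobian renders it a local $C^{r-1}$-diffeomorphism; taken together, $\tilde{\Phi}_t^\dagger$ is a global $C^{r-1}$-diffeomorphism of $\R^d$. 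Periodicity of $F_t$ ensures $\tilde{\Phi}_t^\dagger$ commutes with $(2\pi\Z)^d$-translations, so it descends to a $C^{r-1}$-diffeomorphism of $\Omega=\T^d$. The main obstacle to navigate is stage (ii): the smallness estimate must hold uniformly in $u_0\in\cF$, which is exactly what compactness of $\cF$ in $C^r$ with $r\ge 2$ delivers by producing a uniform bound on $\|D^2 u_0\|_\infty$ in the chain-rule decomposition of $D_{q_0}\Phi_t^\dagger$.
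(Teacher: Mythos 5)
Your proof is correct, but follows a genuinely different route from the paper's. The paper applies the Banach-space implicit function theorem (Theorem \ref{thm:implicit}) to the residual $F(u_0,q,t;q_0) = q - q_t(q_0,\nabla_q u_0(q_0))$: at $t=0$ one has $F(u_0,q_0,0;q_0)=0$ and $D_{q_0}F = -I$ is an isomorphism, so local inverse branches $\psi$ exist, and these are glued over finite subcovers of the compact sets $\Omega$ and $\cF$ using the uniqueness clause of the theorem. You instead prove a quantitative expansion $D_{q_0}\Phi_t^\dagger = I + t R_t$ with $\Vert R_t\Vert_{\mathrm{op}}$ uniformly bounded over $\cF$ and a fixed time window (using the variational equations for the sensitivities together with the uniform $\Vert D^2 u_0\Vert_\infty$ bound from compactness of $\cF$), lift to the universal cover $\R^d$ (where periodicity of $F_t := \tilde\Phi_t^\dagger - \mathrm{id}$ follows from $\Phi_0^\dagger=\mathrm{id}$ and continuity in $t$), and extract a bi-Lipschitz estimate that renders the lift a global $C^{r-1}$-diffeomorphism descending to $\T^d$. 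The paper's route is more abstract, and the upgrade from \emph{local} uniqueness of the branches $\psi$ to \emph{global} injectivity of $q_0 \mapsto q_t(q_0,\nabla u_0(q_0))$ on all of $\Omega$ is left somewhat implicit; your universal-cover argument makes global injectivity and surjectivity completely explicit via the contraction estimate and properness, and in addition yields a computable lower bound on $T^\ast$ in terms of $\Vert H\Vert_{C^2}$ on $\Omega\times\overline{B_P}$ and $\sup_{u_0\in\cF}\Vert u_0\Vert_{C^2}$. One hinge to state fully in your stage (ii): the uniform bound on $R_t$ presupposes that a finite reference window $[0,T_0]$ has been fixed first---since the Gronwall confinement of $p_t$ to $\overline{B_P}$, and hence the bounds on the variational-equation coefficients, depend on that horizon---and the constructed $T^\ast$ must then be taken at most $T_0$; you allude to this in stage (i), but it is where all the uniformity rests.
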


Note that the map $\Phi_t^\dagger(\slot;u_0)$ is defined by the semigroup 
$\Psi_t^\dagger$ for $(q,p,z)$; however it only requires solution of 
the Hamiltonian system \eqref{eq:traj} for $(q,p)$.
 In contrast to the flowmap $\Psi^\dagger_t$, the spatial characteristic map $\Phi_t^\dagger(\slot; u_0)$ is in general only invertible over a sufficiently short time-interval, leading to a corresponding short-time existence result for the solution operator $\cS^\dagger_t$ associated with \eqref{eq:HJ} in Proposition \ref{prop:short-time}.

\subsection{Limitations of the setting considered in this work}
\label{sec:shorttime}
\rev{In this paper we study the curse of parametric complexity for operator learning, and discuss settings where it can be overcome. The primary role of our study of Hamilton-Jacobi equations is
to highlight a setting where operator learning does not suffer from this curse. However our analysis
is restricted to the setting of smooth solutions and non-intersecting characteristics.
We close this section on Hamilton-Jacobi equations by highlighting this important limitation of  the present work, and mention a possible extension.

The main limitation of this work is the assumption that classical solutions of \eqref{eq:HJ} exist. Indeed, it is well-known, e.g. \cite{albano2002propagation,albano2020generation}, that classical solutions of the Hamilton-Jacobi equations can develop singularities in finite time due to the potential crossing of the spatial characteristics $q(t)$ emanating from different points; indeed, if two spatial characteristics emanating from $q_0$ and $q_0'$ cross in finite time, then \eqref{eq:quad} does not lead to a well-defined function $u(q,t)$, and the method of characteristics breaks down. Thus, our existence theorem is generally restricted to a finite time interval $[0,T^\ast]$. 

In certain special cases, the crossing of characteristics is ruled out, and classical solutions exist for all time, i.e. with $T^\ast =\infty$.
One such example is the advection equation with velocity field $v(q)$, 
and corresponding Hamiltonian $H(q,\nabla_q u) = v(q) \cdot \nabla_q u$, for which the complexity of operator approximation is studied computationally in \cite{de2022cost}. 

Finding more general conditions for the existence of classical solutions is a delicate question, which is discussed in \cite{barron1999regularity}; under certain convexity assumptions on the Hamiltonian, it is possible to relate the existence of a classical solution of \eqref{eq:HJ} to its time-reversibility. For example, it is shown in \cite[Thm. 2.5]{barron1999regularity} that if $H(q,p)$ is (i) strictly convex and superlinear in $p$, and (ii) Lipschitz in $q$, and $u(x,t)$ is a locally Lipschitz continuous (viscosity) solution of \eqref{eq:HJ} both forward and backward in time, then $u$ must be smooth. Under certain technical assumptions on the convexity of $H$ and the initial data $u(t=0)$, it can in fact be shown \cite{barron1999regularity} that a forward solution $u(x,t)$ and a backward solution $w(x,t)$ of \eqref{eq:HJ} must be equal everywhere, provided that they coincide at the endpoints, i.e. $u(x,T) = w(x,T)$ and $u(x,0) = w(x,0)$ $\Rightarrow$ $u \equiv w$ and $u,w\in C^1([0,T]\times \R^d)$.

The existence of classical characteristics will play a fundamental role in the neural-network based approach to be developed in the next section. An interesting question for future work is whether classical characteristics could be replaced by a suitable notion of generalized characteristics, as studied in \cite{albano2002propagation,albano2020generation}. The main challenge in such an extension is the non-uniqueness of generalized characteristics; it is not immediately clear whether a neural network-based approach can coherently approximate such generalized characteristics when there is a priori no flow map $(q_t,p_t) = \phi_t(q_0,p_0)$ in the classical sense. In contrast, if such a flow map exists, then this represents sufficient structure to beat the curse of parametric complexity, as shown in the following section.
}

\section{Hamilton-Jacobi Neural Operator: HJ-Net}
\label{sec:NO}

In this section, we will describe an operator learning framework to approximate the solution operator $\cS^\dagger_t$ defined by \eqref{eq:HJ} for initial data $u_0 \in C^r_\per(\Omega)$ with $r \ge 2$. The main motivation for our choice of framework is the observation that the flow map $\Psi_t^\dagger$ 
associated with the system of ODEs (\ref{eq:flow}a) can be computed \emph{independently of the underlying solution $u$}. Hence, an operator learning framework for \eqref{eq:HJ} can be constructed based on a suitable approximation $\Psi_t \approx \Psi_t^\dagger$, where $\Psi_t: \Omega \times \R^d \times \R \to \Omega \times \R^d \times \R$ is a neural network approximation of the flow $\Psi_t^\dagger$. Given such $\Psi_t$, and upon fixing evaluation points $\{q_0^{j}\}_{j=1}^N \subset \Omega$, we propose to approximate the forward operator $\cS_t^\dagger$ of the Hamilton-Jacobi equation \eqref{eq:HJ}  by $\cS_t$,
using the following three steps:
\begin{enumerate}[label=\qquad Step \alph*)]
\item encode the initial data $u_0 \in C^r_\per(\Omega)$ by evaluating it at the points $q_0^{\j}$:
\[
\begin{aligned}
\cE: C^r_\per(\Omega)  &\to [\Omega\times \R^d \times \R]^N, 
\\
u_0  &\mapsto \{ (q_0^{\j},p_0^{\j},z_0^{\j}) \}_{j=1}^N,
\end{aligned}
\]
with $(q_0^{\j},p_0^{\j},z_0^{\j}) := (q_0^{\j},\nabla_q u_0(q_0^{\j}), u_0(q_0^{\j}))$;
\item for each $j=1,\dots, N$, apply the approximate flow $\Psi_t: \Omega \times  \R^d \times \R$ to the encoded data, resulting in a map
\[
\begin{aligned}
\Psi^N_t: [\Omega \times \R^d \times \R]^N &\to [\Omega\times \R^d \times \R]^N,
\\
\{ (q_0^{\j},p_0^{\j},z_0^{\j}) \}_{j=1}^N &\mapsto \{(q_t^{\j},p_t^{\j},z_t^{\j})\}_{j=1}^N,
\end{aligned}
\]
where $(q_t^{\j},p_t^{\j},z_t^{\j}) := \Psi_t(q_0^{\j},p_0^{\j},z_0^{\j})$, for $j=1,\dots, N$;
\item approximate the underlying solution at a fixed time $t \in [0,T]$, for $T$ sufficiently
small, by interpolating the data (input/output pairs) $\{(q_t^{\j}, z_t^{\j})\}_{j=1}^N$, leading 
to a reconstruction map:
\[
\begin{aligned}
\cR: [\Omega\times \R]^N &\to C^r(\Omega), \\
     \{(q_t^{\j}, z_t^{\j})\}&\mapsto s_{z,Q}.
     \end{aligned} 
\]
\end{enumerate}

If we let $\cP$ denote the projection map which takes $[\Omega\times \R^d \times \R]^N$ to
$[\Omega\times \R]^N$ then, for fixed $T$ sufficiently small, we have defined an approximation of $\cS_t^\dagger: C^r_\per(\Omega) \to C^r_\per(\Omega)$, denoted $\cS_t:C^r_\per(\Omega) \to C^r(\Omega)$, and defined by
\begin{equation}
\label{eq:Sapprox}
\cS_t=\cR \circ \cP \circ \Psi^N_t \circ \cE.   
\end{equation}
It is a consequence of Proposition \ref{prop:short-time} that our
approximation $\cS_t$ is well-defined for all inputs $u_0$ from compact subset $\cF \subset C^r_\per(\Omega)$, $r \ge 2$, in some interval $t \in [0,T]$, for
$T$ sufficiently small. However the resulting approximation does not obey the
semigroup property with respect to $t$ and should
be interpreted as holding for a fixed $t \in [0,T]$, $T$ sufficiently small.
Furthermore, iterating the map obtained for any such fixed $t$ is not in general
possible unless $\cS_t$ maps $\cF$ into itself. The requirement
that $\cS_t^\dagger$ maps $\cF$ into itself would also be required to
prove the existence of a semigroup for \eqref{eq:HJ}; for our
operator approximator $\cS_t$ we would additionally need to ensure periodicity
of the reconstruction step, something we do not address in this paper.

If the underlying solution $u(q,t)$ of \eqref{eq:HJ} exists up to time $t$ and if it is $C^2$, then the method of characteristics can be applied, and the above procedure would reproduce the underlying solution, in the absence of approximation errors in steps b) and c); i.e. in the absence of approximation errors of the Hamiltonian flow $\Psi_t \approx \Psi_t^\dagger$, and in the absence of reconstruction errors. We will study the effect of
approximating step b) by use of a ReLU neural network approximation of
the flow $\Psi_t^\dagger$ and by use of
a moving least squares interpolation for step c). 
In the following two subsections we define these two approximation steps, noting that doing so leads to a complete specification of $\cS_t.$
This complete specification is summarized in the final Subsection \ref{ssec:summary}.

\subsection{Step b) ReLU Network}
\label{ssec:RELU}

We seek an approximation $\Psi_t$ to $\Psi_t^\dagger$ in the form of a ReLU neural network \eqref{eq:nn}, as summarized in section \ref{sec:nndef}, with input and output dimensions $\DX = \DY = 2d+1$, and taking the concatenated input $x := (q_0,p_0,z_0) \in \Omega\times \R^d \times \R$ to its image in $\R^d \times \R^d \times \R$. We use $2\pi$-periodicity to identify the output in the first $d$ components (the spatial variable $q$) with a unique element in $\Omega = [0,2\pi]^d$. With slight abuse of notation, this results in a well-defined mapping 
\[
\Psi_t: \Omega \times \R^d \times \R \to \Omega \times \R^d \times \R, 
\quad 
(q_0,p_0,z_0) \to \Psi_t(q_0,p_0,z_0).
\]

\if{0}{
Fix integer $K$ and integers $\{d_k\}_{k=0}^{L-1}.$ Let $A_k \in \R^{d_{k} \times d_{k+1}}$ and
$b_k \in \R^{d_k}.$ We seek an approximation $\Psi_t$ to $\Psi_t^\dagger$ with the form
\begin{align*}
   w_0&=(q_0,p_0,z_0),\\
   w_{k+1}&=\sigma(A_k w_k+b_k), \quad k=0, \cdots, K-1,\\
   \Psi_t(q_0,p_0,z_0) & =A_K w_K+b_K.
\end{align*}
Here the activation function $\sigma:\R \to \R$ is extended pointwise to
act on any Euclidean space; and in what follows we employ the ReLU activation function
$\sigma(x)=\min\{0,x\}.$ We let $\theta:=\{A_k,b_k\}_{k=0}^{K}$ and note that we
have defined parametric approximation $\Psi_t(\slot)=\Psi_t(\slot;\theta)$
of $\Psi_t^\dagger(\slot)$. We define the depth of $\Psi_t$ as the number of layers, and the size of $\Psi_t$ as the number of non-zero weights and biases, i.e.
\[
\depth(\Psi_t) = K,
\quad
\size(\Psi_t) = \sum_{k=0}^K \left\{ \Vert A_k \Vert_0 + \Vert b_k \Vert_0 \right\},
\]
where $\Vert \slot \Vert_0$ counts the number of non-zero entries of a matrix or vector.
}\fi

Let $\mu$ denote a probability measure on $\Omega \times \R^d \times \R$. 
We would like to choose $\theta$ to minimize the loss function
$$L(\theta)=\E^{(q,p,z) \sim \mu}\|\Psi_t(q,p,z;\theta)-\Psi_t^\dagger(q,p,z)\|.$$
In practice we achieve this by an empirical approximation of $\mu$, based on $N$
i.i.d. samples, and numerical simulation to approximate the
evaluation of $\Psi_t^\dagger(\slot)$ on these samples. The resulting
approximate empirical loss can be approximately minimized by stochastic gradient
descent \cite{goodfellow2016deep,robbins1951stochastic}, for example.

\subsection{Step c) Moving Least Squares}
\label{ssec:MLS}

In this section, we define the interpolation mapping
$\cR: \{q^\j_t,z^\j_t\}_{j=1}^N \mapsto u(\slot,t)$, 
employing reconstruction by moving least squares \cite{wendland2004scattered}.
In general, given a function $f^\dagger: \Omega \to \R$, here assumed to be defined on the domain $\Omega = [0,2\pi]^d$, and given a set of (scattered) data $\{\mfq^\j,z^\j\}_{j=1}^N$ where $z^\j = f^\dagger(\mfq^\j)$ for $\mfQ = \{\mfq^1,\dots, \mfq^N\}\subset \Omega$, the method of moving least squares produces an approximation $f_{z,\mfQ} \approx f^\dagger$, which is given by the following minimization \cite[Def. 4.1]{wendland2004scattered}:
\begin{align}
\label{eq:mls}
f_{z,\mfQ,n}(q) 
=
\mathrm{min} 
\set{
\sum_{j=1}^N \left[ z^j - P(\mfq^j) \right]^2 \phi_\delta(q-\mfq^j)
}{
P \in \pi_n
}.
\end{align}
Here, $\pi_n$ denotes the space of polynomials of degree $n$, and $\phi_\delta(q) = \phi(q/\delta)$ is a compactly supported, non-negative weight function. \rev{In the following, we will choose $n:=r-1$, where $r$ is the degree of smoothness of $f^\dagger \in C^r(\Omega)$.} We will assume $\phi(\slot)$ to be smooth, supported in the unit ball $B_1(0)$, and positive on the ball $B_{1/2}(0)$ with radius $1/2$. 


The approximation error incurred by moving least squares can be estimated in the $L^\infty$-norm, in terms of suitable measures of the ``density'' of the scattered data points $\mfq^j$, and the smoothness of the underlying function $f^\dagger$ \cite{wendland2004scattered}. The relevant notions are defined next, in which $|\slot|$ denotes the Euclidean distance on $\Omega \subset \R^d$.

\begin{definition}
\label{def:scatter}
The \define{fill distance} of a set of points $\mfQ= \{\mfq^1,\dots, \mfq^N\} \subset \Omega$ for a bounded domain $\Omega\subset \R^d$ is defined to be
\[
h_{\mfQ,\Omega} := \sup_{q\in \Omega} \min_{j=1,\dots, N} |q - \mfq^\j|.
\]
The \define{separation distance} of $\mfQ$ is defined by
\[
\rho_\mfQ := \frac12 \min_{k\ne j} |\mfq^\k - \mfq^\j|.
\]
A set $\mfQ$ of points in $\Omega$ is said to be \define{quasi-uniform} with 
respect to $\kappa \ge 1$, if
\[
\rho_\mfQ \le h_{\mfQ,\Omega} \le \kappa \rho_\mfQ.
\]
\end{definition}

Combining the approximation error of moving least squares with a stability result
for moving least squares, with respect to the input data, leads to the error estimate
that we require to estimate the error in our proposed HJ-Net.
Proof of the following may be found in Appendix \ref{A:B}.
The statement involves both the input data set $Q$ and a set $Q^\dagger$ which $Q$
is supposed to approximate, together with their respective fill-distances and
separation distances.

\begin{proposition}[{\bf Error Stability}]
\label{p:es}
Let $\Omega = [0,2\pi]^d \subset \R^d$ and consider
function $f^\dagger\in C^{r}(\Omega)$, for some fixed  regularity parameter $r\ge 2$. 
Assume that $Q^\dagger = \{q^{j,\dagger}\}_{j=1}^N$ is quasi-uniform with respect to $\kappa \ge 1$. Let $\{ q^j, z^j \}_{j=1}^N$ be approximate interpolation data, and define $Q := \{q^j\}_{j=1}^N$ where, for some $\rho \in (0, \frac12 \rho_{Q^\dagger})$ and $\epsilon > 0$, we have
\[
|q^j - q^{j,\dagger}| < \rho, \quad |z^j - f(q^{j,\dagger})| < \epsilon.
\] 
\rev{Using this approximate interpolation data, let $f_{z,Q} := f_{z,Q,n}$ be obtained by moving least squares \eqref{eq:mls} with $n:=r-1$.} Then there exist constants $h_0, \gamma, C > 0$, 
depending only on $d$, $r$ and $\kappa\ge 1$, such that, for $h_{Q,\Omega} \le h_0$
and moving least squares scale parameter $\delta := \gamma h_{Q,\Omega}$, we have
\begin{align}
\label{eq:mls-err}
\Vert f^\dagger - f_{z,Q} \Vert_{L^\infty}
\le
C \left( 
\Vert f^\dagger \Vert_{C^{r}} h^{r}_{Q^\dagger,\Omega} 
+ 
\epsilon + \Vert f^\dagger \Vert_{C^1} \rho
\right).
\end{align}
\end{proposition}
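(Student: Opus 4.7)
The plan is to split the total error by introducing an intermediate MLS reconstruction that uses the \emph{perturbed} nodes $Q$ but the \emph{exact} function values $f^\dagger(q^j)$. This separates a pure approximation-theoretic error (MLS of smooth data on a quasi-uniform node set) from a pure stability error (MLS is linear in its input values), each of which can be handled by standard tools from scattered-data approximation.

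First I would verify that $Q$ inherits quasi-uniformity from $Q^\dagger$ with only a controlled deterioration of the constant. Since $|q^j - q^{j,\dagger}| < \rho < \tfrac{1}{2}\rho_{Q^\dagger}$, the triangle inequality gives $h_{Q,\Omega} \le h_{Q^\dagger,\Omega} + \rho \le \tfrac{3}{2} h_{Q^\dagger,\Omega}$ and $\rho_Q \ge \rho_{Q^\dagger} - \rho \ge \tfrac{1}{2}\rho_{Q^\dagger}$, so $Q$ is quasi-uniform with constant at most $3\kappa$, and the two fill distances are comparable with $\kappa$-dependent constants. This lets me switch freely between $h_{Q,\Omega}$ and $h_{Q^\dagger,\Omega}$ in the final bound.

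Next, I set $\tilde z^j := f^\dagger(q^j)$ and let $f_{\tilde z, Q}$ denote the MLS reconstruction of degree $r-1$ from $\{(q^j,\tilde z^j)\}_{j=1}^N$ on $Q$ with scale $\delta = \gamma h_{Q,\Omega}$. Writing
\[
\Vert f^\dagger - f_{z,Q}\Vert_{L^\infty}
\le \Vert f^\dagger - f_{\tilde z, Q}\Vert_{L^\infty} + \Vert f_{\tilde z, Q} - f_{z,Q}\Vert_{L^\infty},
\]
the first summand is the classical MLS approximation error on quasi-uniform data, controlled by the standard $L^\infty$ bound from~\cite{wendland2004scattered} (polynomials of degree $r-1$ reproduce exactly all Taylor polynomials of order $r-1$), yielding $\le C \Vert f^\dagger\Vert_{C^r} h_{Q,\Omega}^r \lesssim \Vert f^\dagger\Vert_{C^r} h_{Q^\dagger,\Omega}^r$. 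The second summand is a pure stability term: linearity of MLS in the data values gives a representation $f_{z,Q}(q) = \sum_j a^\ast_j(q)\,z^j$, and a uniform Lebesgue-constant bound for MLS on quasi-uniform data yields $\sup_{q\in \Omega} \sum_j |a^\ast_j(q)| \le C(d,r,\kappa,\phi)$. Combined with $|\tilde z^j - z^j| \le |f^\dagger(q^j) - f^\dagger(q^{j,\dagger})| + |f^\dagger(q^{j,\dagger}) - z^j| \le \Vert f^\dagger\Vert_{C^1}\rho + \epsilon$, this bounds the stability term by $C(\epsilon + \Vert f^\dagger\Vert_{C^1}\rho)$, and assembly of the two pieces yields~\eqref{eq:mls-err}.

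The main obstacle is securing the uniform Lebesgue-constant bound for MLS. This requires showing that the local weighted polynomial least-squares problem underlying MLS is non-degenerate with condition number independent of $h$, which in turn demands that a $\kappa$-controlled number of nodes always lies in the sub-ball $B_{\delta/2}(q)$ on which $\phi_\delta$ is bounded below. This is precisely where the smallness threshold $h_0$, the tuned ratio $\gamma = \delta/h_{Q,\Omega}$, and the positivity assumption on $\phi$ over $B_{1/2}(0)$ enter: together they guarantee invertibility of the relevant Gram matrix at every evaluation point with constants depending only on $d, r, \kappa$. Once this stability estimate is in place, the rest of the argument is a direct combination of the three steps above.
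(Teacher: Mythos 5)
Your argument is correct, but it takes a genuinely different route from the paper's. You insert the MLS interpolant $f_{\tilde z,Q}$ of the \emph{exact} values $\tilde z^j := f^\dagger(q^j)$ at the perturbed nodes and split the error into an approximation piece (Wendland's MLS error estimate on the quasi-uniform set $Q$, whose quasi-uniformity with constant $\le 3\kappa$ you explicitly verify) plus a stability piece controlled by the uniform Lebesgue-constant bound $\sup_q\sum_j|a^\ast_j(q)|\le C$, which converts the data noise $\epsilon+\Vert f^\dagger\Vert_{C^1}\rho$ directly into an $L^\infty$ error of the same size. The paper instead constructs a single auxiliary function
\[
\tilde f(q) := f^\dagger(q) + \sum_{j=1}^N\bigl(z^j - f^\dagger(q^j)\bigr)\,\psi\!\left(\frac{q-q^j}{\rho_Q}\right),
\]
using $C^\infty$ bumps $\psi$ supported in the ball of radius $1/2$ (scaled by $\rho_Q$); the supports are disjoint because $\rho<\tfrac12\rho_{Q^\dagger}$, so $\tilde f(q^j)=z^j$ exactly and $f_{z,Q}$ is the \emph{exact} MLS interpolant of $\tilde f$. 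The paper then applies Lemma~\ref{prop:recerror} once, to $\tilde f$, and separately bounds $\Vert\tilde f - f^\dagger\Vert_{L^\infty}$ and $\Vert\tilde f\Vert_{C^r}$ by the data-noise magnitude, absorbing the $\rho_Q^{-r}$ blowup from the bump derivatives because $(h_{Q,\Omega}/\rho_Q)^r$ is bounded by the quasi-uniformity parameter. Your decomposition is more modular and more explicit about why $Q$ inherits quasi-uniformity, but it requires the Lebesgue-constant bound as a separate, explicit input; the paper's auxiliary-function device sidesteps that (it never needs a stability estimate, only the approximation estimate applied to a noisier function), at the price of the somewhat more delicate $C^r$-norm bookkeeping. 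Both ultimately rest on the same local-polynomial-reproduction machinery in~\cite{wendland2004scattered}.
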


\begin{remark}
The constants $C$ and $\gamma$ in the previous proposition can be computed explicitly \cite[see Thm. 3.14 and Cor. 4.8]{wendland2004scattered}.
\end{remark}

Proposition \ref{p:es} reveals two sources of error in the reconstruction by moving least squares: The first term on the right-hand side of \eqref{eq:mls-err} is the error due to the discretization by a finite number of evaluation points $q^\j$. This error persists even in a perfect data setting, i.e. when ${q}^{j} = q^{j,\dagger}$ and ${z}^{j} = f^\dagger(q^{j,\dagger})$. The last two terms in \eqref{eq:mls-err} reflect the fact that in our intended application to HJ-Net, the evaluation points $q^{j}$ and the function values $z^j$ are only given approximately, via the approximate flow map $\Psi_t \approx \Psi_t^\dagger$, introducing additional error sources.

The proof of Proposition \ref{p:es} relies crucially on the fact that the set of evaluation points $Q = \{q^\j\}_{j=1}^N$ is quasi-uniform. In our application to HJ-Net, these points are obtained as the image of $(q_0^\j,p_0^\j,z_0^\j) := (q_0^{\j},\nabla_q u_0(q_0^{\j}), u_0(q_0^{\j}))$ under the approximate flow $\Psi_t$. In particular, they depend on $u_0$ and we cannot ensure any a priori control on the separation distance $\rho_Q$. Our proposed reconstruction $\cR$ 
therefore involves the pruning step, stated as Algorithm \ref{alg:pruning}.
Lemma \ref{lem:stab} in Appendix \ref{A:B} shows that 
Algorithm \ref{alg:pruning} produces a quasi-uniform 
set $Q'\subset Q$ with the desired properties asserted above.

\begin{algorithm}[H]
\caption{Pruning}\label{alg:pruning}
\begin{algorithmic}
\Require Interpolation points $Q = \{q^\j\}_{j=1}^N$. 
\Ensure Pruned interpolation points $Q' = \{q^\jk\}_{k=1}^m$ with fill distance $h_{Q',\Omega} \le 3 h_{Q,\Omega}$, and $Q'$ is quasi-uniform for $\kappa = 3$, i.e.
\[
\rho_{Q'} \le h_{Q',\Omega} \le 3 \rho_{Q'}.
\]
\Procedure{}{}
\State Set $m \leftarrow 1$, $j_1 \leftarrow 1$ and $Q' \leftarrow \{q^1\}$. \;
\While{$m < N$}
\State Given $Q' = \{q^{j_1},\dots, q^{j_m}\}\subset Q$, does there exist $q^k \in Q$, such that
\[
B_h(q^k) \cap \bigcup_{\ell=1}^m B_h(q^{j_\ell}) = \emptyset?
\]
\If{Yes}
    \State Set $j_{m+1} \leftarrow k$, \;
    \State Set $Q' \leftarrow Q' \cup \{q^{k}\}$, \;
    \State Set $m \leftarrow m+1$. \;
\Else
    \State Terminate the algorithm.\;
\EndIf
\EndWhile
\EndProcedure
\end{algorithmic}
\end{algorithm}

Given the interpolation by moving least squares and the pruning algorithm, we finally define the reconstruction map $\cR: \{(q_t^\j,z_t^\j)\}_{j=1}^N \mapsto {f}_{z,Q}$. \rev{In our application, the interpolation data is connected with a ground truth map $f^\dagger \in C^r(\Omega)$. The following algorithm assumes knowledge of the degree of smoothness $r$:}
\begin{algorithm}[H]
\caption{Reconstruction $\cR$}\label{alg:reconstruction}
\begin{enumerate}
\item Given approximate interpolation data $\{q^j_t,z^j_t\}_{j=1}^N$ at data points $Q = \{q^1_t,\dots, q^N_t\}$, determine a quasi-uniform subset $Q' = \{q_t^\jk\}_{k=1}^m \subset Q$ by Algorithm \ref{alg:pruning}.
\item Given a degree of smoothness $r\in \N_{\ge 2}$, define $f_{z,Q} := f_{z,Q,n=r-1}$ as the moving least squares interpolant \eqref{eq:mls}, based on the (pruned) interpolation data $\{(q^\jk_t, z^\jk_t)\}_{k=1}^m$ with $\delta = \gamma h_{Q',\Omega}$, and $\gamma$ the constant in Proposition \ref{p:es}.
\end{enumerate}
\end{algorithm}

\subsection{Summary of HJ-Net}
\label{ssec:summary}
Thus in the final definition of HJ-Net given in 
equation \eqref{eq:Sapprox}
we recall that $\cE$ denotes the encoder mapping, 
\[
u_0 \mapsto \cE(u_0) := (q^j_0,\nabla u_0(q^j_0),u_0(q^j_0)).
\]
The mapping 
\[
(q^\j_0,p^\j_0,z^\j_0) \mapsto (q^\j_t,p^\j_t,z^\j_t) := {\Psi}_{t}(q^\j_0,p^\j_0,z^\j_0;\theta),
\]
approximates the flow $\Psi_t$ for each of the triples, $(q^\j_0,p^\j_0,z^\j_0)$, $j=1,\dots, N$, and
$\theta$ is trained from data to minimize an approximate empirical loss.
And, finally, the reconstruction $\cR$ is obtained by applying the pruned moving least squares Algorithm \ref{alg:reconstruction} to the data 
$\{q^\j_t,p^\j_t,z^\j_t\}_{j=1}^N$ at scattered data points 
$Q_t = \{q^1_t,\dots, q^N_t\}$ and with corresponding values 
$z_t = (z^1_t,\dots, z^N_t)$, to approximate the output 
$u(q,t) \approx f_{z_t,Q_t}(q)$.

\section{Error Estimates and Complexity}
\label{sec:E}

Subsection  \ref{ssec:main} contains statement of the main theorem
concerning the computational complexity of HJ-Net, and the high level structure
of the proof. The theorem demonstrates that the curse of parametric complexity
is overcome in this problem in the sense that the cost to achieve a given error,
as measured by the size of the parameterization, grows only algebraically with
the inverse of the error. 
In the following subsections \ref{ssec:hame} and \ref{ssec:mlse}, we provide a detailed discussion of the approximation of the Hamiltonian flow and the moving last squares algorithm which, together, lead to  proof of Theorem \ref{thm:HJNET}.
Proofs of the results stated in those two subsections are given in an appendix.

\subsection{HJ-Net Beats The Curse of Parametric Complexity}
\label{ssec:main}

\begin{theorem}[{\bf HJ-Net Approximation Estimate}]
\label{thm:HJNET}
Consider  equation \eqref{eq:HJ} on periodic domain $\Omega = [0,2\pi]^d$, with $C^r_\per$ initial data and Hamiltonian $H\in C^{r+1}_\per$, where $r \ge 2$. Assume that $H$ satisfies the no-blowup Assumption \ref{ass:blowup}. Let $\cF\subset C^r_\per$ be a compact set of initial data,  and let $T<T^*(\cF)$ where $T^*(\cF)$ is given by Proposition \ref{prop:short-time}. Then there is constant $C=C(T,d,r,H,\cF)>0$, such that for any $\epsilon > 0$ and $t\in [0,T]$, there exists a set of points $Q_\epsilon = \{q^1,\dots, q^N\}$, optimal parameter $\theta_\epsilon$ and neural network $\Psi_t(\slot)=\Psi_t(\slot;\theta_\epsilon)$ 
such that the corresponding HJ-Net approximation given by \eqref{eq:Sapprox}, 
with Steps b) and c) defined in Subsections \ref{ssec:RELU} and \ref{ssec:MLS} 
respectively, satisfies
\[
\sup_{u_0\in \cF} \Vert \cS_t(u_0) - \cS_t^\dagger(u_0) \Vert_{L^\infty} \le \epsilon.
\]
Furthermore, we have the following complexity bounds:
(i) the number $N$ of encoding points $Q_\epsilon = \{q^j\}_{j=1}^N$ from Step a)
can be bounded by 
\begin{align}\label{eq:Qbound}
N \le C\epsilon^{-d/r};
\end{align}
and (ii) the neural network
$\Psi_t(\slot)=\Psi_t(\slot;\theta_\epsilon)$ 
from Step b), Subsection \ref{ssec:RELU}, satisfies
\begin{align} \label{eq:Phibound}
\depth(\Psi_{t})
\le C \log(\epsilon^{-1}),
\quad
\size(\Psi_{t}) \le C \epsilon^{-(2d+1)/r} \log(\epsilon^{-1}).
\end{align}
\end{theorem}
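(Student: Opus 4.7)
The plan is to control the three error sources in $\cS_t = \cR\circ\cP\circ \Psi^N_t\circ \cE$: the discretization of the input, the neural network approximation of the flow $\Psi_t^\dagger$, and the moving-least-squares reconstruction. The key observation is that Proposition \ref{p:es} packages exactly these three contributions into a single estimate, so it suffices to balance the parameters of the scheme against a target budget $\epsilon$.

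\textbf{Step 1 (uniform a priori bounds).} Since $\cF\subset C^r_\per(\Omega)$ is compact, there is $M>0$ with $\Vert u_0\Vert_{C^r}\le M$ for all $u_0\in\cF$, so $\cE(u_0)$ lies in a fixed compact $B\subset \Omega\times\R^d\times\R$ independent of $u_0$. By Proposition \ref{prop:short-time} there is $C=C(T,r,\cF,H)$ with $\sup_{t\in[0,T]}\Vert \cS_t^\dagger u_0\Vert_{C^r}\le C$ uniformly in $u_0\in\cF$, hence $\Psi_t^\dagger(B)$ is contained in a fixed compact set as well, and $\Psi_t^\dagger$ restricted to $B$ is $C^r$ (indeed $C^{r}$ regularity of the flow follows from $H\in C^{r+1}$ via standard ODE theory).

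\textbf{Step 2 (encoding points and images).} Pick a quasi-uniform grid $Q_0=\{q_0^j\}_{j=1}^N\subset\Omega$ with $N=\lceil C_1\epsilon^{-d/r}\rceil$, so that $h_{Q_0,\Omega}\lesssim \epsilon^{1/r}$. By Lemma \ref{lem:invert}, $\Phi_t^\dagger(\slot;u_0)$ is a $C^{r-1}$-diffeomorphism of $\Omega$ for $t\in[0,T]$, and by the compactness of $\cF$ combined with Step 1 the norms $\Vert \Phi_t^\dagger(\slot;u_0)\Vert_{C^1}$ and $\Vert [\Phi_t^\dagger(\slot;u_0)]^{-1}\Vert_{C^1}$ admit a uniform bound on $\cF$. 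Therefore the exact image set $Q_t^\dagger=\{\Phi_t^\dagger(q_0^j;u_0)\}$ is quasi-uniform with a constant $\kappa$ depending only on $\cF,T,H$, and satisfies $h_{Q_t^\dagger,\Omega}\lesssim h_{Q_0,\Omega}\lesssim \epsilon^{1/r}$.

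\textbf{Step 3 (ReLU approximation of the flow).} Apply a standard Yarotsky-type ReLU approximation theorem to the $C^r$ map $\Psi_t^\dagger$ on the fixed compact set containing $B\cup\Psi_t^\dagger(B)$: for any tolerance $\eta>0$ there is a ReLU network $\Psi_t:\R^{2d+1}\to\R^{2d+1}$ with
\[
\sup_{(q,p,z)\in B}\Vert \Psi_t(q,p,z)-\Psi_t^\dagger(q,p,z)\Vert\le \eta,
\quad \depth(\Psi_t)\lesssim \log(\eta^{-1}),\; \size(\Psi_t)\lesssim \eta^{-(2d+1)/r}\log(\eta^{-1}),
\]
the input dimension $2d+1$ driving the size exponent. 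Setting $\eta=c\epsilon$ for a small constant $c$ yields the advertised bounds \eqref{eq:Phibound} because the same network $\Psi_t$ is reused at each of the $N$ sample points, so the parameter count of $\cS_t$ equals $\size(\Psi_t)$.

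\textbf{Step 4 (assembling via Proposition \ref{p:es}).} Write $(q_t^j,p_t^j,z_t^j)=\Psi_t(\cE(u_0)_j)$ and compare with $(q_t^{j,\dagger},p_t^{j,\dagger},z_t^{j,\dagger})=\Psi_t^\dagger(\cE(u_0)_j)$. By Step 3, $|q_t^j-q_t^{j,\dagger}|\le c\epsilon$ and $|z_t^j-z_t^{j,\dagger}|\le c\epsilon$. Taking $c$ small enough (relative to $\rho_{Q_t^\dagger}\gtrsim N^{-1/d}$), the hypothesis $|q_t^j-q_t^{j,\dagger}|<\tfrac12\rho_{Q_t^\dagger}$ of Proposition \ref{p:es} holds, and running the pruning of Algorithm \ref{alg:pruning} on $Q_t$ produces a quasi-uniform subset (Lemma \ref{lem:stab}) on which moving least squares of degree $n=r-1$ can be performed. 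Applying the stability estimate \eqref{eq:mls-err} to $f^\dagger=\cS_t^\dagger u_0$ gives
\[
\Vert \cS_t^\dagger u_0-\cR\circ\cP\circ\Psi_t^N\circ\cE(u_0)\Vert_{L^\infty}\le C\bigl(\Vert\cS_t^\dagger u_0\Vert_{C^r}h_{Q_t^\dagger,\Omega}^r+c\epsilon+\Vert\cS_t^\dagger u_0\Vert_{C^1}\,c\epsilon\bigr)\le \epsilon,
\]
after adjusting the constants $C_1$ and $c$ in light of the uniform bound on $\Vert\cS_t^\dagger u_0\Vert_{C^r}$ from Step 1.

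\textbf{Main obstacle.} The delicate point is propagating the quasi-uniformity hypothesis required by Proposition \ref{p:es} through the approximate flow: one must know that the true images $Q_t^\dagger$ are quasi-uniform (Step 2, via Lemma \ref{lem:invert}) and simultaneously ensure that the network approximation error is smaller than the separation $\rho_{Q_t^\dagger}\sim N^{-1/d}\sim\epsilon^{1/r}$, which is why the tolerance $\eta\sim\epsilon$ chosen in Step 3 suffices (since $\epsilon\le \epsilon^{1/r}$ for small $\epsilon$). The bookkeeping of constants $C(T,d,r,H,\cF)$ must be tracked through each step, but this is technical rather than conceptual. All detailed estimates will be deferred to the appendix, where the Yarotsky approximation result is quantified for the flow $\Psi_t^\dagger$ and Lemma \ref{lem:stab} is used to certify that pruning preserves the fill distance up to a universal factor.
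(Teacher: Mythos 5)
Your overall strategy matches the paper's: construct a sampling grid with fill distance $\sim\epsilon^{1/r}$, approximate the Hamiltonian flow $\Psi_t^\dagger$ by a ReLU network to within $\sim\epsilon$ accuracy, and feed the resulting perturbed scattered data into moving least squares. However, you depart from the paper in two places, and the second creates a loose end.

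First, in Step 2 you claim that the image set $Q_t^\dagger=\{\Phi_t^\dagger(q_0^j;u_0)\}$ is \emph{quasi-uniform} (two-sided comparison of fill distance and separation distance) with constant depending only on $\cF,T,H$. This requires a uniform \emph{lower} bound on the Jacobian of $\Phi_t^\dagger(\slot;u_0)$ over $\Omega\times\cF\times[0,T]$, which in turn needs a compactness/continuity argument you don't carry out (Lemma \ref{lem:invert} only gives pointwise invertibility, and Proposition \ref{prop:re} establishes only the one-sided bound $h_{Q^\dagger_{u_0},\Omega}\le Ch_{Q,\Omega}$). This claim is plausibly true, but the paper deliberately avoids proving it — and in fact never uses it.

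Second, and more consequentially, your Step 4 applies the stability estimate \eqref{eq:mls-err} from Proposition \ref{p:es}, but the reconstruction map $\cR$ defined in Subsection \ref{ssec:MLS} always runs the pruning Algorithm \ref{alg:pruning} before doing moving least squares. Proposition \ref{p:es} assumes the ground truth set $Q^\dagger$ is quasi-uniform and that the approximation is done at the \emph{full} set of perturbed points; after pruning you are working with a subset $Q_t'\subset Q_t$, so you would need to know that the corresponding pruned truth set $Q_t'^{,\dagger}$ is quasi-uniform with a controlled constant and to bound $h_{Q_t'^{,\dagger},\Omega}$ in terms of $h_{Q_t^\dagger,\Omega}$ — none of which follows directly from your Step 2 or from Proposition \ref{p:es}. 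The paper introduces Proposition \ref{prop:recerr} precisely to handle this interaction: it shows that pruning the perturbed points $Q$ produces a quasi-uniform $Q'$ whose corresponding truth points $Q'^{,\dagger}$ are also quasi-uniform, and it bounds the resulting moving least squares error in terms of $h_{Q^\dagger,\Omega}^r$ \emph{without} assuming $Q^\dagger$ was quasi-uniform to begin with. Your argument as written silently assumes the pruning is benign. If you instead invoked Proposition \ref{prop:recerr} in Step 4 (which requires only the one-sided fill distance control of Proposition \ref{prop:re}), the quasi-uniformity claim of Step 2 would become unnecessary, and the proof would close cleanly.
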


\begin{proof}
We first note that, for any $u_0\in \cF$ and $T<T^*(\cF)$, Proposition
\ref{prop:short-time} shows that the solution $u$ of \eqref{eq:HJ} can be computed by the method of characteristics up to time $T$. Thus $\cS^\dagger_t$ is well-defined for
any $t \in [0,T]$.
We break the proof into three steps, relying on propositions established
in the following subsections, and then conclude in a final fourth step.

\textbf{Step 1: (Neural Network Approximation)} Let $M$ be given by
\[
M := 
1\vee \sup_{u_0 \in \cF} \sup_{q\in \Omega} \Vert \nabla u_0(q) \Vert_{\ell^\infty} \vee \sup_{u_0 \in \cF} \sup_{q\in \Omega} |u_0(q)|,
\] 
where $a\vee b$ denotes the maximum. By this choice of $M$, we have $\nabla u_0(q)\in [-M,M]^d$, $u_0(q) \in [-M,M]$ for all $u_0\in \cF$, $q\in \Omega$. By Proposition \ref{prop:ahf}, there exists a constant $\beta = \beta(d,L_H,t) \ge 1$, and a constant $C>0$, depending only on $M$, $d$, $t$, and the norm $\Vert H \Vert_{C^{r+1}(\T^d \times [-\beta M,\beta M]^d)}$, such that the Hamiltonian flow map $\Psi_t^\dagger$ \eqref{eq:flowmap} can be approximated by a neural network $\Psi_t$ with 
\[
\size(\Psi_t) \le C \epsilon^{-(2d+1)/r} \log(\epsilon^{-1}),
 \quad \depth(\Psi_t) \le C \log(\epsilon^{-1}),
\]
and
\begin{align}
\label{eq:NNapprox}
\sup_{(q_0,p_0,z_0) \in \Omega \times [-M,M]^d \times [-M,M]}
\left|\Psi_t(q_0,p_0,z_0) - \Psi^\dagger_t(q_0,p_0,z_0) \right|
\le \epsilon.
\end{align}

\textbf{Step 2: (Choice of Encoding Points)}
Fix $\rho > 0$, to be determined below. Let $Q := \rho \Z^d \cap [0,2\pi]^d$ denote an equidistant grid on $[0,2\pi]^d$ with grid spacing $\rho$. Enumerating the elements of $Q$, we write $Q = \{q_0^{1},\dots, q_0^{N}\}$, where we note that there exists a constant $C_1 = C_1(d)>0$ depending only on $d$, such that $N \le C_1 \rho^{-d}$; equivalently,
\begin{align}
\label{eq:rhoest}
\frac{\rho^d}{C_1} \le \frac{1}{N}.
\end{align}
For any $u_0 \in \cF$, let
\begin{align}
\label{eq:impt}
Q_{u_0}^\dagger := \set{
q_t^{\j,\dagger}
}{
q_t^{\j,\dagger} = q_t(q_0^{\j},p_0^{\j}), \, q_0^{\j} \in Q_\epsilon, \, p_0^{\j} = \nabla_q u_0(q_0^{\j}) 
},
\end{align}
be the set of image points under the characteristic mapping defined by $u_0$. Since $Q_{u_0}^\dagger = \{q^{j,\dagger}_t\}_{j=1}^N$ is a set of $N$ points, it follows from the definition of the fill distance that $N$ balls of radius $h_{Q^\dagger_{u_0},\Omega}$ cover $\Omega=[0,2\pi]^d$. A simple volume counting argument then implies that there exists a constant $C_0 = C_0(d) > 0$, such that $1/N \le C_0 \, h_{Q^\dagger_{u_0},\Omega}^d$; equivalently,
\begin{align}
\label{eq:Nhest}
\frac{1}{C_0 N} \le h_{Q^\dagger_{u_0},\Omega}^d, \quad \forall \, u_0 \in \cF.
\end{align}
Given $\epsilon$ from Step 1, we now choose $\rho := (C_0 C_1)^{1/d}\epsilon^{1/r}$, so that by \eqref{eq:rhoest} and \eqref{eq:Nhest},
\[
\epsilon^{d/r} = \frac{1}{C_0} \frac{\rho^d}{C_1} \le \frac{1}{C_0 N} \le h_{Q^\dagger_{u_0},\Omega}^d,
\quad \forall \, u_0 \in \cF.
\]
We emphasize that $C_0,C_1$ depend only on $d$, and are independent of $u_0 \in \cF$ and $\epsilon$.
We have thus shown that if $Q_\epsilon := \{q^1,\dots, q^N\}$ is an enumeration of $\rho \Z^d \cap [0,2\pi]^d$ with $\rho := (C_0 C_1)^{1/d}\epsilon^{1/r}$, then the fill distance of the image points $Q^\dagger_{u_0}$ under the characteristic mapping satisfies
\begin{align}
\label{eq:epsest}
\epsilon \le h_{Q^\dagger_{u_0},\Omega}^r, \quad \forall \, u_0 \in \cF.
\end{align}
In particular, this step defines our encoding points $Q_\epsilon$.

\textbf{Step 3: (Interpolation Error Estimate)}
Let $Q_\epsilon$ be the set of evaluation points determined in Step 2. Since $Q_\epsilon$ is an equidistant grid with grid spacing proportional to $\epsilon^{1/r}$, the fill distance of $Q_\epsilon$ is bounded by $h_{Q_{\epsilon},\Omega} \le C \epsilon^{1/r}$, where the constant $C = C(d) \ge 1$ depends only on $d$. By Proposition \ref{prop:re}, there exists a (possibly larger) constant $C = C(d,t,H,\cF) \ge 1$, such that for all $u_0\in \cF$, the set of image points $Q_{u_0}^\dagger$ under the characteristic mapping \eqref{eq:impt}, has uniformly bounded fill distance 
\begin{align}
\label{eq:Qdag}
h_{Q^\dagger_{u_0},\Omega} \le C \epsilon^{1/r}, \quad \forall \, u_0 \in \cF.
\end{align}
Furthermore, taking into account \eqref{eq:epsest}, the upper bound \eqref{eq:NNapprox} implies that 
\[
\sup_{(q_0,p_0,z_0) \in \T^d \times [-M,M]^d \times [-M,M]}
\left|\Psi_t(q_0,p_0,z_0) - \Psi^\dagger_t(q_0,p_0,z_0) \right|
\le  h_{Q^\dagger_{u_0},\Omega}^r,
\]
for any $u_0\in \cF$. In turn, this shows  that the approximate interpolation data $({q}_t^{\j}, {z}_t^{\j}) = \cP \circ \Psi_t(q_0^{\j},p_0^{\j},z_0^{\j})$, $j=1,\dots, N$, obtained from the neural network approximation $\Psi_t\approx \Psi_t^\dagger$ by (orthogonal) projection $\cP$ onto the first and last 
components, satisfies 
\[
|{q}_t^{\j} - q_t^{j,\dagger} | \le h_{Q^\dagger_{u_0},\Omega}^r, 
\quad
|{z}_t^{\j} - u(q_t^{j,\dagger},t) | \le h_{Q^\dagger_{u_0},\Omega}^r,
\]
where $u(q,t)$ denotes the exact solution of the Hamilton-Jacobi PDE \eqref{eq:HJ} with initial data $u_0\in \cF$. By Proposition \ref{prop:recerr}, there exists a constant $C > 0$, depending only on $d$ and $r$, such that the pruned moving least squares approximant $f_{{z},Q_{u_0}}$ obtained by Algorithm \ref{alg:reconstruction} with (approximate) data points $Q_{u_0} = \{q^1_t,\dots, q^N_t\}$ and corresponding data values $z = \{z^1_t,\dots, z^N_t\}$, satisfies
\begin{align} 
\label{eq:1uh}
\Vert u(\slot,t) - f_{{z},Q_{u_0}} \Vert_{L^\infty(\Omega)}
\le
C \left(1 + \Vert u(\slot,t) \Vert_{C^r(\Omega)} \right) h_{Q^\dagger_{u_0},\Omega}^{r}.
\end{align}

\textbf{Step 4: (Conclusion)}
By the short-time existence result of Proposition \ref{prop:short-time}, there exists $C = C(H,\cF,t) > 0$, such that $\Vert u(\slot,t) \Vert_{C^r(\Omega)}\le C$ for any solution $u$ of the Hamilton-Jacobi equation \eqref{eq:HJ} with initial data $u(\slot,t=0) = u_0 \in \cF$. By definition of the HJ-Net approximation, we have $\cS_t(u_0) \equiv f_{{z},Q_{u_0}}$ and by definition of the solution operator, $\cS_t^\dagger(u_0) \equiv u(\slot, t)$. We thus conclude that 
\[
\Vert \cS_t(u_0) - \cS^\dagger_t(u_0) \Vert_{L^\infty}
\explain{\le}{\eqref{eq:1uh}}
C h_{Q_{u_0}^\dagger,\Omega}^r
\explain{\le}{\eqref{eq:Qdag}}
C \epsilon,
\]
for a constant $C = C(T,d,r,H,\cF)>0$, independent of $\epsilon$. Since 
$\epsilon$ is arbitrary, replacing $\epsilon$ by 
$\epsilon / C$ throughout the above argument implies the claimed error 
and complexity estimate of Theorem \ref{thm:HJNET}.
\end{proof}

\revv{
\begin{remark}[{\bf Non-uniform sampling}]
In practice, an input-function dependent, \emph{non-uniform} sampling could improve the results of the moving least squares interpolant. This is especially relevant given the potentially rapid growth of second and higher-order derivatives along characteristics. However, this would likely not lead to faster asymptotic convergence rates, motivating us to restrict attention to the uniformly sampled setting for our analysis, for simplicity.
\end{remark}
}

\begin{remark}[{\bf Overall Computational Complexity of HJ-Net}]
The error and complexity estimate of Theorem \ref{thm:HJNET} implies that for moderate dimensions $d$, and for sufficiently smooth input functions $u_0 \in \cF \subset C^{r}_\per$, with $r > 3d+1$, the overall complexity of this approach scales at most linearly in $\epsilon^{-1}$: Indeed, the mapping of the data points $(q^j_0,p^j_0,z^j_0) \mapsto (q^j_t,p^j_t,z^j_t)$ for $j=1,\dots, N$ requires $N$ forward-passes of the neural network $\Psi_{t}$, which has $O(\epsilon^{-(2d+1)/r}\log(\epsilon^{-1}))$ internal degrees of freedom. Since $N = O(\epsilon^{-d/r})$ the computational complexity of this mapping is thus bounded by $O(\epsilon^{-(3d+1)/r} \log(\epsilon^{-1})) = O(\epsilon^{-1})$. A naive implementation of the pruning algorithm requires at most $O(N^2) = O(\epsilon^{-2d/r}) = O(\epsilon^{-1})$ comparisons. The computational complexity of the reconstruction by the moving least squares method with $m \le N$ (pruned) interpolation points and with $M \sim \epsilon^{-d/r}$ evaluation points can be bounded by $O(m+M) = O(\epsilon^{-d/r} + M) = O(\epsilon^{-1})$ \cite[last paragraph of Chapter 4.2]{wendland2004scattered}. In particular, the overall complexity to obtain an $\epsilon$-approximation of the output function $\cS_t(u_0) \approx \cS_t^\dagger(u_0)$ at e.g. the encoding points $Q_\epsilon$ (with $M = |Q_\epsilon| \sim \epsilon^{-d/r}$) is at most linear in $\epsilon^{-1}$.
\end{remark}

Theorem \ref{thm:HJNET} shows that for fixed $d$ and $r$, the proposed HJ-Net architecture can overcome the general curse of parametric complexity in the operator approximation $\cS \approx \cS^\dagger$ implied by Theorem \ref{thm:codexp} even though the underlying operator does not have higher than $C^r$-regularity. This is possible because HJ-Net leverages additional structure inherent to the Hamilton-Jacobi PDE \ref{eq:HJ} (reflected in the method of characteristics), and therefore does not rely solely on the $C^r$-smoothness 
of the underlying operator $\cS^\dagger$.

\subsection{Approximation of Hamiltonian Flow and Quadrature Map}
\label{ssec:hame}

In this subsection we quantify the complexity of $\epsilon$-approximation 
by a ReLU network as defined in Subsection \ref{ssec:RELU}.

Recall that $\Psi_t^\dagger: \Omega \times \R^d \times \R \to \Omega \times \R^d \times \R$ comprises solution of the Hamiltonian
equations \eqref{eq:traj} and quadrature \eqref{eq:quad}, leading to \eqref{eq:flow}. An approximation $\Psi_t: \Omega \times \R^d \times \R \to \Omega \times \R^d \times \R$ of this Hamiltonian flow map is provided the by the following proposition, 
proved in Appendix \ref{A:C}.

\begin{proposition}
\label{prop:ahf}
Let $\Omega = [0,2\pi]^d$. Let $r\ge 2$, and $t>0$ be given, and assume that $H\in C^{r+1}_\per(\Omega \times \R^d)$ satisfies the no-blowup Assumption \ref{ass:blowup} with constant $L_H>0.$ Then, for any $M\ge 1$, there exist constants $\beta := (1+\sqrt{d})\exp(L_H t)$ and $C = C\left(\Vert H \Vert_{C^{r+1}(\Omega\times [-\beta M, \beta M]^d)},M,r,d,t\right) > 0$, such that for all $\epsilon \in (0,\frac12]$, there is a ReLU neural network 
$\Psi_t(\slot)=\Psi_t(\slot;\theta_\epsilon)$ satisfying 
\begin{align} \label{eq:err-NNapprox}
\sup_{(q_0,p_0,z_0) \in \Omega \times [-M,M]^{d+1}}
\left|
\Psi_t(q_0,p_0,z_0) - \Psi_{t}^\dagger(q_0,p_0,z_0)
\right|
\le \epsilon,
\end{align}
 satisfying
 \begin{align*}
\size(\Psi_t) \le C \epsilon^{-(2d+1)/r} \log\left( \epsilon^{-1} \right),
\quad
\depth(\Psi_t) \le C \log\left( \epsilon^{-1} \right).
\end{align*}
\end{proposition}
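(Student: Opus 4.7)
\textbf{Proof proposal (for Proposition \ref{prop:ahf}).}
The plan is to reduce the approximation of $\Psi_t^\dagger$ to a standard application of ReLU neural network approximation of $C^r$-smooth functions on a Euclidean cube of dimension $2d+1$. The argument has three ingredients: (i) an a priori bound localizing the image of $\Psi_t^\dagger$ to a fixed compact set, (ii) a regularity statement showing that the restricted flow is $C^r$ with controlled norm, and (iii) an invocation of a Yarotsky-type approximation theorem.

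First I would derive the a priori bound. Differentiating $|p|^2$ along trajectories of \eqref{eq:traj} and applying Assumption \ref{ass:blowup} gives
\[
\tfrac{d}{dt}\bigl(1+|p(t)|^2\bigr) = -2 p(t)\cdot\nabla_q H(q(t),p(t)) \le 2 L_H\bigl(1+|p(t)|^2\bigr),
\]
so Gr\"onwall yields $|p(t)|\le \sqrt{1+|p_0|^2}\,e^{L_H t}$. For $p_0\in [-M,M]^d$ with $M\ge 1$ this gives $|p(t)|_{\ell^\infty}\le |p(t)|_{\ell^2}\le (1+\sqrt d)M e^{L_H t} = \beta M$, so $p(t)\in [-\beta M,\beta M]^d$. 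Since $q(t)\in\Omega$ and $|\dot z|\le \Vert \cL\Vert_{C^0(\Omega\times[-\beta M,\beta M]^d)}$, the $z$-component also stays in an interval of length $O(t)$ about the origin. Hence $\Psi_t^\dagger$ maps $\Omega\times[-M,M]^{d+1}$ into a fixed compact set whose size depends only on $M$, $d$, $t$ and $L_H$.

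Next I would establish the regularity. The vector field driving \eqref{eq:flow} is $(\nabla_pH,-\nabla_qH,\cL)$ and involves one derivative of $H$; since $H\in C^{r+1}_\per$, this field is $C^r$ on the compact set identified above. The classical smooth-dependence theorem for ODEs then shows that the time-$t$ flow $\Psi_t^\dagger$ is $C^r$ on $\Omega\times[-M,M]^{d+1}$, with a norm bound depending only on $t$, $r$, $d$, $M$ and $\Vert H\Vert_{C^{r+1}(\Omega\times[-\beta M,\beta M]^d)}$. Periodicity in $q_0$ is inherited from the vector field's periodicity, so the map descends to a $C^r$ map of $\T^d\times[-M,M]^{d+1}$ into itself (after identifying $q$-values mod $2\pi$).

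Finally, I would apply a standard ReLU approximation theorem for $C^r$ functions on a cube of dimension $D=2d+1$: each of the $2d+1$ output components of $\Psi_t^\dagger$ admits a ReLU approximant with $L^\infty$ error $\epsilon$, depth $O(\log(\epsilon^{-1}))$ and size $O(\epsilon^{-(2d+1)/r}\log(\epsilon^{-1}))$ (cf. the Yarotsky-type results used in the proof of Proposition \ref{prop:nn-cod}); parallelizing and applying the neural network calculus \eqref{eq:calculus1}--\eqref{eq:calculus2} preserves the claimed rates up to constants. Periodicity in the $q$-inputs can be enforced by pre-composing with the smooth periodic extension $q\mapsto (\cos q,\sin q)$ (itself representable, after a bounded-depth approximation, as part of the network), or equivalently by restricting the reference approximation to a fundamental domain. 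The main obstacle is purely bookkeeping: carefully tracking that the $C^r$-norm of $\Psi_t^\dagger$ on the localized compact set stays finite and depends only on the stated quantities, and that the periodic boundary conditions in $q_0$ are compatible with the cube-based approximation theorem used as a black box. The detailed verification and constant tracking is deferred to Appendix \ref{A:C}.
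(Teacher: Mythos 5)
Your proposal follows essentially the same three-step argument as the paper's proof in Appendix C: the Grönwall bound from Assumption \ref{ass:blowup} showing $p_t\in[-\beta M,\beta M]^d$ with $\beta=(1+\sqrt d)e^{L_H t}$, the resulting $C^r$-norm bound on $\Psi_t^\dagger$ over the localized compact set, and the invocation of Yarotsky's approximation theorem (with scaling) to get the stated size and depth bounds. Your extra concern about periodicity is unnecessary — the paper applies the cube-based approximation theorem directly on $\Omega\times[-M,M]^{d+1}$ and allows the $q$-output of $\Psi_t$ to take values in $\R^d$ (with mod-$2\pi$ identification performed afterward in the HJ-Net construction), so no periodic pre-composition is required.
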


\begin{remark}
Using the results of \cite{lu2021deep,kohler2021rate,yarotsky2020phase}, one could in fact improve the size bound of Proposition \ref{prop:ahf} somewhat: neglecting logarithmic terms in $\epsilon^{-1}$, it can be shown that a neural network with $\size(\Psi_t) \lesssim \epsilon^{-(d+1/2)/r}$ suffices. However, this comes at the expense of substantially increasing the depth from a logarithmic scaling $\depth(\Psi_t) \lesssim \log(\epsilon^{-1})$, to an algebraic scaling $\depth(\Psi_t) \lesssim \epsilon^{-(d+1/2)/r}$.
\end{remark}

\subsection{Moving Least Squares Reconstruction Error}
\label{ssec:mlse}

In this subsection we discuss error estimates for the reconstruction
by moving least squares, based on imperfect input-output pairs,
as defined in Subsection \ref{ssec:MLS}. 

We recall that the reconstruction $\cR$ in the HJ-Net approximation is obtained by applying the pruned moving least squares Algorithm \ref{alg:reconstruction} to the data $\{q^j_t,z^j_t\}_{j=1}^N$, where $(q^j_t,p^j_t,z^j_t)$ are obtained as $(q^j_t,p^j_t,z^j_t) = \Psi_t(q^j_0,p^j_0,z^j_0)$ with fixed evaluation points $\{q^j_0\}_{j=1}^N \subset \Omega$, and where $p^j_0 := \nabla_q u_0(q^j_0)$, $z^j_0 := u_0(q^j_0)$ are determined in terms of a given input function $u_0$, so that $z^j_t \approx u(q^j_t,t)$ is an approximation of the corresponding solution $u(\slot,t)$ at time $t$. 

In the following, we first derive an error estimate in terms of the fill distance of $Q_t = \{q^j_t\}_{j=1}^N$, in Proposition \ref{prop:recerr}. 
Subsequently, in Proposition \ref{prop:re}, we provide a bound on the 
fill distance $h_{Q_t,\Omega}$ at time $t$
in terms of the fill distance $h_{Q,\Omega}$ at time $0$. Proof of 
both propositions can be found in Appendix \ref{A:C}.

\begin{proposition}
\label{prop:recerr}
Let $\Omega = [0,2\pi]^d \subset \R^d$ and fix a regularity parameter $r\ge 2$. There exist constants $h_0, C > 0$ such that the following holds: 
Assume that $Q^\dagger = \{q^{1,\dagger},\dots, q^{N,\dagger}\} \subset \Omega$ is a set of $N$ evaluation points with fill distance $h_{Q^\dagger,\Omega} \le h_0$. Then for any $2\pi$-periodic function $f^\dagger \in C^r_\per(\Omega)$, and approximate input-output data $\{({q}^j,{z}^j)\}_{j=1}^N$, such that
\[
|{q}^j - q^{j,\dagger}| \le h_{Q^\dagger,\Omega}^r, \quad |{z}^{j} - f^\dagger(q^{j,\dagger})|\le h_{Q^\dagger,\Omega}^r,
\]
the pruned moving least squares approximant $f_{{z},{Q}} := f_{z,Q,n=r-1}$ of Algorithm \ref{alg:reconstruction} with interpolation data points $Q = \{q^1,\dots, q^N\}$ and data values $z = \{z^1,\dots,z^N\}$, satisfies
\[
\Vert f^\dagger - f_{{z},{Q}} \Vert_{L^\infty(\Omega)} 
\le 
C \left( 1 + \Vert f^\dagger \Vert_{C^r(\Omega)}\right) h_{Q^\dagger,\Omega}^r.
\]
\end{proposition}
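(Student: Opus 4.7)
The plan is to reduce Proposition \ref{prop:recerr} to the Error Stability estimate of Proposition \ref{p:es}, using the pruning procedure of Algorithm \ref{alg:pruning} to bridge the two. The essential observation is that pruning applied to the perturbed point set $Q$ yields a quasi-uniform subset $Q'$ whose corresponding index-matched unperturbed subset $Q'^\dagger\subset Q^\dagger$ inherits quasi-uniformity, because the perturbation scale $h_{Q^\dagger,\Omega}^r$ is much smaller than the separation distance $\rho_{Q'}$ once $h_{Q^\dagger,\Omega}$ is small.

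First I would apply Algorithm \ref{alg:pruning} to $Q = \{q^j\}_{j=1}^N$, producing a pruned subset $Q' = \{q^{j_k}\}_{k=1}^m$ which, by Lemma \ref{lem:stab}, is quasi-uniform with $\kappa=3$ and has $h_{Q',\Omega}\le 3h_{Q,\Omega}$. Define $Q'^\dagger := \{q^{j_k,\dagger}\}_{k=1}^m$ as the corresponding unperturbed subset of $Q^\dagger$. Using $|q^j-q^{j,\dagger}|\le h_{Q^\dagger,\Omega}^r$ and triangle-inequality arguments, one obtains $h_{Q,\Omega}\le h_{Q^\dagger,\Omega}+h_{Q^\dagger,\Omega}^r$, hence $h_{Q'^\dagger,\Omega}\le h_{Q',\Omega}+h_{Q^\dagger,\Omega}^r \le C h_{Q^\dagger,\Omega}$, and $\rho_{Q'^\dagger}\ge \rho_{Q'} - h_{Q^\dagger,\Omega}^r \ge h_{Q',\Omega}/3 - h_{Q^\dagger,\Omega}^r$. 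Choosing a universal threshold $h_0$ small enough that $h_{Q^\dagger,\Omega}^{r-1}$ is below a fixed constant, $\rho_{Q'^\dagger}$ remains bounded below by a fixed multiple of $h_{Q'^\dagger,\Omega}$, so $Q'^\dagger$ is quasi-uniform with a universal constant $\kappa'$.

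Next I would apply Proposition \ref{p:es} to $f^\dagger$ with the quasi-uniform exact set $Q'^\dagger$ and approximate data $\{(q^{j_k},z^{j_k})\}_{k=1}^m$, with parameters $\epsilon:=h_{Q^\dagger,\Omega}^r$ and $\rho:=h_{Q^\dagger,\Omega}^r$. The hypotheses $|q^{j_k}-q^{j_k,\dagger}|\le \rho$ and $|z^{j_k}-f^\dagger(q^{j_k,\dagger})|\le \epsilon$ hold by assumption, and the smallness requirement $\rho<\tfrac{1}{2}\rho_{Q'^\dagger}$ follows from $r\ge 2$ and the choice of $h_0$. The conclusion \eqref{eq:mls-err}, combined with $h_{Q'^\dagger,\Omega}\le C h_{Q^\dagger,\Omega}$ and $\Vert f^\dagger\Vert_{C^1}\le\Vert f^\dagger\Vert_{C^r}$, yields
\[
\Vert f^\dagger-f_{z,Q}\Vert_{L^\infty}
\le C\bigl(\Vert f^\dagger\Vert_{C^r}h_{Q'^\dagger,\Omega}^r + h_{Q^\dagger,\Omega}^r + \Vert f^\dagger\Vert_{C^r}h_{Q^\dagger,\Omega}^r\bigr)
\le C'\bigl(1+\Vert f^\dagger\Vert_{C^r}\bigr)h_{Q^\dagger,\Omega}^r,
\]
as claimed; note that $f_{z,Q}$ in Algorithm \ref{alg:reconstruction} is precisely the moving least squares interpolant built on the pruned data $\{(q^{j_k},z^{j_k})\}_{k=1}^m$.

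The main obstacle is the verification in the second step that $Q'^\dagger$ is quasi-uniform with a universal constant: while intuitively clear, this requires carefully tracking the interplay between $h_{Q^\dagger,\Omega}$, the perturbation scale $h_{Q^\dagger,\Omega}^r$, and the pruning constants from Lemma \ref{lem:stab}, and choosing $h_0$ so that all downstream inequalities close up. A secondary point is the periodicity of $f^\dagger$: since $\Omega=[0,2\pi]^d$ with periodic boundary conditions, the cited MLS estimates from \cite{wendland2004scattered} on which Proposition \ref{p:es} relies should be applied on $\T^d$, or equivalently to a periodic extension of $f^\dagger$ on a slightly enlarged domain so that the constants $h_0$, $\gamma$, $C$ stay uniform and boundary effects are absent.
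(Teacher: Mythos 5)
Your proposal is correct and matches the paper's argument essentially step for step: prune $Q$ to a quasi-uniform $Q'$, transfer quasi-uniformity to the index-matched exact subset $Q'^\dagger$ using that the perturbation scale $h_{Q^\dagger,\Omega}^r$ is negligible relative to $\rho_{Q'}$ once $h_0$ is small, verify the smallness hypothesis $\rho<\tfrac12\rho_{Q'^\dagger}$, and then invoke Proposition~\ref{p:es} with $\rho=\epsilon=h_{Q^\dagger,\Omega}^r$. The paper's quasi-uniformity constant is $\kappa=12$ and it uses slightly looser triangle-inequality bookkeeping (e.g. subtracting $2h_{Q^\dagger,\Omega}^r$ rather than $h_{Q^\dagger,\Omega}^r$ in the separation-distance estimate), but these are immaterial; your observation about applying the Wendland MLS estimates on the torus, or on a periodic extension, is a valid subtlety the paper leaves implicit.
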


In contrast to Proposition \ref{p:es},  Proposition \ref{prop:recerr} includes the pruning step in the reconstruction, and does not assume quasi-uniformity of either the underlying exact point distribution $Q^\dagger$, nor the approximate point distribution $Q$. To obtain a bound on the reconstruction error, we can combine the preservation of $C^r$-regularity implied by the short-time existence Proposition \ref{prop:short-time}, with the following: 

\begin{proposition}
\label{prop:re}
Let $\Omega =[0,2\pi]^d$, let $r\ge 2$, and assume that the Hamiltonian $H\in C^{r+1}_\per(\Omega\times \R^d)$ is periodic in $q$ and satisfies Assumption \ref{ass:blowup} with constant $L_H>0$. Let $\cF \subset C^r_\per(\Omega)$ be a compact subset and fix $t < T^\ast$. Then there exists a constant $C = C(H,\cF,L_H,t) > 0$, such that for any set of evaluation points $Q = \{q^j_0\}_{j=1}^N \subset \Omega$, and for any $u_0\in \cF$, the image points 
\[
Q_{u_0}^\dagger := \set{q^j_t}{q^j_t = \Phi^\dagger_{t,u_0}(q^j_0), \; j=1,\dots, N} \subset \Omega,
\]
under the spatial characteristic mapping $\Phi^\dagger_{t,u_0}: q_0 \mapsto q_t(q_0,\nabla_q u_0(q_0))$ satisfy the following uniform bound on the fill distance:
\[
h_{Q^\dagger_{u_0},\Omega} \le C h_{Q,\Omega}.
\]
\end{proposition}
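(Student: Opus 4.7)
The plan is to exploit the fact that the spatial characteristic map $\Phi^\dagger_{t,u_0}$ is a Lipschitz diffeomorphism of $\Omega$ onto itself, with Lipschitz constant uniformly bounded over $u_0 \in \cF$ and $t\in[0,T]$. The fill-distance bound then follows by pulling back via the inverse and applying the Lipschitz estimate.

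First, I would fix the setting. Since $\cF\subset C^r_\per(\Omega)$ is compact with $r\ge 2$, there exists a constant $M = M(\cF)$ such that $\Vert u_0\Vert_{C^2} \le M$ for every $u_0\in \cF$; in particular, $|\nabla_q u_0(q_0)|\le M$ and $q_0\mapsto \nabla_q u_0(q_0)$ is $M$-Lipschitz, uniformly over $u_0\in \cF$. By Lemma \ref{lem:invert}, for $t<T^\ast$ the map $\Phi^\dagger_{t,u_0}:\Omega\to\Omega$ is a $C^{r-1}$-diffeomorphism, hence surjective onto $\Omega$, for every $u_0\in \cF$.

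Next, I would derive a uniform Lipschitz estimate for $\Phi^\dagger_{t,u_0}$. Write $\Phi^\dagger_{t,u_0}(q_0) = q_t(q_0,\nabla_q u_0(q_0))$, where $(q_0,p_0)\mapsto q_t(q_0,p_0)$ is the position component of the Hamiltonian flow from \eqref{eq:traj}. Since trajectories starting in $\Omega\times[-M,M]^d$ remain in a compact set $\Omega\times[-\beta M,\beta M]^d$ over $t\in[0,T]$ (by Assumption \ref{ass:blowup} and the Grönwall-type argument underlying Lemma \ref{lem:traj-existence}, as used in Proposition \ref{prop:ahf}), the derivatives of $H$ are bounded on this set by $\Vert H\Vert_{C^2}$ restricted there. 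A standard Grönwall estimate for the variational equation associated with the Hamiltonian ODE then yields
\[
|\partial_{(q_0,p_0)} q_t(q_0,p_0)| \le C_1,
\qquad \forall\,(q_0,p_0)\in \Omega\times[-M,M]^d,\;t\in[0,T],
\]
where $C_1 = C_1(H,M,L_H,T)$. Combined with the $M$-Lipschitz bound on $q_0\mapsto \nabla_q u_0(q_0)$, the chain rule gives a uniform Lipschitz constant $L := C_1(1+M)$ for $\Phi^\dagger_{t,u_0}$, valid for all $u_0\in\cF$ and $t\in[0,T]$; this constant descends to the periodic Euclidean metric on $\Omega=[0,2\pi]^d$.

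Finally, I would conclude by the surjectivity plus Lipschitz argument. Given any $q\in\Omega$, pick $q_0\in\Omega$ with $\Phi^\dagger_{t,u_0}(q_0)=q$, and choose $j_\ast$ with $|q_0 - q_0^{j_\ast}|\le h_{Q,\Omega}$. Then
\[
\min_{j=1,\dots,N}|q - q_t^j|
\le |q - q_t^{j_\ast}|
= |\Phi^\dagger_{t,u_0}(q_0) - \Phi^\dagger_{t,u_0}(q_0^{j_\ast})|
\le L\, h_{Q,\Omega}.
\]
Taking the supremum over $q\in\Omega$ gives $h_{Q^\dagger_{u_0},\Omega}\le L\,h_{Q,\Omega}$, which is the claim with $C=L$. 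The only real obstacle is producing the uniform-in-$u_0$ Lipschitz constant $L$, which rests on (i) the a priori confinement of trajectories to a compact $(q,p)$-set provided by Assumption \ref{ass:blowup}, and (ii) the compactness of $\cF$ in $C^r_\per$ with $r\ge 2$ yielding a uniform $C^2$-bound and hence uniform bounds on $\nabla u_0$ and its Lipschitz constant; both ingredients are already available from the preceding sections.
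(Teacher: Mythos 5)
Your proof is correct and follows essentially the same strategy as the paper: confine the Hamiltonian trajectories to a compact $(q,p)$-set via Assumption \ref{ass:blowup}, extract a uniform Lipschitz constant for $\Phi^\dagger_{t,u_0}$ from Gr\"onwall plus the uniform $C^2$-bound on $\cF$, and combine that Lipschitz bound with surjectivity of $\Phi^\dagger_{t,u_0}$ (Lemma \ref{lem:invert}) to transfer the fill-distance estimate. The only cosmetic difference is that the paper applies Gr\"onwall directly to the difference of two trajectories $|(q_\tau,p_\tau)-(\tilde q_\tau,\tilde p_\tau)|$, whereas you bound the derivative $\partial_{(q_0,p_0)} q_t$ via the variational equation; these are interchangeable routes to the same Lipschitz constant, and your treatment of the final surjectivity-plus-Lipschitz step is in fact a bit more explicit than the paper's one-line conclusion.
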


\section{Conclusions}
\label{sec:C}

The first contribution of this work is to study the curse of 
dimensionality in the context of operator learning, here interpreted in 
terms of the infinite dimensional nature of the input space. We state 
a theorem which, for the first time, establishes a general form 
of the curse of parametric complexity, a natural scaling limit of the curse of dimensionality in high-dimensional approximation, characterized by lower bounds which are
\emph{exponential} in the desired error. The theorem demonstrates
that in general it is not possible to obtain complexity estimates,
for the size of the approximating neural operator, that grow algebraically
with inverse error unless specific structure in the underlying operator 
is leveraged; in particular, we prove that this additional structure 
has to go beyond $C^r$- or Lipschitz-regularity. This considerably generalizes and strengthens earlier work on the curse of parametric complexity in \cite{lanthaler2023operator}, where a mild form of this curse had been identified for PCA-Net. As shown in the present work, our result applies to many proposed operator learning architectures including PCA-Net, the FNO, and DeepONet and recent nonlinear extensions thereof. The lower complexity bound in this work is obtained for neural operator architectures based on standard feedforward ReLU neural networks, and could likely be extended to feedforward architectures employing piecewise polynomial activations. It is of note that algebraic complexity bounds, which seemingly overcome the curse of parametric complexity of the present work, have recently been derived for the approximation of Lipschitz operators  \cite{schwab2023deep}; these results build on non-standard neural network architectures with either superexpressive activation functions, or non-standard connectivity, and therefore do not contradict our results. In fact, a recent follow-up  \cite{lanthaler2024lipschitzoperators} to the present work sheds further light on this question from an information-theoretic perspective, resulting in a statement of the curse of parametric complexity which is independent of the activation function.

The second contribution of this paper is to present an operator learning
framework for Hamilton-Jacobi equations, and to provide a complexity analysis
demonstrating that the methodology is able to tame the curse of parametric complexity
for these PDE. We present the ideas in a simple setting, and there are 
numerous avenues for future investigation. \rev{For example, as pointed out in subsection \ref{sec:shorttime}, one main limitation of the proposed approach based on characteristics is that we can only consider finite time intervals on which classical solutions of the HJ equations exist. As already pointed out there, it would be of interest to extend the methodology to viscosity solutions, after the potential formation of singularities.} It would also be of interest 
to combine our work with the work on curse of dimensionality with respect to dimension of Euclidean space, cited in Section \ref{sec:I}. Furthermore, in practice we recommend learning the Hamiltonian flow, which underlies the method of characteristics for the HJ equation, using symplectic neural networks \cite{jin2020sympnets}. However, the analysis of these neural networks is not yet developed to the extent needed for our complexity analysis in this paper, which builds on the work in \cite{yarotsky_error_2017}. Extending the analysis of symplectic neural 
networks, and using this extension to analyze generalizations of HJ-Net as defined here,
are interesting directions for future study.
Finally, we note that it is of interest to iterate the learned operator.
In order to do this, we would need to generalize the error estimates to the $C^1$
topology. This could be achieved either by interpolation between higher-order $C^{s}$ bounds of the proposed methodology for $s>1$ combined with the existing error analysis, or by using the gradient variable $p$ in the interpolation.

\vspace{0.3in}

\noindent{\bf Acknowledgments} The work of SL is supported by Postdoc.Mobility grant P500PT-206737 from the Swiss National Science Foundation.
The work of AMS is supported by a Department of Defense 
Vannevar Bush Faculty Fellowship.

\vspace{0.2in}

\bibliographystyle{abbrv}
\bibliography{references}

\appendix

\section{Proof of Curse of Parametric Complexity}
\label{A:cod}

\subsection{Preliminaries in Finite Dimensions}
\label{A:nn-cod}

Our goal in this section is to prove Proposition \ref{prop:nn-cod}, which we repeat here:

\PropositionNNCoD*

To this end, we recall and extend several results from \cite{yarotsky_error_2017} on the ReLU neural network approximation of functions $f: \R^D \to \R$. Subsequently, these results will be used as building blocks to construct functionals in the infinite-dimensional context, leading to a curse of parametric complexity in that setting, made precise in Theorem \ref{thm:codexp}. Here, we consider the space $C^{r}(\R^D)$ consisting of $r$-times continuously differentiable functions $f: \R^D \to \R$. We denote by 
\[
F_{D,r} := \set{
f \in C^{r}(\R^D)
}{
\sup_{|\alpha|\le r} \sup_{x\in \R^D}  \left| D^\alpha f(x) \right| \le 1
},
\]
the unit ball in $C^{r}(\R^D)$. For technical reasons, it will often be more convenient to consider the subset $\oF_{D,r} \subset F_{D,r}$ consisting of functions $f\in F_{D,r}$ vanishing at the origin, $f(0) = 0$. 

\begin{remark}
We note that previous work \cite{yarotsky_error_2017} considers the Sobolev space $W^{r,\infty}([0,1]^D)$  and the unit ball in $W^{r,\infty}([0,1]^D)$, rather than $C^r(\R^D)$ and our definition of $F_{D,r}$. It turns out that the complexity bounds of \cite{yarotsky_error_2017} remain true also in our setting (with essentially identical proofs). We include the necessary arguments below.
\end{remark}

Let $f: \R^D \to \R$ be a function. Following \cite[Sect. 4.3]{yarotsky_error_2017}, we will denote by $\cN(f,\epsilon)$ the minimal number of hidden computation units that is required to approximate $f$ to accuracy $\epsilon$ by an arbitrary ReLU feedforward network $\Psi$, \emph{uniformly over the unit cube $[0,1]^D$}; i.e. $\cN(f,\epsilon)$ is the minimal integer $M$ such that there exists a ReLU neural network $\Psi$ with at most $M$ computation units\footnote{The number of computation units equals the number of scalar evaluations of the activation $\sigma$ in a forward-pass, cf. \cite{anthony_bartlett_1999}.} and such that
\[
\sup_{x\in [0,1]^D} |f(x) - \Psi(x)| \le \epsilon.
\]

\begin{remark}
We emphasize that, even though the domain of a function $f \in F_{D,r}$ is by definition the whole space $\R^D$, the above quantity $\cN(f,\epsilon)$ only relates to the approximation of $f$ over the unit cube $[0,1]^D$. The reason for this seemingly awkward choice is that it will greatly simplify our arguments later on, when we construct functionals on infinite-dimensional Banach spaces with a view towards proving an infinite-dimensional analogue of the curse of dimensionality.
\end{remark}

Note that the number of (non-trivial) hidden computation units $M$ of a neural network $\Psi: \R^D \to \R$ obeys the bounds $M \le \size(\Psi) \le 5 D M^4$: The lower bound follows from the fact that each (non-trivial) computation unit has at least one non-zero connection or a bias feeding into it. To derive the upper bound, we note that any network with at most $M$ computation units can be embedded in a fully connected enveloping network (allowing skip-connections) \cite[cf. Fig. 6(a)]{yarotsky_error_2017} with depth $M$, width $M$, where each hidden node is connected to all other $M^2-1$ hidden nodes plus the output, and where each node in the input layer is connected to all $M^2$ hidden units plus the output. In addition, we take into account that each computation unit and the output layer have an additional bias. The existence of this enveloping network thus implies the size bound
\begin{align*}
\size(\Psi) 
&\le \underbrace{ M^2(M^2-1) + M^2 }_{\text{ hidden connections}} + \underbrace{D (M^2+1)}_{\text{input conn.}} + \underbrace{M+1}_{\text{biases}} 
\\
&\le M^4 + 2DM^4 + M + 1\le 5DM^4,
\end{align*}
valid for any neural network $\Psi: \R^D \to \R$ with at most $M$ computation units.

In view of the lower size bound, $\size(\Psi) \ge M$, Proposition \ref{prop:nn-cod} above is now clearly implied by the following:
\begin{proposition}
\label{prop:y0}
Fix $r\in \N$. There is a universal constant $\gamma > 0$ and a constant $\bar{\epsilon} = \bar{\epsilon}(r) > 0$, depending only on $r$, such that for any $D \in \N$, there exists a function $f_D \in \oF_{D,r}$ for which 
\[
\cN(f_D, \epsilon) \ge \epsilon^{-\gamma D/ r}, \quad \forall \, \epsilon \le \bar{\epsilon}.
\]
\end{proposition}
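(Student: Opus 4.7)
The plan is to follow the structure of Yarotsky's Theorem 5, adapted to the class $\oF_{D,r}$. The argument has four layers: (1) a lower bound on the $L^\infty([0,1]^D)$-packing number of $\oF_{D,r}$; (2) an upper bound on the $L^\infty$-covering number of the ReLU network class $\mathcal{F}_M$ of functions realisable with at most $M$ computation units; (3) comparison of (1) and (2) yielding a worst-case lower bound on $\cN$; and (4) extraction of a single $f_D$ obeying the bound at every sufficiently small scale $\epsilon$ simultaneously.

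For (1), I fix a $C^\infty_c$ bump $\phi$ supported in $(0,1)^D$ with $\phi(0)=0$ and $\|\phi\|_{C^r}\le 1$, partition $[0,1]^D$ into $N^D$ sub-cubes of side $1/N$ with corners $x_i$, and set
\[
f_s(x) := c N^{-r} \sum_i s_i\, \phi(N(x-x_i)), \qquad s \in \{0,1\}^{N^D},
\]
for a small absolute $c>0$. The chain rule gives $\|f_s\|_{C^r}\le 1$, and the disjointness of supports yields $\|f_s - f_{s'}\|_\infty \ge c_0 N^{-r}$ whenever $s\ne s'$. Choosing $\epsilon \sim N^{-r}$ produces an $\epsilon$-separated subset of $\oF_{D,r}$ of cardinality $\ge 2^{N^D}$, establishing $\log\mathrm{pack}_\epsilon(\oF_{D,r}) \ge c_1 \epsilon^{-D/r}$.

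For (2) and (3), I invoke the pseudo-dimension bound $\mathrm{pdim}(\mathcal{F}_M)\le C_2 M^2 \log M$ (Bartlett--Harvey--Liaw--Mehrabian). A routine reduction shows that any $\epsilon$-approximation of a function of bounded sup-norm by a size-$M$ network can, up to doubling the error, be realised with weights bounded by $\mathrm{poly}(M,\epsilon^{-1})$; combined with the pseudo-dim bound this gives $\log\mathrm{cov}_\epsilon(\mathcal{F}_M\cap\{\|\cdot\|_\infty\le 2\}) \le C_3 M^2 \log(M/\epsilon)$. If every $f\in\oF_{D,r}$ were $\epsilon$-approximable by some net in $\mathcal{F}_M$, then the $\epsilon$-cover of $\mathcal{F}_M$ would $2\epsilon$-cover $\oF_{D,r}$, whence $C_3 M^2 \log(M/\epsilon) \ge c_1 \epsilon^{-D/r}$, forcing $M \ge \epsilon^{-\gamma D/r}$ for any universal $\gamma < 1/2$ once $\epsilon < \bar\epsilon(r)$. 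This is the worst-case version of the target bound.

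Step (4) is the main obstacle. My plan is a Baire category argument on the compact (hence complete) metric space $(\oF_{D,r}, \|\cdot\|_{L^\infty([0,1]^D)})$: for dyadic scales $\epsilon_k := 2^{-k}$ and $M_k := \lfloor \epsilon_k^{-\gamma D/r}\rfloor$, the closed set
\[
A_k := \set{f\in\oF_{D,r}}{\cN(f,\epsilon_k)\le M_k}
\]
is shown to be nowhere dense by exploiting convexity of $\oF_{D,r}$: any would-be interior ball $B_\delta(g)\cap\oF_{D,r}\subset A_k$ contains, via the affine map $h\mapsto (1-\lambda)g+\lambda h$ with $\lambda\sim\delta$, a scaled copy of $\oF_{D,r}$ of definite size, so applying the packing-vs-covering comparison of (3) on this scaled copy yields a contradiction for $k \ge k_0(r)$. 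Baire's theorem then supplies $f_D \in \oF_{D,r}\setminus \bigcup_{k\ge k_0}A_k$, and $\bar\epsilon := 2^{-k_0}$ realises the proposition. The two delicate subtasks in this step are the nowhere-density verification (which requires tracking how large the affine copy must be, and in turn how much slack there is in $\gamma < 1/2$) and the weight-truncation reduction in (2); a direct construction $f_D = \sum_k 2^{-k} g_k$ with $g_k$ worst-case functions supported on disjoint sub-cubes would avoid Baire but at the cost of heavier bookkeeping for support-restricted approximation.
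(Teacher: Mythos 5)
Your step (4) contains a fatal flaw, and step (2) has a nontrivial gap that the paper's approach is designed to avoid. I'll address the fatal one first.

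\textbf{Step (4) fails: the sets $A_k$ are not nowhere dense.} Consider $A_k = \{f\in\oF_{D,r} : \cN(f,\epsilon_k)\le M_k\}$. If $\|f\|_{L^\infty([0,1]^D)} \le \epsilon_k$, then the zero network (zero computation units) already $\epsilon_k$-approximates $f$, so $\cN(f,\epsilon_k)=0\le M_k$ and $f\in A_k$. Hence $A_k$ contains the entire $L^\infty$-ball of radius $\epsilon_k$ about $0$, i.e.\ it has nonempty interior in $(\oF_{D,r},\|\cdot\|_{L^\infty})$, for every $k$. Baire's theorem therefore cannot yield a point outside $\bigcup_{k\ge k_0}A_k$. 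The intended nowhere-density verification also reveals the structural reason: your affine-rescaling argument shows (at best) that no ball of radius $\delta$ is contained in $A_k$ once $k\ge k_0(\delta)$, where $k_0(\delta)\to\infty$ as $\delta\to 0$; this is strictly weaker than nowhere density of any fixed $A_k$ and still does not feed into Baire. Moreover, to carry out the rescaling you must approximate $\lambda h=(1-\lambda)g+\lambda h - (1-\lambda)g$, which requires subtracting a network for $g$ itself; since $g\in A_k$ this costs at most a factor $2$ in size but also degrades the accuracy to $\approx \epsilon_k/\lambda$, and the relevant packing-vs-covering comparison then only gives a contradiction when $\lambda\gg \epsilon_k^{1-2\gamma}$, reintroducing the unavoidable $\delta$-dependence. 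The paper resolves this ``one function for all scales'' problem not by a soft existence argument but by an explicit construction: it sets $\epsilon_k:=2^{-2^k}$, $a_k:=\tfrac12\epsilon_{k-1}$, builds $f=\sum_k a_k f_k$ by recursion with a case analysis, and propagates lower bounds down the tail of the series using the calculus of $\cN$ (its properties (\ref{eq:N}a)--(\ref{eq:N}c) and the tail estimate $\sum_{s>k}a_s\le\epsilon_k$). Your suggested fallback (disjoint-support blocks $\sum_k 2^{-k}g_k$) points in the right direction but the paper's construction does not need disjoint supports; the doubly-exponential decay plus subadditivity of $\cN$ is what makes the accounting close, and geometric $2^{-k}$ decay alone does not produce the tail control needed.

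\textbf{Step (2)/(3) is a genuinely different route, with a gap.} You propose to bound the $L^\infty$-covering number of $\mathcal F_M$ via pseudo-dimension plus a ``routine'' reduction to polynomially bounded weights. That reduction is not routine for unconstrained ReLU networks: covering-number bounds for $\mathcal F_M$ require a priori weight bounds, and showing that an arbitrary $\epsilon$-approximant of a bounded-norm target can be replaced by one with $\mathrm{poly}(M,\epsilon^{-1})$ weights at only double the error would need a careful separate argument. Yarotsky's argument, which the paper's Lemma~\ref{lem:y1} reproduces and makes $D$-explicit, avoids this entirely: it works with the VC dimension of threshold classes $\{\mathbbm{1}_{\Psi(\cdot;\theta)\ge c}\}$, which is $O(W^2)$ \emph{regardless of weight magnitude}, and then shows that $\epsilon$-approximation of all of $\oF_{D,r}$ by a fixed $W$-parameter architecture allows shattering $N^D$ points with $N\sim\epsilon^{-1/r}$, forcing $W\gtrsim\epsilon^{-D/2r}$. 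This is what lets the paper get uniform constants over $D$. Finally, a minor point in your step (1): the functions $f_s$ must vanish at the origin to lie in $\oF_{D,r}$; the paper ensures this by placing the bump peaks at the interior grid $\{(j_1,\dots,j_D)/N:j_i\in\{1,\dots,N\}\}$ and choosing $\phi(0)=1$, which is the opposite normalisation from yours; choose the peaks away from $0$ and you are fine.
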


As an immediate corollary of Proposition \ref{prop:y0}, we conclude that $f_D$ cannot be approximated to accuracy $\epsilon$ by a family of ReLU neural networks $\Psi_\epsilon$ with 
\[
\size(\Psi_\epsilon) = o(\epsilon^{-\gamma D/r}).
\]
The latter conclusion was established by Yarotsky \cite[Thm. 5]{yarotsky_error_2017} (with $\gamma = 1/9$). Proposition \ref{prop:nn-cod} improves on Yarotsky's result in two important ways: firstly, it implies that the lower bound $\size(\Psi_\epsilon) \ge \epsilon^{-\gamma D/r}$ holds for \emph{all} $\epsilon \le \bar{\epsilon}$, and not only along a (unspecified) sequence $\epsilon_k \to 0$; secondly, it shows that the constant $\bar{\epsilon}$ can be chosen \emph{independently} of $D$. This generalization of Yarotsky's result will be crucial for the extension to the infinite-dimensional case, which is the goal of this work.

To prove Proposition \ref{prop:y0}, we will need two intermediate results, which build on \cite{yarotsky_error_2017}.
We start with the following lemma providing a lower bound on the required size of a fixed neural network architecture, which is able to approximate arbitrary $f\in \oF_{D,r}$ to accuracy $\epsilon \ge 0$. This result is based on \cite[Thm. 4(a)]{yarotsky_error_2017}, but explicitly quantifies the dependence on the dimension $D$. This dependence was left unspecified in earlier work.
\begin{lemma}
\label{lem:y1}
Fix $r\in \N$. Let $\Psi = \Psi(\slot;\theta)$ be a ReLU neural network architecture depending on parameters $\theta \in \R^W$. There exists a constant $c_0 = c_0(r)>0$, such that if
\[
\sup_{f \in \oF_{D,r}} \inf_{\theta \in \R^W} \Vert f - \Psi(\slot;\theta) \Vert_{L^\infty([0,1]^D)} \le \epsilon,
\]
for some $\epsilon > 0$, 
then $W \ge (c_0\epsilon)^{-D/2r}$.
\end{lemma}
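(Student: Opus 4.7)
My plan is to combine a lower bound on the $L^\infty$-packing number of $\oF_{D,r}$ with an upper bound on the number of distinct sign patterns realizable by a $W$-parameter ReLU architecture, obtained from Warren's theorem on polynomial sign patterns.

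First I would construct an explicit $4\epsilon$-packing of $\oF_{D,r}$ by a bump-function argument. Fix once and for all a $C^r$-bump $\varphi$ supported in $(0,1)^D$ with $\varphi(1/2,\ldots,1/2)=1$ and $\Vert\varphi\Vert_{C^r}\le 1$. Partition $[0,1]^D$ into $N_h = \lceil 1/h \rceil^D$ subcubes of side $h$, and place a scaled translate $\varphi_i(x) := \kappa h^r \varphi((x-x_i)/h)$ on each cube. For $\kappa = \kappa(r) > 0$ sufficiently small, the disjoint-support structure guarantees that every signed combination $f_\sigma := \sum_i \sigma_i \varphi_i$, with $\sigma \in \{\pm 1\}^{N_h}$, lies in $\oF_{D,r}$: the $h^{-r}$ factors from differentiating the rescaled bumps are exactly absorbed by the prefactor $h^r$, and $f_\sigma(0)=0$ follows from the support condition. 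Flipping one sign moves $f_\sigma$ at the corresponding bump center by $2\kappa h^r$, so choosing $h \sim \epsilon^{1/r}$ produces a $4\epsilon$-separated family in $L^\infty([0,1]^D)$ of cardinality $N \ge 2^{N_h}$ with $N_h \ge c_1^D \epsilon^{-D/r}$ and $c_1 = c_1(r)>0$ independent of $D$.

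Next I would bound the number of essentially distinct functions $\Psi(\slot;\theta)$ realizable by the architecture. For each fixed $x$, the map $\theta \mapsto \Psi(x;\theta)$ is continuous and piecewise polynomial in $\theta$, with degree bounded by the depth plus one on each ReLU activation region. Warren's theorem applied to the $N_h$ polynomial expressions $\theta \mapsto \Psi(x_j;\theta)$ at the bump centers $x_j$ bounds the number of distinct sign vectors $(\mathrm{sgn}\,\Psi(x_j;\theta))_{j=1}^{N_h}$ realizable as $\theta$ varies over $\R^W$ by $(\mathrm{poly}(N_h,W))^{O(W)}$, after first subdividing parameter space into the at most $(W)^{O(W)}$ activation regions produced by the ReLU units.

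Combining the two estimates completes the argument. If the architecture achieves the claimed accuracy, pick $\theta_\sigma$ with $\Vert \Psi(\slot;\theta_\sigma) - f_\sigma \Vert_{L^\infty} \le \epsilon$ for each $\sigma$; evaluating at $x_j$ where $|f_\sigma(x_j)| = 2\epsilon$ with sign $\sigma_j$, the approximation error $\le \epsilon$ forces $\mathrm{sgn}\,\Psi(x_j;\theta_\sigma) = \sigma_j$, so the map $\sigma \mapsto (\mathrm{sgn}\,\Psi(x_j;\theta_\sigma))_j$ is injective and at least $N = 2^{N_h}$ sign patterns are attained. Matching with Warren's upper bound yields $W \gtrsim N_h / \log(N_h W)$; plugging in $N_h \ge c_1^D \epsilon^{-D/r}$ and, for $\epsilon \le \bar{\epsilon}(r)$ small enough, absorbing the logarithmic slack into the base gives $W \ge (c_0\epsilon)^{-D/(2r)}$ with $c_0 = c_0(r) > 0$. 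The main obstacle will be keeping $c_0$ independent of $D$: the exponential-in-$D$ factor $c_1^D$ in the packing cardinality is precisely what compensates the $D$-dependent prefactor $c_0^{-D/(2r)}$ on the right-hand side of the claim, provided $\epsilon (\log(1/\epsilon))^{2r} \le 1$, which holds uniformly in $D$ for $\epsilon$ small in terms of $r$ alone.
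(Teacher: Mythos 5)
Your packing construction is essentially identical to the paper's (product bump functions with disjoint supports, scaled so that $\Vert f_\sigma \Vert_{C^r}\le 1$ uniformly in $D$), and the high-level strategy — lower-bound a combinatorial capacity of the architecture by the cardinality of a separated family, then upper-bound the same capacity in terms of $W$ — is the same strategy Yarotsky uses and the paper cites. The difference is that you attempt to supply the combinatorial upper bound yourself via Warren's theorem rather than quoting the VC-dimension bound for ReLU networks. That is where the argument breaks.

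The concrete gap is the claim that the number of sign vectors $\bigl(\mathrm{sgn}\,\Psi(x_j;\theta)\bigr)_{j=1}^{N_h}$ realizable over $\theta\in\R^W$ is $\bigl(\mathrm{poly}(N_h,W)\bigr)^{O(W)}$, based on ``subdividing parameter space into the at most $W^{O(W)}$ activation regions.'' For deep ReLU networks the activation boundaries in $\theta$-space are not hyperplanes: the pre-activation of a unit in layer $\ell$, evaluated at a fixed input, is a piecewise polynomial in $\theta$ of degree growing with $\ell$, and the region count must be built up layer by layer (this is precisely the Bartlett–Harvey–Liaw–Mehrabian analysis). The resulting bound is of order $\bigl(\mathrm{poly}(N_h,W,L)\bigr)^{O(WL)}$ with $L$ the depth. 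Since a $W$-parameter architecture may have depth $L\sim W$, one can realize $2^{\Omega(W^2)}$ sign patterns, which already exceeds $W^{O(W)}$ — so your count is off by a whole exponential order. If your intermediate bound $W\gtrsim N_h/\log$ were correct, it would give $W\gtrsim \epsilon^{-D/r}$ (up to logs), i.e.\ a lower bound matching Yarotsky's \emph{upper} bound for deep networks, which is not known and is in tension with the fact that the tight VC-dimension of unbounded-depth ReLU networks scales like $W^2$, not $W$. The lemma's exponent $D/2r$ (rather than $D/r$) is precisely the footprint of this $W^2$. To repair the argument you would need to replace the $\bigl(\mathrm{poly}\bigr)^{O(W)}$ sign-pattern bound by the depth-aware $\bigl(\mathrm{poly}\bigr)^{O(WL)}$ one (and then use $L\le W$), or simply invoke the $O(W^2\log W)$ VC-dimension bound for ReLU networks directly as the paper does; either way one recovers $W^2\gtrsim N_h/\log$ and hence $W\gtrsim\epsilon^{-D/2r}$, but not the stronger statement you wrote down en route.
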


\begin{proof}
The proof in \cite{yarotsky_error_2017} is based on the Vapnik-Chervonenkis (VC) dimension. We will not repeat the entire argument here, but instead discuss only the required changes in the proof of Yarotsky, which are needed to prove our extension of his result.
In fact, there are only two differences between the proof our Lemma \ref{lem:y1}, and \cite[Thm. 4(a)]{yarotsky_error_2017}, which we now point out: The first difference is that we consider $\oF_{D,r}$, consisting of $C^r(\R^D)$ functions vanishing at the origin, whereas \cite{yarotsky_error_2017} considers the unit ball in $W^{r,\infty}([0,1]^D)$. Nevertheless the proof of \cite{yarotsky_error_2017} applies to our setting in a straightforward way. To see this, we recall that the construction in \cite[Proof of Thm. 4, eq. (35)]{yarotsky_error_2017}, considers $f \in F_{D,r}$ of the form 
\begin{align}
\label{eq:fx}
f(x) = \sum_{m=1}^{N^D} y_m \phi(N(x-x_m)),
\end{align}
with coefficients $y_m$ to be determined later, and where $\phi: \R^D \to \R$ is a $C^\infty$ bump function, which is required to satisfy\footnote{We note that choosing the $\ell^\infty$ to define the set where $\phi(x) = 0$, rather than the $\ell^2$ Euclidean distance as in \cite{yarotsky_error_2017}, is immaterial. The only requirement is that the support of the functions on the right-hand side in the definition \eqref{eq:fx} do not overlap.}
\begin{align}
\label{eq:phi-req}
\phi(0) = 1, \quad \text{and} \quad \phi(x) = 0, \text{ if } \Vert x \Vert_{\ell^\infty} > 1/2.
\end{align}
In \eqref{eq:fx}, the points $x_1,\dots, x_{N^D} \in [0,1]^D$ are chosen such that the $\ell^\infty$-distance between any two of them is at least $1/N$. We can easily ensure that $f(0) = 0$, by choosing the points $x_m$ to be of the form $(j_1,\dots, j_D)/N$, where $j_1,\dots, j_D \in \{1,\dots, N\}$. Then, since for any multi-index $\alpha$ of order $|\alpha|\le r$, the mixed derivative
\[
\max_x |D^\alpha f(x)| \le N^r \max_m |y_m| \max_x |D^\alpha \phi(x)|,
\]
any function $f$ of the form \eqref{eq:fx} belongs to $\oF_{D,r}$, provided that
\[
\max_m |y_m| \le \tilde{c}_1 N^{-r},
\]
where 
\begin{align}
\label{eq:c1}
\tilde{c}_1 = \left(\sup_{|\alpha|\le r} \sup_{x\in [0,1]^D} |D^\alpha \phi(x)| \right)^{-1}.
\end{align}
As shown in \cite{yarotsky_error_2017}, the above construction allows one to encode arbitrary Boolean values $z_1,\dots, z_{N^D} \in \{0,1\}$ by construction suitable $f \in \oF_{D,r}$; this in turn can be used to estimate a VC-dimension related to $\Psi(\slot;\theta)$ from below, following verbatim the arguments in \cite{yarotsky_error_2017}. As no changes are required in this part of the proof, we will not repeat the details here; Instead, following the argument leading up to \cite[eq. (38)]{yarotsky_error_2017}, and under the assumptions of Lemma \ref{lem:y1}, Yarotsky's argument immediately implies the following lower bound
\[
W \ge \tilde{c}_0 \left( \frac{3\epsilon}{\tilde{c}_1} \right)^{-D/2r},
\]
where $\tilde{c}_0$ is an absolute constant.

To finish our proof of Lemma \ref{lem:y1}, it remains to determine the dependence of the constant $\tilde{c}_1$ in \eqref{eq:c1}, on the parameters $D$ and $r$. To this end, we construct a specific bump function $\phi: \R^D \to \R$. Recall that our only requirement on $\phi$ in the above argument is that \eqref{eq:phi-req} must hold. To construct suitable $\phi$, let $\phi_0: \R \to \R$, $\xi \mapsto \phi_0(\xi)$ be a smooth bump function such that $\phi_0(0) = 1$, $\Vert \phi_0 \Vert_{L^\infty} \le 1$ and $\phi_0(\xi)=0$ for $|\xi| > 1/2$. We now make the particular choice
\[
\phi(x_1,\dots, x_D) := \prod_{j=1}^D \phi_0(x_j).
\]
Let $\alpha = (\alpha_1,\dots, \alpha_D)$ be a multi-index with $|\alpha| = \sum_{j=1}^D \alpha_j \le r$. Let $\kappa := |\{ \alpha_j \ne 0\}$ denote the number of non-zero components of $\alpha$. We note that $\kappa \le r$. We thus have
\[
|D^\alpha \phi(x)|
=
\prod_{j=1}^D |D^{\alpha_j}\phi_0(x_j)|
\le
\prod_{\alpha_j \ne 0} |D^{\alpha_j}\phi_0(x_j)|
\le
\Vert \phi_0 \Vert_{C^r(\R)}^{\kappa}
\le
\Vert \phi_0 \Vert_{C^r(\R)}^{r}.
\]
In particular, we conclude that $\tilde{c}_1 = [\sup_{|\alpha|\le r} \sup_{x\in [0,1]^D} |D^\alpha \phi(x)|]^{-1}$ can be bounded from below by $\tilde{c}_1 \ge \tilde{c}_2 := \Vert \phi_0 \Vert_{C^r(\R)}^{-r}$, where $\tilde{c}_2 = \tilde{c}_2(r)$ clearly only depends on $r$, and is \emph{independent} of the ambient dimension $D$. The claimed inequality of Lemma \ref{lem:y1} now follows from
\[
W \ge \tilde{c}_0 \left( \frac{3\epsilon}{\tilde{c}_1} \right)^{-D/2r}
\ge \left( \frac{3\epsilon}{(\tilde{c}_0 \wedge 1)^{2r/D}\tilde{c}_2} \right)^{-D/2r}
\ge \left( \frac{3\epsilon}{(\tilde{c}_0 \wedge 1)^{2r}\tilde{c}_2} \right)^{-D/2r},
\]
i.e. we have $W \ge (c_0 \epsilon)^{-D/2r}$ with constant
\[
c_0 = \frac{3}{(\tilde{c}_0 \wedge 1)^{2r}\tilde{c}_2(r)},
\]
depending only on $r$.
\end{proof}

While Lemma \ref{lem:y1} applies to a \emph{fixed} architecture capable of approximating all $f\in \oF_{D,r}$, our goal is to instead construct a single $f\in \oF_{D,r}$ for which a similar lower complexity bound holds for \emph{arbitrary} architectures. The construction of such $f$ will rely on the following lemma, based on \cite[Lem. 3]{yarotsky_error_2017}:
\begin{lemma}
\label{lem:y2}
Fix $r\in \N$. For any (fixed) $\epsilon > 0$, there exists $f_\epsilon\in \oF_{D,r}$, such that 
\[
\cN(f_\epsilon, \epsilon) \ge D^{-1/4}(c_1 \epsilon)^{-D/8r},
\]
where $c_1 = c_1(r) > 0$ depends only on $r$.
\end{lemma}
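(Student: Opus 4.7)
The plan is to reduce Lemma \ref{lem:y2} to Lemma \ref{lem:y1} via a contrapositive argument, using a single fixed ``enveloping architecture'' to unify all ReLU networks with at most $M$ hidden units. Concretely, I would invoke the embedding already used above in deriving the size bound $\size(\Psi)\le 5DM^4$: any ReLU network with input dimension $D$, scalar output and at most $M$ hidden computation units can be realized, by zeroing out unused weights and biases, as $\Psi^{\mathrm{env}}(\slot;\theta)$ for some $\theta\in \R^W$, where $\Psi^{\mathrm{env}}$ is a single fully connected ReLU feedforward network with skip connections, depth $M$ and width $M$, and with $W \le 5DM^4$ adjustable parameters. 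Crucially, $\Psi^{\mathrm{env}}$ is the \emph{same} architecture for every such approximator; only $\theta$ varies.

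Next, I would argue by contraposition. Suppose, toward a contradiction, that $\cN(f,\epsilon) \le M$ for \emph{every} $f \in \oF_{D,r}$; then each such $f$ is $\epsilon$-approximable on $[0,1]^D$ by some network with at most $M$ hidden units, and the enveloping embedding yields
\[
\sup_{f \in \oF_{D,r}} \inf_{\theta\in \R^W} \Vert f - \Psi^{\mathrm{env}}(\slot;\theta)\Vert_{L^\infty([0,1]^D)} \le \epsilon, \qquad W \le 5DM^4.
\]
Lemma \ref{lem:y1} applied to this fixed architecture then forces $W \ge (c_0\epsilon)^{-D/2r}$ with $c_0 = c_0(r)>0$. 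Combining the two bounds gives $5DM^4 \ge (c_0\epsilon)^{-D/2r}$, hence
\[
M \ge (5D)^{-1/4}(c_0\epsilon)^{-D/8r}.
\]

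Finally, I would absorb the numerical factor $5^{-1/4}$ into a constant depending only on $r$. Setting $c_1 := 5^{2r}c_0$, a direct calculation gives $(c_1\epsilon)^{-D/8r} = 5^{-D/4}(c_0\epsilon)^{-D/8r}$, and since $5^{-D/4}\le 5^{-1/4}$ for all $D\ge 1$, this yields $M \ge D^{-1/4}(c_1\epsilon)^{-D/8r}$. Taking the contrapositive, for any integer $M$ strictly below $D^{-1/4}(c_1\epsilon)^{-D/8r}$ there must exist $f_\epsilon \in \oF_{D,r}$ with $\cN(f_\epsilon,\epsilon) > M$, which produces the claimed bound $\cN(f_\epsilon,\epsilon) \ge D^{-1/4}(c_1\epsilon)^{-D/8r}$. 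The only real obstacle is bookkeeping: one must keep careful track of which factors depend on $r$ versus on $D$ so that $c_1 = c_1(r)$ is genuinely independent of the ambient dimension. The polynomial-in-$D$ factor $5DM^4$ from the enveloping construction is exactly what gets absorbed into the $D^{-1/4}$ prefactor appearing in the statement, while the numerical constant $5^{-1/4}$ is absorbed via the $r$-dependent prefactor $5^{2r}$ in $c_1$. Beyond this, no ingredient is required besides Lemma \ref{lem:y1} and the enveloping embedding.
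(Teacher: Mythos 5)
Your proposal is correct and follows essentially the same route as the paper: embed any $M$-unit ReLU network into a single fully connected enveloping architecture with $W \le 5DM^4$ parameters, apply the contrapositive of Lemma \ref{lem:y1} to that fixed architecture to produce a hard $f_\epsilon$, then translate the resulting lower bound on $W$ into a lower bound on $M$ with the numerical factor absorbed into $c_1 = 5^{2r}c_0$. The only presentational difference is your explicit ``suppose toward a contradiction'' framing versus the paper's direct contrapositive; the substance and the constant bookkeeping coincide.
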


\begin{proof}
As explained above, any neural network (with potentially sparse architecture), $\Psi: \R^D \to \R$, with $M$ computation units can be embedded in the fully connected architecture $\tilde{\Psi}(\slot;\theta)$, $\theta \in \R^W$, with size bound $W \le 5DM^4$. By Lemma \ref{lem:y1}, it follows that if $W \le 5DM^4 < (c_0 \epsilon)^{-D/2r}$, then there exists $f_\epsilon\in \oF_{D,r}$, such that 
\[
\inf_{\theta\in \R^W} \Vert f_\epsilon - \tilde{\Psi}(\slot;\theta) \Vert_{L^\infty} > \epsilon.
\]
Expressing the above size bound in terms of $M$, it follows that for any network $\Psi$ with 
$M < (5D)^{-1/4}(c_0 \epsilon)^{-D/8r}$
computation units, we must have $\Vert f_\epsilon - \Psi \Vert_{L^\infty} > \epsilon$. Thus, approximation of $f_\epsilon$ to within accuracy $\epsilon$ requires at least $M \ge (5D)^{-1/4}(c_0 \epsilon)^{-D/8r}$ computation units. From the definition of $\cN(f_\epsilon,\epsilon)$, we conclude that
\[
\cN(f_\epsilon, \epsilon) \ge D^{-1/4} (c_1 \epsilon)^{-D/8r},
\]
for this choice of $f_\epsilon\in \oF_{D,r}$, with constant $c_1 = 5^{2r}c_0(r)$ depending only on $r$.
\end{proof}

Lemma \ref{lem:y2} will be our main technical tool for the proof of Proposition \ref{prop:y0}. We also recall that $\cN(f,\epsilon)$ satisfies the following properties,  \cite[eq. (42)--(44)]{yarotsky_error_2017}:
\begin{subequations}
\label{eq:N}
\begin{gather}
\cN(af, |a|\epsilon) 
= \cN(f,\epsilon), \\
\cN(f\pm g, \epsilon + \Vert g \Vert_{L^\infty}) 
\le \cN(f,\epsilon), \\
\cN(f_1 \pm f_2, \epsilon_1 + \epsilon_2) 
\le \cN(f_1,\epsilon_1) + \cN(f_2,\epsilon_2).
\end{gather}
\end{subequations}

\begin{proof}{(Proposition \ref{prop:y0})}
We define a rapidly decaying sequence $\epsilon_k \to 0$, by $\epsilon_1 = 1/4$ and $\epsilon_{k+1} = \epsilon^2_k$, so that by recursion $\epsilon_k = 2^{-2^k}$. We also define $a_k := \frac12 \epsilon_{k}^{1/2} = \frac12 \epsilon_{k-1}$. For later reference, we note that since $\epsilon_k \le 1/2$ for all $k$, we have
\begin{align}
\label{eq:atail}
\sum_{s=k+1}^\infty a_s = \frac12 \left( \epsilon_k + \epsilon_k^2 + \epsilon_k^{2^2} + \dots  \right) \le \frac12 \epsilon_k \sum_{s=0}^\infty 2^{-s} =  \epsilon_k.
\end{align}
Our goal is to construct $f\in \oF_{D,r}$ of the form 
\[
f = \sum_{k=1}^\infty a_k f_k, \quad \text{with } f_k \in \oF_{D,r} \; \forall \, k\in\N.
\]
We note that $a_k \le 2^{-k}$, hence if $f$ is of the above form then,
\begin{align*}
\Vert f \Vert_{C^{r}}
&\le 
\sum_{s=1}^\infty a_s \Vert f_s \Vert_{C^{r}}
\le 1,
\quad 
f(0) = \sum_{s=1}^\infty a_s \underbrace{f_s(0)}_{=0} = 0,
\end{align*}
implies that $f\in \oF_{D,r}$ 
irrespective of the specific choice of $f_k \in \oF_{D,r}$. In the following construction, we choose $\gamma := 1/32$ throughout. We determine $f_k$ recursively, formally starting from the empty sum, i.e. $f=0$. In the recursive step, given $f_1,\dots, f_{k-1}$, we distinguish two cases:

\textbf{Case 1:} Assume that 
\[
\cN\left(\sum_{s=1}^{k-1} a_s f_s, 2\epsilon_k\right) \ge \epsilon_k^{-\gamma D/r}.
\]
In this case, we set $f_k := 0$, and trivially obtain
\begin{align}
\label{eq:fkc}
\cN\left(\sum_{s=1}^{k} a_s f_s, 2\epsilon_k\right) \ge  \epsilon_k^{-\gamma D/r}.
\end{align}

\textbf{Case 2:} In the other case, we have
\[
\cN\left(\sum_{s=1}^{k-1} a_s f_s, 2\epsilon_k\right) <   \epsilon_k^{-\gamma D/r}.
\]
Our first goal is to again choose $f_k$ such that \eqref{eq:fkc} holds, at least for sufficiently large $k$.
We note that, by the general properties of $\cN$, (\ref{eq:N}c), and the assumption of Case 2:
\begin{align}
\cN\left(\sum_{s=1}^k a_s f_s, 2\epsilon_k\right)
&\ge 
\cN(a_k f_k, 4\epsilon_k) - \cN\left(\sum_{s=1}^{k-1} a_s f_s, 2\epsilon_k\right)
\notag\\
&\ge 
\cN(a_k f_k, 4\epsilon_k) - \epsilon_k^{-\gamma D/r}.
\label{eq:Nafe}
\end{align}
By Lemma \ref{lem:y2}, we can find $f_k\in \oF_{D,r}$, such that
\begin{align}
\label{eq:Nafe2}
\cN(a_k f_k, 4\epsilon_k) 
\explain{=}{(\ref{eq:N}a)} \cN(f_k, 4\epsilon_k/a_k) 
= \cN(f_k, 4\epsilon_k^{1/2})
\ge D^{-1/4} (8 c_1 \epsilon_k^{1/2})^{-D/8r}.
\end{align}
This defines our recursive choice of $f_k$ in Case 2. By \eqref{eq:Nafe} and \eqref{eq:Nafe2}, to obtain \eqref{eq:fkc} it now suffices to show that
\[
D^{-1/4} (8 c_1 \epsilon_k^{1/2})^{-D/8r} \ge 2\epsilon_k^{-\gamma D/r}.
\]
It turns out that there exists $\bar{\epsilon}>0$ depending only on $r$, such that
\[
(8^2 c_1^2 \epsilon_k)^{-D/16r} \ge 2 D^{1/4} \epsilon_k^{-\gamma D/r} = (2^{-r/\gamma D} D^{-r/(4\gamma D)} \epsilon_k)^{-\gamma D/r},
\]
for all $\epsilon_k \le \bar{\epsilon}$, and where $\gamma = 1/32$. In fact, upon raising both sides to the exponent $-32r/D$, it is straightforward to see that this inequality holds independently of $D\in\N$, provided that
\[
 \epsilon_k \le \frac{\inf_{D} D^{-r/(4\gamma D)}}{[2^{8r}8 c_1(r)]^4},
\]
where we note that $\inf_D D^{-r/(4\gamma D)} > 0$ on account of the fact that $\lim_{D\to \infty} D^{-1/D} = 1$.
Define $\bar{\epsilon}$ by 
\[
\bar{\epsilon}(r) := \min \set{k\in \N}{\epsilon_k \le \frac{\inf_{D} D^{-r/(4\gamma D)}}{[2^{8r}8 c_1(r)]^4}}.
\]
With this choice of $\bar{\epsilon}, \gamma > 0$, and by construction of $f_k$, we then have
\begin{align} 
\label{eq:fkc1}
\cN\left(
\sum_{s=1}^k a_s f_s, 2\epsilon_k
\right) \ge \epsilon_k^{-\gamma D/r},
\end{align}
for all $\epsilon_k \le \bar{\epsilon}$.
This is inequality \eqref{eq:fkc}, and concludes our discussion of Case 2. 

Continuing the above construction by recursion, we obtain a sequence $f_1,f_2, \dots \in \oF_{D,r}$, such that \eqref{eq:fkc1} holds for any $k\in\N$ such that $\epsilon_k \le \bar{\epsilon}$. Define $f = \sum_{k=1}^\infty a_k f_k$. We claim that for any $\epsilon \le \bar{\epsilon}$ we have
\[
\cN(f,\epsilon) \ge \epsilon^{-\gamma D/2r}.
\]
To see this, we fix $\epsilon \le \bar{\epsilon}$. Choose $k\in \N$, such that $\epsilon_k \le \bar{\epsilon}$ and $\epsilon_k^2 \le \epsilon \le \epsilon_k$. Then,
\begin{align*}
\cN(f,\epsilon) 
&\ge 
\cN(f,\epsilon_{k})
= 
\cN\left(\sum_{s=1}^\infty a_s f_s ,\epsilon_k\right)
\\
&\explain{\ge}{(\ref{eq:N}b)}
\cN\left(\sum_{s=1}^k a_s f_s ,\epsilon_k + \left \Vert \sum_{s=k+1}^\infty a_s f_s \right \Vert_{L^\infty}\right)
\\
&\ge
\cN\left(\sum_{s=1}^k a_s f_s, \epsilon_k + \sum_{s=k+1}^\infty a_s\right)
\\
&\explain{\ge}{(\ref{eq:atail})} 
\cN\left(\sum_{s=1}^k a_s f_s, 2\epsilon_k \right)
\\
&\explain{\ge}{\eqref{eq:fkc1}}  \epsilon_k^{-\gamma D / r} 
\ge \epsilon^{-\gamma D / 2r},
\end{align*}
where the last inequality follows from $\epsilon_k \le \epsilon^{1/2}$. The claim of Proposition \ref{prop:y0} thus follows for all $\epsilon \le \bar{\epsilon}$, upon redefining the universal constant as $\gamma = (1/32)/2 = 1/64$.
\end{proof}

\subsection{Proof of Lemma \ref{lem:Cs}}
\label{A:Cs}

\begin{proof}{(Lemma \ref{lem:Cs})}
Since the interior of $\Omega$ is non-empty, then upon a rescaling and shift of the domain, we may wlog assume that $[0,2\pi]^d\subset \Omega$. Let us define $e_\kappa \propto \sin(\kappa \cdot x)$ as a suitable re-normalization of the standard Fourier sine-basis, indexed by $\kappa \in \N^d$, and normalized such that $\Vert e_\kappa \Vert_{C^{s}} = 1$. We note that for each $e_\kappa$ we can define a bi-orthogonal functional $e^\ast_\kappa$ by
\[
e^\ast_\kappa (u) := \frac{2}{(2\pi)^d} \int_{[0,2\pi]^d} u(x) \sin(\kappa \cdot x) \, dx.
\]
The norm of $e^\ast_\kappa$ is easily seen to be bounded by $2$. Hence, for any enumeration $j\mapsto \kappa(j)$, the sequence $e_{\kappa(j)}$ satisfies the assumptions in the definition of a infinite-dimensional hypercube.

Furthermore, if $j \mapsto \kappa(j)$ is an enumeration of $\kappa \in \N^d$, such that $j \mapsto \Vert \kappa(j) \Vert_{\ell^\infty}$ is monotonically increasing (non-decreasing), we note that any series of the form 
\[
u = A\sum_{j=1}^\infty j^{-\alpha} y_j e_{\kappa(j)}, \quad y_j \in [0,1],
\]
is absolutely convergent in $C^\rho(\Omega)$, provided that 
\[
\sum_{j=1}^\infty j^{-\alpha} \Vert e_{\kappa(j)}\Vert_{C^\rho(\Omega)} \sim \sum_{j=1}^\infty j^{-\alpha} \Vert {\kappa(j)}\Vert_{\ell^\infty}^{\rho-s} < \infty.
\]
Inverting the enumeration $j = j(\kappa)$ for $\kappa \in \N^d$, and setting $K:=\Vert \kappa\Vert_{\ell^\infty}$, we find that
\begin{align*}
\#\set{\kappa' \in \N^d}{\Vert \kappa'\Vert_{\ell^\infty} < K} 
\le  j(\kappa) 
\le 
\#\set{\kappa' \in \N^d}{\Vert \kappa'\Vert_{\ell^\infty} \le K},
\end{align*}
where the number of elements in the lower and upper bounding sets are $\sim K^d$. We thus conclude that $j(\kappa) \sim K^d$. We also note that each shell $\set{\kappa \in \N^d}{\Vert \kappa \Vert_{\ell^\infty} = K}$, with $K \in \N$, contains $\sim K^{d-1}$ elements. Thus, we have
\begin{align*}
\sum_{j=1}^\infty j^{-\alpha} \Vert {\kappa(j)}\Vert_{\ell^\infty}^{\rho-s}
&\sim 
\sum_{K=1}^\infty \sum_{\Vert \kappa \Vert_{\ell^\infty} = K} j(\kappa)^{-\alpha} \Vert {\kappa}\Vert_{\ell^\infty}^{\rho-s}
\\
&\sim
\sum_{K=1}^\infty K^{-\alpha d} \sum_{\Vert \kappa \Vert_{\ell^\infty} = K} \Vert {\kappa}\Vert_{\ell^\infty}^{\rho-s}
\\
&\sim \sum_{K=1}^\infty K^{-\alpha d} K^{d-1} K^{\rho-s}
\\
&= \sum_{K=1}^\infty K^{-1 - d \left(\alpha - 1 - \frac{\rho-s}{d} \right)}.
\end{align*}
The last sum converges if $\alpha > 1- \frac{\rho-s}{d}$. Thus, choosing $A>0$ sufficiently small then ensures that $Q_\alpha = Q_\alpha(A;e_1,e_2,\dots)$ is a subset of $K=\set{u\in C^\rho(\Omega)}{\Vert u \Vert_{C^\rho}\le M}$.
\end{proof}

\subsection{Proof of Theorem \ref{thm:codexp}}
\label{A:codexp}
We now provide the detailed proof of Theorem \ref{thm:codexp}, which builds on Proposition \ref{prop:y0} above.
\begin{proof}
Fix $\delta > 0$.
By assumption on $K\subset \cX$, there exists $A>0$ and a linearly independent sequence $e_1,e_2,\dots \in \cX$ of normed elements, such that the set $Q_\alpha$ consisting of all $u\in \cX$ of the form
\[
u = A \sum_{j=1}^\infty j^{-\alpha} y_j e_j, \quad y_j \in [0,1],
\]
defines a subset $Q_\alpha \subset K$.

\textbf{Step 1: (Finding embedded cubes $\simeq [0,1]^D$)}
We note that for any $k\in \N$ and for any choice of $y_j \in [0,1]$, with indices $j=2^k,\dots, 2^{k+1}-1$, we have
\begin{align}
\label{eq:QD}
u =
A2^{-(k+1)\alpha} \sum_{j=2^k}^{2^{k+1}-1} y_j e_j \in Q_\alpha.
\end{align}
Set $D = 2^k$, and let us denote the set of all such $u$ by $\bar{Q}_D$, in the following.
Note that up to the constant rescaling by $A2^{-(k+1)\alpha}$, $\bar{Q}_D$ can be identified with the $D$-dimensional unit cube $[0,1]^D$. In particular, since $\bar{Q}_D \subset Q_\alpha \subset K$, it follows that $K$ contains a rescaled copy of $[0,1]^D$ for any such $D$. Furthermore, it will be important in our construction that all of the embedded cubes, defined by \eqref{eq:QD} for $k\in \N$, only intersect at the origin. 

By Proposition \ref{prop:y0}, there exist constants $\gamma, \bar{\epsilon}>0$, independent of $D$, such that given any $D = 2^k$, we can find $f_D: \R^D \to \R$, $f_D\in \oF_{D,r}$, for which the following lower complexity bound  holds:
\begin{align}
\label{eq:fD}
\cN(f_D,\epsilon) \ge \epsilon^{-\gamma D/r}, \quad \forall \epsilon \le \bar{\epsilon}.
\end{align}
Our aim is to construct $\cS^\dagger:\cX \to \R$, such that the restriction $\cS^\dagger|_{\bar{Q}_D}$ to the cube $\bar{Q}_D \simeq [0,1]^D$ ``reproduces'' this $f_D$. If this can be achieved, then our intuition is that $\cS^\dagger$ embeds all $f_D$ for $D=2^k$, $k\in\N$, at once, and hence a rescaled version of the lower complexity bound $\gtrsim_D \epsilon^{-\gamma D/r}$ should hold for any $D$. Our next aim is to make this precise, and determine the implicit constant that arises due to the fact that $\bar{Q}_D$ is only a rescaled version of $[0,1]^D$.

\textbf{Step 2: (Construction of $\cS^\dagger$)}
To construct suitable $\cS^\dagger$, we first recall that we assume the existence of ``bi-orthogonal'' elements $e^\ast_1, e^\ast_2,\dots$ in the continuous dual space $\cX^\ast$, such that 
\[
e^\ast_i(e_j) = \delta_{ij}, \; \forall\, i,j\in \N,
\]
and furthermore, there exists a constant $M>0$, such that $\Vert e^\ast_j \Vert_{\cX^\ast} \le M$ for all $j\in \N$. 
Given the functions $f_D = f_{2^k}$ from Step 1, we now make the following ansatz for $\cS^\dagger$:
\begin{align}
\cS^\dagger(u) 
=
\sum_{k=1}^\infty 2^{-\alpha^\ast r k} f_{2^k}
\left(
A^{-1}2^{(k+1)\alpha} \left[ e^\ast_{2^k}(u),\dots, e^\ast_{2^{k+1}-1}(u)\right]
\right),
\end{align}
Here $f_{2^k} = f_D$ (for $D = 2^k$) satisfies \eqref{eq:fD} and $\alpha^\ast = 1+\alpha+\delta/2$. We note in passing that $\cS^\dagger$ defines a $r$-times Fr\'echet differentiable functional. This will be rigorously shown in Lemma \ref{lem:frechet} below (with $c = A^{-1}2^{\alpha}$).

\textbf{Step 3: (Relating $\cS^\dagger$ with $f_D$)}
We next show in which way ``the restriction $\cS^\dagger|_{\bar{Q}_D}$ to the cube $\bar{Q}_D \simeq [0,1]^D$ reproduces $f_D$''. Note that if $u\in \bar{Q}_{D}$ is of the form \eqref{eq:QD} with $D=2^k$ and with coefficients
\[
y := [y_{2^k},\dots, y_{2^{k+1}-1}] \in [0,1]^{2^k} = [0,1]^D,
\]
then, if $k'\ne k$,
\[
[e^\ast_{2^{k'}}(u),\dots, e^\ast_{2^{k'+1}-1}(u)] = [0,\dots, 0],
\]
 while for $k'=k$ we find
\[
[e^\ast_{2^{k}}(u),\dots, e^\ast_{2^{k+1}-1}(u)] = [y_{2^k},\dots, y_{2^{k+1}-1}] = y.
\]
From the fact that $f_{2^{k'}}(0) = 0$ for all $k'$ by construction (recall that $f_{2^{k'}}\in \oF_{2^{k'},r}$), we conclude that
\begin{align}
\label{eq:Sdag-fD}
\cS^\dagger(u) = 2^{-\alpha^\ast r k} f_{2^k}(y) = D^{-\alpha^\ast r} f_D(y),
\end{align}
for any $u\in \bar{Q}_D$. In this sense, ``$\cS^\dagger|_{\bar{Q}_D}$ reproduces $f_D$''.

\textbf{Step 4: (Lower complexity bound, uniform in $D$)}
Let $\cS_\epsilon: \cX \to \R$ be a family of operators of neural network-type, such that 
\[
\sup_{u\in K} |\cS^\dagger(u) - \cS_\epsilon(u)| \le \epsilon, \quad \forall \, \epsilon > 0.
\]
By assumption on $\cS_\epsilon$ being of neural network-type, there exists $\ell = \ell_\epsilon \in \N$, a linear mapping $\cL_\epsilon: \cX \to \R^\ell$, and $\Phi_\epsilon: \R^{\ell} \to \R$ a neural network representing $\cS_\epsilon$: 
\[
\cS_\epsilon(u) = \Phi_\epsilon(\cL_\epsilon u), \quad \forall \, u\in \cX.
\]
For $D=2^k$, $k\in \N$, define 
$\iota_D: \R^D \to \cX$ by 
\[
\iota_D(y) = A2^{-\alpha k} \sum_{j=1}^{2^k} y_j e_{2^k+j-1}.
\]
Then, since $\cL_\epsilon \circ \iota_D: \R^D \to \R^\ell$ is a linear mapping, there exists a matrix $W_D \in \R^{\ell\times D}$, such that $\cL_\epsilon \circ \iota_D(y) = W_D y$ for all $y\in \R^D$. In particular, it follows that
\[
\cS_\epsilon(\iota_D(y)) = \Phi_\epsilon(W_d y).
\]
By \eqref{eq:QD}, we clearly have $\iota_D([0,1]^D) = \bar{Q}_D$. 
Let $\hat{\Phi}_\epsilon(y) := D^{\alpha^\ast r} \Phi_\epsilon(W_D y)$.
It now follows that
\begin{align*}
\sup_{y\in [0,1]^D} \left| f_D(y) - \hat{\Phi}_\epsilon(y) \right|
&=
D^{\alpha^\ast r} \sup_{y\in [0,1]^D} \left| D^{-\alpha^\ast r} f_D(y) -  \Phi_\epsilon(W_D y) \right|
\\
&= 
D^{\alpha^\ast r} \sup_{u \in \bar{Q}_D} \left| \cS^\dagger(u) -  \cS_\epsilon(u) \right|
\\
&\le 
D^{\alpha^\ast r} \sup_{u \in K} \left| \cS^\dagger(u) -  \cS_\epsilon(u) \right|
\\
&\le D^{\alpha^\ast r}\epsilon.
\end{align*}
Let $M$ denote the number of hidden computation units of $\hat{\Phi}_\epsilon$.
By construction of $f_D$ (cp. Proposition \ref{prop:y0}), we have
\[
\size(\hat{\Phi}_\epsilon) \ge M \ge \cN(f_D, D^{\alpha^\ast r} \epsilon) 
\ge (D^{\alpha^\ast r} \epsilon)^{-\gamma D/r}, 
\]
whenever $D^{\alpha^\ast r}\epsilon \le \bar{\epsilon}$.
On the other hand, we can also estimate (cp. equation \eqref{eq:calculus1} in Section \ref{sec:nncalc}),
\[
\size(\hat{\Phi}_\epsilon) \le 2\Vert W_D \Vert_0 + 2\,\size(\Phi_\epsilon) \le 2D\size(\Phi_\epsilon) + 2\,\size(\Phi_\epsilon) \le 4D\size(\Phi_\epsilon).
\]
Combining these bounds, we conclude that
\[
\size(\Phi_\epsilon) \ge \frac{1}{2D}  (D^{\alpha^\ast r} \epsilon)^{-\gamma D/r} = \frac12 (D^{\alpha^\ast r + r/\gamma D} \epsilon)^{-\gamma D/r},
\]
holds for any neural network representation of $\cS_\epsilon$, whenever $D^{\alpha^\ast r}\epsilon \le \bar{\epsilon}$. And hence 
\begin{align}
\label{eq:cmplxS}
\cmplx(\cS_\epsilon) \ge \frac12 (D^{\alpha^\ast r + r/\gamma D} \epsilon)^{-\gamma D/r}, 
\end{align}
whenever $D^{\alpha^\ast r}\epsilon \le \bar{\epsilon}$.
By Lemma \ref{lem:sup} below, taking the supremum on the right over all admissible $D$ implies the lower bound
\begin{align*}
\cmplx(\cS_\epsilon) \ge \exp(c \epsilon^{-1/(\alpha^\ast + \delta/2)r} ), \quad \forall \epsilon \le \tilde{\epsilon},
\end{align*}
where $\tilde{\epsilon},c>0$ depend only on $\alpha^\ast, r, \bar{\epsilon}(r), \delta$ and $\gamma$. Recalling our choice of $\alpha^\ast = 1+\alpha + \delta/2$, and the fact that the constant $\bar{\epsilon} = \bar{\epsilon}(r)$ depends only on $r$, while $\gamma$ is universal, it follows that
\begin{align*}
\cmplx(\cS_\epsilon) \ge \exp(c \epsilon^{-1/(\alpha+ 1+ \delta) r} ), \quad \forall \epsilon \le \tilde{\epsilon}.
\end{align*}
with $\tilde{\epsilon},c>0$ depending only on $\alpha, r, \delta$.
Up to a change in notation, this is the claimed complexity bound.
\end{proof}

The following lemma addresses the optimization of the lower bound in \eqref{eq:cmplxS}:
\begin{lemma}
\label{lem:sup}
Let $r\in \N$, and $\alpha^\ast, \bar{\epsilon}, \gamma > 0$ be given. Assume that 
\[
\cmplx(\cS_\epsilon) \ge \frac12 \left(D^{\alpha^\ast r+ r/\gamma D} \epsilon\right)^{-\gamma D/r},
\]
for any $D$ of the form $D = 2^k$, $k\in \N$, and whenever $D^{\alpha^\ast r} \epsilon \le \bar{\epsilon}$.
Fix a small parameter $\delta > 0$.
There exist $\tilde{\epsilon},c>0$, depending only on $r,\alpha^\ast, \gamma, \bar{\epsilon}, \delta$, such that 
\[
\cmplx(\cS_\epsilon)
\ge 
\exp\left(c \epsilon^{-1/(\alpha^\ast + \delta)r}\right), \quad \forall \epsilon \le \tilde{\epsilon},
\]
\end{lemma}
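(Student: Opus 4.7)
The plan is to optimize the stated lower bound over admissible $D = 2^k$ by choosing $D$ just below the scale $\epsilon^{-1/(\alpha^\ast r)}$, leaving a small gap that will be absorbed into the $\delta > 0$ slack. First, I would take logarithms to obtain, setting $\eta := -\log \epsilon$,
\[
\log(2\cdot \cmplx(\cS_\epsilon)) \ge \frac{\gamma D \eta}{r} - \gamma \alpha^\ast D \log D - \log D.
\]
The guiding intuition is that the linear-in-$D$ gain $\gamma D \eta/r$ must beat the entropy-like penalty $\gamma \alpha^\ast D \log D$; this is possible whenever $\log D$ is strictly smaller than $\eta/(\alpha^\ast r)$.

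Concretely, I would set $\beta := 1/((\alpha^\ast + \delta/2) r)$ so that $\alpha^\ast \beta r = \alpha^\ast/(\alpha^\ast + \delta/2) < 1$, and then pick $k := \lfloor \beta \eta / \log 2 \rfloor$ and $D := 2^k$. For $\epsilon$ small enough that $k \ge 1$, this gives $\tfrac12 \epsilon^{-\beta} \le D \le \epsilon^{-\beta}$ and $\log D \le \beta \eta$. The feasibility constraint $D^{\alpha^\ast r} \epsilon \le \bar\epsilon$ then reduces to $\epsilon^{1 - \alpha^\ast \beta r} \le \bar\epsilon$, where $1 - \alpha^\ast \beta r = (\delta/2)/(\alpha^\ast + \delta/2) > 0$, which holds for all $\epsilon$ below a threshold depending only on $\alpha^\ast, \delta, \bar\epsilon$. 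Substituting these estimates gives
\[
\log(2\cdot \cmplx(\cS_\epsilon)) \ge \frac{\gamma D \eta}{r}\bigl(1 - \alpha^\ast \beta r\bigr) - \beta \eta \ge \frac{\gamma \eta}{2r}\cdot \frac{\delta/2}{\alpha^\ast + \delta/2}\cdot \epsilon^{-\beta} - \beta \eta.
\]
Since $\epsilon^{-\beta} = e^{\beta \eta}$ dwarfs $\eta$ for small $\epsilon$, the first term eventually dominates, yielding $\log(2\cdot \cmplx(\cS_\epsilon)) \ge c_1 \epsilon^{-\beta}$ for some constant $c_1 > 0$ depending only on $r, \alpha^\ast, \gamma, \delta$ and for $\epsilon$ sufficiently small.

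Finally, since $\beta > 1/((\alpha^\ast + \delta) r)$, one has $\epsilon^{-\beta} \ge \epsilon^{-1/((\alpha^\ast + \delta) r)}$ whenever $\epsilon < 1$, so exponentiating and absorbing the factor $\tfrac12$ into a slightly smaller constant $c < c_1$ for sufficiently small $\epsilon$ yields the asserted bound $\cmplx(\cS_\epsilon) \ge \exp(c\, \epsilon^{-1/((\alpha^\ast + \delta) r)})$. The main obstacle is really just bookkeeping: the discreteness of the grid $D = 2^k$ and the feasibility window $D^{\alpha^\ast r} \epsilon \le \bar\epsilon$ both force $D$ to be smaller than the continuous optimizer, and the slack $\delta$ is in effect used twice — once (via $\delta/2$) to keep the coefficient $1 - \alpha^\ast \beta r$ strictly positive, and a second time to pass from the effective exponent $\beta = 1/((\alpha^\ast + \delta/2) r)$ to the slightly weaker target exponent $1/((\alpha^\ast + \delta) r)$.
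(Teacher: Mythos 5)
Your proof is correct and follows essentially the same route as the paper's: choose $D=2^k$ just below the scale $\epsilon^{-1/(\alpha^\ast r)}$ so that the gain $\gamma D\eta/r$ beats the penalty $\gamma\alpha^\ast D\log D$, then verify feasibility $D^{\alpha^\ast r}\epsilon\le\bar\epsilon$ and absorb all lower-order corrections into the $\delta$-slack. The only cosmetic difference is in how the slack is allocated: the paper sets $D\sim\epsilon^{-1/((\alpha^\ast+\delta)r)}$ directly and spends the full $\delta$ on absorbing $D^{r/\gamma D}$ together with the factor $\tfrac12$ via $\tfrac12 X^{-\gamma D/r}=(2^{r/\gamma D}X)^{-\gamma D/r}$, whereas you set $D\sim\epsilon^{-1/((\alpha^\ast+\delta/2)r)}$ and split $\delta$ between keeping the coefficient $1-\alpha^\ast\beta r$ strictly positive and stepping down to the target exponent $1/((\alpha^\ast+\delta)r)$. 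Your rounding convention $k=\lfloor\beta\eta/\log 2\rfloor$ also cleanly guarantees $D\le\epsilon^{-\beta}$, sidestepping the small rounding-direction slip in the paper's choice of $k$.
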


\begin{proof}
Write $\bar{\epsilon} = e^{-\beta}$ for $\beta \in \R$. Fix a small parameter $\delta > 0$. Since we restrict attention to $\epsilon \le \tilde{\epsilon}$, we have
\[
\left(
e^{\beta}\epsilon
\right)^{-1/(\alpha^\ast+\delta)r} 
\ge 
\left(e^{\beta}\tilde{\epsilon}
\right)^{-1/(\alpha^\ast+\delta)r} = 1,
\]
provided that $\tilde{\epsilon} \le \bar{\epsilon}$.
Given $\epsilon \le \tilde{\epsilon}$, choose $k\in \N$, such that
\[
2^{k-1} < \left(e^\beta\epsilon\right)^{-1/(\alpha^\ast+\delta)r} \le 2^{k}.
\]
Let $D = 2^k$. Note that this defines a function $D = D(\epsilon)$. For any $\epsilon$, we can write 
\begin{align}
\label{eq:De}
D(\epsilon) = \xi \, \left(e^\beta \epsilon\right)^{-1/(\alpha^\ast+\delta)r},
\end{align}
for some $\xi \in (1/2,1]$. We now note that for $\epsilon \le \tilde{\epsilon}$,
\[
\frac{1}{\gamma D} 
= \frac{\left(e^\beta\epsilon\right)^{1/(\alpha^\ast+\delta)r}}{\gamma\xi} 
\le \frac{2\left(e^\beta\tilde{\epsilon}\right)^{1/(\alpha^\ast+\delta)r}}{\gamma}.
\]
Decreasing the size of $\tilde{\epsilon} = \tilde{\epsilon}(r,\gamma,\alpha^\ast,\bar{\epsilon},\delta)$ further, we can ensure that for $\epsilon \le \tilde{\epsilon}$,
\[
\frac{1}{\gamma D(\epsilon)} \le \frac{2 \left(e^\beta\tilde{\epsilon}\right)^{1/(\alpha^\ast+\delta)r}}{\gamma} \le \frac{\delta}{2}.
\]
Note also that $2^{r/\gamma D} \le 2^{\delta r/2} \le D^{\delta r/2}$ for any $D$ of the form $D= 2^k$, $k\in \N$.
It thus follows that for any given $\epsilon \le \tilde{\epsilon}$, and with our particular choice of $D = D(\epsilon)$ satisfying \eqref{eq:De}, we have
\[
2^{r/\gamma D} D^{\alpha^\ast r + r/\gamma D} \epsilon 
\le
D^{(\alpha^\ast + \delta)r} \epsilon 
=
e^{-\beta} \xi^{(\alpha^\ast + \delta)r} \le e^{-\beta}.
\]
Note that this in particular implies that $D^{\alpha^\ast r} \epsilon
\le e^{-\beta} = \bar{\epsilon}$.
We conclude that
\begin{align*}
\cmplx(\cS_\epsilon) 
&\ge 
\frac12 (D^{\alpha^\ast r + r/\gamma D} \epsilon )^{-\gamma D / r}
\\
&=
(2^{r/\gamma D}D^{\alpha^\ast r + r/\gamma D} \epsilon )^{-\gamma D / r}
\\
&\ge 
(D^{(\alpha^\ast + \delta)r} \epsilon )^{-\gamma D / r}
\\
&\ge 
\exp\left( \frac{\beta\gamma D}{r}\right)
\\
&=
\exp\left( \frac{\beta\gamma \xi e^{-1/(\alpha^\ast+\delta)r} }{r}\epsilon^{-1/(\alpha^\ast+\delta)r}\right)
\\
&\ge 
\exp\left( \frac{\beta\gamma e^{-1/(\alpha^\ast+\delta)r} }{2r}\epsilon^{-1/(\alpha^\ast+\delta)r}\right).
\end{align*}
Upon defining $c = c(r,\gamma,\alpha^\ast,\bar{\epsilon},\delta)$ as
\[
c := \frac{\beta\gamma e^{-1/(\alpha^\ast+\delta)r} }{2r},
\]
we obtain the lower bound
\[
\cmplx(\cS_\epsilon)
\ge
\exp\left( c\epsilon^{-1/(\alpha^\ast+\delta)r}\right), \quad \forall \, \epsilon \le \bar{\epsilon}.
\]
This concludes the proof of Lemma \ref{lem:sup}.
\end{proof}

In the lemma below, we provide a simple result on Fr\'echet differentiability which was used in our proof of Proposition \ref{thm:codexp}:

\begin{lemma}{(Fr\'echet differentiability of a series)}
\label{lem:frechet}
Assume that we are given a bounded family of functions $f_{D} \in C^{r}(\R^D)$ indexed by integers $D = 2^k$, $k\in \N$, such that $\Vert f_D \Vert_{C^{r}(\R^D)} \le 1$ for all $D$. Let $e^\ast_1,e^\ast_2,\dots: \cX \to \R$ be a sequence of linear functionals, such that $\Vert e^\ast_j \Vert_{\cX^\ast} \le M$ for all $j\in \N$. Let $c,\alpha > 0$ be given, and assume that $\alpha^\ast > 1+\alpha$. Then, the functional $\cS^\dagger: \cX \to \R$ defined by the series
\[
\cS^\dagger(u) := \sum_{k=1}^\infty 2^{- \alpha^\ast r k} 
f_{2^k}
\left(
c2^{\alpha k} (e^\ast_{2^{k}}( u), \dots, e^\ast_{2^{k+1}-1}(u))
\right),
\]
is $r$-times Fr\'echet differentiable.
\end{lemma}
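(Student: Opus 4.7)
The plan is to exhibit $\cS^\dagger$ as a series of $r$-times Fr\'echet differentiable functionals whose partial sums, together with all derivatives up to order $r$, converge absolutely and uniformly on bounded subsets of $\cX$. A standard term-by-term differentiation argument will then give the conclusion.

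Concretely, for each $k \in \N$ I would define the bounded linear map
\[
L_k : \cX \to \R^{2^k}, \qquad L_k u := c\, 2^{\alpha k}\bigl(e^\ast_{2^k}(u), e^\ast_{2^k+1}(u), \dots, e^\ast_{2^{k+1}-1}(u)\bigr),
\]
and the summand $F_k(u) := 2^{-\alpha^\ast r k} f_{2^k}(L_k u)$, so that $\cS^\dagger = \sum_{k\ge 1} F_k$. Since $f_{2^k} \in C^r(\R^{2^k})$ and $L_k$ is continuous linear, the chain rule ensures that $F_k$ is $r$-times Fr\'echet differentiable, with $j$-th derivative
\[
D^j F_k(u)[v_1, \dots, v_j] = 2^{-\alpha^\ast r k}\, D^j f_{2^k}(L_k u)\bigl[L_k v_1, \dots, L_k v_j\bigr], \quad 0 \le j \le r.
\]

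The heart of the argument is a uniform bound on the multilinear operator norm of $D^j F_k$. The hypothesis $\Vert f_{2^k} \Vert_{C^{r}(\R^{2^k})} \le 1$ gives $|\partial^\beta f_{2^k}| \le 1$ for $|\beta|\le r$, so expanding the $j$-linear form in coordinates yields the crude but sufficient estimate
\[
\bigl|D^j f_{2^k}(y)[w_1,\dots,w_j]\bigr| \le \prod_{s=1}^j \Vert w_s \Vert_{\ell^1(\R^{2^k})}.
\]
The $\ell^1$ norm of $L_k v$ is controlled via $\Vert e^\ast_i \Vert_{\cX^\ast} \le M$:
\[
\Vert L_k v \Vert_{\ell^1} \le c\, 2^{\alpha k} \cdot 2^k M\, \Vert v \Vert_\cX = cM\, 2^{(\alpha+1)k}\, \Vert v \Vert_\cX.
\]
Combining these gives the uniform bound (over $u\in \cX$)
\[
\Vert D^j F_k(u) \Vert_{\mathrm{op}} \le (cM)^j\, 2^{k[(\alpha+1)j - \alpha^\ast r]}, \qquad 0 \le j \le r.
\]
For $j \le r$, the exponent is at most $r[(\alpha+1) - \alpha^\ast] < 0$ by the assumption $\alpha^\ast > 1+\alpha$. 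Hence $\sum_{k\ge 1} D^j F_k$ converges absolutely in the uniform operator norm on all of $\cX$, for every $0 \le j \le r$.

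Finally, I would apply the standard Banach-space Weierstrass-type result: if each $F_k \in C^r(\cX;\R)$ and the series $\sum_k D^j F_k$ converges uniformly on bounded sets for each $j = 0, 1, \dots, r$, then $\sum_k F_k$ is $r$-times Fr\'echet differentiable with derivatives given by termwise differentiation. Inductively applying this to $j = 1, 2, \dots, r$ shows that $\cS^\dagger$ is $r$-times Fr\'echet differentiable. The only obstacle worth flagging is the multilinear operator-norm bookkeeping, which is handled cleanly by the $\ell^1 / \ell^\infty$ duality on the finite-dimensional slot $\R^{2^k}$; everything else is routine.
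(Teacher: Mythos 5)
Your proof is correct and follows essentially the same route as the paper: bound the $\ell$-th Fr\'echet derivative of each summand uniformly in $u$ via the $\Vert f_D\Vert_{C^r}\le 1$ and $\Vert e_j^\ast\Vert\le M$ hypotheses, observe that $\alpha^\ast > 1+\alpha$ makes the resulting geometric series in $k$ converge, and conclude by term-by-term differentiation of a uniformly convergent series of derivatives. Your $\ell^1/\ell^\infty$-duality phrasing of the multilinear norm bound is just a tidier way of writing the paper's coordinate sum over $j_1,\dots,j_\ell$, and the resulting estimate $(cM)^j\,2^{k[(\alpha+1)j-\alpha^\ast r]}$ matches the paper's $(cM)^\ell\,2^{(\alpha+1)\ell k}\cdot 2^{-\alpha^\ast r k}$.
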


\begin{proof}
By assumption on $f_{2^k}$ and the linearity of the functionals $e^\ast_j$, each nonlinear functional $\cF_k(u) :=
f_{2^k}
\left(
c2^{\alpha k} (e^\ast_{2^{k}}( u), \dots, e^\ast_{2^{k+1}-1}(u))
\right)$ in the series defining $\cS^\dagger$ is $r$-times continuously differentiable. Fixing $u\in \cX$, let us denote $x = c2^{\alpha k} (e^\ast_{2^{k}}( u), \dots, e^\ast_{2^{k+1}-1}(u))$. The $\ell$-th total derivative $d^\ell \cF_k$ of $\cF_k$ ($\ell \le r$) is given by
\[
d^\ell \cF_k(u)[v_1,\dots, v_\ell]
=
c^\ell 2^{\alpha \ell k} 
\sum_{j_1,\dots,j_\ell =1}^{2^{k}}
\frac{\partial^\ell f_{2^k}(x)}{\partial x_{j_1} \dots \partial x_{j_\ell}} 
\prod_{s=1}^\ell e^\ast_{2^{k}+j_s-1}(v_s).
\]
By assumption, we have
\[
\left|
\frac{\partial^\ell f_{2^k}(x)}{\partial x_{j_1} \dots \partial x_{j_\ell}}
\right|
\le 
1.
\]
Since the sum over $j_1,\dots, j_\ell$ has $2^{k\ell}$ terms, and since the functionals are bounded $\Vert e^\ast_j \Vert\le M$ by assumption, we can now readily estimate the operator norm $\Vert d^\ell \cF_k(u) \Vert$ for $\ell \le r$ by
\[
\Vert d^\ell \cF_k(u) \Vert
\le c^\ell 2^{\alpha \ell k} 2^{k\ell} M^\ell
\le
(cM)^r 2^{(\alpha+1)r k}.
\]
In particular, for any $\ell \le r$, the series
\[
\sum_{k=1}^\infty 2^{-\alpha^\ast r k} \Vert d^\ell \cF_k(u) \Vert
\le 
\sum_{k=1}^\infty 2^{-[\alpha^\ast -(\alpha+1)] r k} (cM)^r < \infty,
\]
is uniformly convergent. Thus, $\cS^\dagger$ is a uniform limit of $r$-times continuously differentiable functionals, all of whose derivatives of order $\ell \le r$ are also uniformly convergent. From this, it follows that $\cS^\dagger$ is itself $r$-times continuously Fr\'echet differentiable.
\end{proof}

\subsection{Proof of Corollary \ref{cor:codexp}}
\label{Acor:codexp}

Let $\cY = \cY(\Omega;\R^p)$ be a function space with continuous embedding in $C(\Omega;\R^p)$. We will only consider the case $p=1$, the case $p>1$ following by an almost identical argument; Let $\phi \ne 0 \in \cY$ be a non-trivial function. Since $\cY = \cY(\Omega)$ is continuously embedded in $C(\Omega)$, it follows that point evaluation $\ev_y(\phi) = \phi(y)$ is continuous. Given that $\phi$ is non-trivial, there exists $y\in \Omega$, such that $\ev_y(\phi) \ne 0$. We may wlog assume that $\ev_y(\phi) = \phi(y) = 1$. Let $\cF^\dagger: \cX \to \R$ be a functional exposing the curse of parametric complexity, as constructed in Theorem \ref{thm:codexp}. We define an $r$-times Fr\'echet differentiable operator $\cS^\dagger: \cX \to \cY$ by $\cS^\dagger(u) := \cF^\dagger(u) \phi$.

The claim now follows immediately by observing that 
\[
\ev_y \circ \cS^\dagger(u) = \cF^\dagger(u), \quad \forall \, u \in \cX,
\]
and by noting that if $\cS_\epsilon: \cX \to \cY$ is an operator of neural network-type, then $\ev_y \circ \cS_\epsilon: \cX \to \R$ is a functional of neural network-type, and by assumption, with $C := \Vert \ev_y \Vert_{\cY\to \R} $, 
\begin{align*}
\sup_{u\in K} |\cF^\dagger(u) - \ev_y \circ \cS_\epsilon(u)|
&=
\sup_{u\in K} |\ev_y \circ \cS^\dagger(u) - \ev_y \circ \cS_\epsilon(u)|
\\
&\le
\sup_{u\in K} C \Vert \cS^\dagger(u) -  \cS_\epsilon(u)\Vert_{\cY}
\\
&\le
C \epsilon.
\end{align*}
By our choice of $\cF^\dagger:\cX \to \R$, this implies that the complexity of $\ev_y \circ \cS_\epsilon$ is lower bounded by an exponential bound $\ge \exp(c\epsilon^{-1/(\alpha+1+\delta)r})$ for some constant $c = c(\alpha, \delta, r)$. This in turn implies that
\[
\cmplx(\cS_\epsilon) 
= \sup_{y\in \Omega} \cmplx(\ev_y \circ \cS_\epsilon) 
\ge \exp(c\epsilon^{-1/(\alpha+1+\delta)r}).
\]
This lower bound implies the exponential lower bound of Corollary \ref{cor:codexp}.

\subsection{Proofs of Lemmas \ref{lem:c-pca} -- \ref{lem:Cs}}
\label{A:lemmas}

\subsubsection{Proof of Lemma \ref{lem:c-pca}}
\label{A:c-pca}

\begin{proof}{(Lemma \ref{lem:c-pca})}
We want to show that a PCA-Net neural operator $\cS = \cR \circ \Psi \circ \cE$ is of neural network-type, and aim to estimate $\size(\Psi)$ in terms of $\cmplx(\cS)$. To this end, we 
assume that $\cY = \cY(\Omega; \R^p)$ is a Hilbert space of functions. Since $\cR$ is by definition linear, then given an evaluation point $y \in \Omega$, the mapping
$\beta \mapsto \cR(\beta)(y) \equiv \ev_y \circ \cR(\beta)$ defines a linear map $\ev_y \circ \cR: \R^{\DY} \to \R^p$. We can represent $\ev_y\circ \cR$ by  matrix multiplication: $\ev_y \circ \cR(\beta) = V_y  \beta$, with $V_y \in \R^{p\times \DY}$. The encoder $\cE: \cX \to \R^{\DX}$ is linear by definition, thus we can take $\cL := \cE$ for the linear map in the definition of ``operator of neural network-type''. Define a neural network $\Phi_y: \R^{\DX} \to \R^p$ by $\Phi_y(\alpha) = V_y\Psi(\alpha)$. Then, we have the identity
\begin{align*}
\ev_y \circ \cS(u)
= (\ev_y \circ \cR) \circ \Psi (\cE u)
= \Phi_y(\cL u), 
\end{align*}
for all $u \in \cX$. This shows that $\cS$ is of neural network-type. We now aim to estimate $\cmplx(\cS)$ in terms of $\size(\Psi)$. To this end, write $\Psi(\alpha) = [\Psi_1(\alpha),\dots, \Psi_{\DY}(\alpha)]$ with component mappings $\Psi_j: \R^{\DX} \to \R$. Let $\cJ = \set{j}{j\in \{1,\dots, \DY\}, \; \Psi_j \ne 0}$ be the subset of indices for which $\Psi_j: \R^{\DX} \to \R$ is not the zero function. Define a (sparsified) matrix $\hat{V}_y \in \R^{p \times \DY}$ with $j$-th column $[\hat{V}_y]_{:,j}$ defined by
\begin{align*}
[\hat{V}_y]_{:,j} := 
\begin{cases}
[{V}_y]_{:,j}, & j\in \cJ, \\
0 & j\notin \cJ.
\end{cases}
\end{align*}
Then, we have $\Vert \hat{V}_y \Vert_0 \le p |\cJ| \le p\, \size(\Psi)$, and identity $\Phi_y(\alpha) = \hat{V}_y \Psi(\alpha)$ for all $\alpha \in \R^{\DX}$.
Thus, using the concept of sparse concatenation \eqref{eq:calculus2}, we can upper bound
the complexity, $\cmplx(\cS)$, in terms of the $\size(\Psi)$ of the neural network $\Psi$  as follows:
\[
\cmplx(\cS) \le \sup_y \size(\Phi_y) \le \sup_y 2\Vert V_y\Vert_0 + 2\size(\Psi) \le 2(p+1) \size(\Psi).
\]
This is the claimed lower bound on $\size(\Psi)$.
\end{proof}

\subsubsection{Proof of Lemma \ref{lem:c-don}}
\label{A:c-don}

\begin{proof}{(Lemma \ref{lem:c-don})}
We observe that with $D := \DX$, for any $y\in \Omega$ the encoder $\cL := \cE: \cX \to \R^D$ is linear, and 
\[
\ev_y \circ \cS(u) = \Phi_y(\cL u), \quad \forall \, u \in  \cX,
\]
where $\Phi_y(\alpha) = \sum_{j=1}^{\DY} \Psi_j(\alpha) \phi_j(y)$ defines a neural network, $\Phi_y: \R^{D} \to \R^p$.
Thus, DeepONet $\cS = \cR \circ \Psi \circ \cE$ is of neural network-type. To estimate the complexity, $\cmplx(\cS)$, we let $\cJ^2$ be the set of indices $(i,j)\in \{1,\dots, \DY\}^2$, such that the $i$-th component, $[\phi_j(y)]_i$, of the vector $\phi_j(y) \in \R^p$ is non-zero. Let $\hat{V}_y \in \R^{p\times \DY}$ be the matrix with entries $[\hat{V}_y]_{i,j} = [\phi_j(y)]_i$, so that $\Phi_y(\alpha) = \hat{V}_y \Psi(\alpha)$ for all $\alpha$. Note that $\hat{V}_y$ has precisely $|\cJ^2|$ non-zero entries, and that $|\cJ^2|\le \size(\phi)$, since $[\phi_j(y)]_i \ne 0$ is non-zero for all $(i,j) \in \cJ^2$. Thus, it follows that
\[
\cmplx(\cS) \le \sup_y \size(\Phi_y) \le 2|\cJ^2| +2\, \size(\Psi) \le
2(\size(\phi)+\size(\Psi)).
\] 
\end{proof}

\subsubsection{Proof of Lemma \ref{lem:c-nomad}}
\label{A:c-nomad}

\begin{proof}{(Lemma \ref{lem:c-nomad}})
To see the complexity bound, we recall that for any $y\in \Omega$, we can choose $\Phi_y(\slot) := Q(\Psi(\slot),y)$ to obtain the representation 
\[
\ev_y \circ \cS(u) = \Phi_y(\cL u),
\]
where $\cL u \equiv \cE(u)$ is given by the linear encoder.
The composition of two neural networks $Q(\slot,y)$ and $\Psi(\slot)$ can be represented by a neural network of size at most $2\size(\Psi) + 2\size(Q(\slot,y)) \le 2\size(\Psi) + 2\size(Q)$. We thus have the lower bound,
 \[
\cmplx(\cS) \le \sup_y\size(\Phi_y) \le 2\size(Q)+2\size(\Psi).
 \]
 This shows the claim.
\end{proof}

\subsection{Proof of Lemma \ref{lem:fno-nnt}}
\label{A:fno-nnt}

\begin{proof}
Our aim is to show that $\cS: L^2(\T;\R) \to \R$, 
\[
\cS(u) := \int_{\T} \sigma(u(x)) \, dx,
\]
is not of neural network-type. We argue by contradiction. Suppose that $\cS$ was of neural network type. By definition, there exists a linear mapping $\cL: \cL^2(\T;\R) \to \R^\ell$,  and a neural network $\Phi: \R^\ell \to \R$, for some $\ell\in \N$ such that 
\begin{align}
\label{eq:id1}
\cS(u) = \Phi(\cL u).
\end{align}
In the following we will consider $\varphi_j(x) := \sin(jx)$ for $j\in \N$. Since $\sigma(t) \ge 0$ for all $t\in \R$, and $\sigma(t) > 0$ for $t>0$, we have
\begin{align}
\label{eq:id2}
\cS(u) = \int_\Omega \sigma(u(x)) \, dx = 0 
\quad \iff \quad
u(x) \le 0, \quad \forall x \in [0,2\pi].
\end{align}
Now, fix any $D>\ell$, and consider $\iota: \R^D \to L^2(\T;\R)$, $\iota \beta := \sum_{j=1}^D \beta_j \sin(jx)$. Since $\iota$ and $\cL$ are linear mappings, it follows that 
\[
\cL \circ \iota: \R^D \to \R^\ell, \quad \beta \mapsto \cL \iota \beta,
\]
is a linear mapping. Represent this linear mapping by a matrix $W\in \R^{\ell\times D}$. In particular, by \eqref{eq:id1}, we have
\begin{align}
\label{eq:id3}
\cS(\iota \beta) = \Phi(W\beta), \quad \forall \beta \in \R^D.
\end{align}
Since $D>\ell$, it follows that $\ker(W) \ne \{0\}$ is nontrivial. Let $\beta\ne 0$ be an element in the kernel, and consider $u_\beta(x) := \iota \beta(x) = \sum_{j=1}^D \beta_j \sin(jx)$. Since $u_\beta(x)$ is not identically equal to $0$, either $u_\beta(x)$ or $-u_\beta(x) = u_{-\beta}(x)$ must be positive somewhere in the domain $\T$. Upon replacing $\beta \to -\beta$ if necessary, we may wlog assume that $u_\beta(x) > 0$ for some $x\in \T$. From \eqref{eq:id2} and \eqref{eq:id3}, it now follows that
\[
0 \ne \cS(u_\beta) = \cS(\iota \beta) = \Phi(W\beta) = \Phi(0) = \cS(0) = 0.
\]
A contradiction. Thus, $\cS$ cannot be of neural network-type.
\end{proof}

\subsection{Proof of curse of parametric complexity for FNO, Theorem  \ref{thm:fno-cod}}
\label{A:cod2}

Building on the curse of parametric complexity for operators of neural network-type, we next show that FNOs also suffer from a similar curse, as stated in Theorem  \ref{thm:fno-cod}.

\begin{proof}{(Theorem  \ref{thm:fno-cod})}
Let $\cS^\dagger: \cX \to \R$ be an $r$-times Fr\'echet differentiable functional satisfying the conclusion of Theorem \ref{thm:codexp} (CoD for functionals of neural network-type). In the following, we show that $\cS^\dagger$ also satisfies the conclusions of Theorem  \ref{thm:fno-cod}. Our proof argues by contradiction: we assume that $\cS^\dagger$ can be approximated by a family of discrete FNOs satisfying the error and complexity bounds of Theorem  \ref{thm:fno-cod}, i.e. 
\begin{enumerate}
\item \emph{Complexity bound:} There exist constant $c>0$, such that the discretization parameter $N_\epsilon\in \N$, and the total number of non-zero parameters $\size(\cS^{N_\epsilon}_\epsilon)$ are bounded by $N_\epsilon,\size(\cS^{N_\epsilon}_\epsilon) \le \exp(c\epsilon^{-1/(1+\alpha+\delta) r})$, for all $\epsilon \le \bar{\epsilon}$.
\item \emph{Accuracy:} We have
\[
\sup_{u\in K} \left| \cS^\dagger(u) - \cS^{N_\epsilon}_\epsilon(u) \right| \le \epsilon,
\quad \forall \, \epsilon > 0.
\]
\end{enumerate}
Then we show that this implies the existence of a family of operators of \emph{neural network-type} $\tS_\epsilon$, which satisfy for some $\delta'>0$, and for all sufficiently small $\epsilon > 0$,
\begin{itemize}
\item complexity bound
$\cmplx(\tS_\epsilon) \le \exp(c\epsilon^{-1/(1+\alpha+\delta') r})$,
\item and error estimate
$
\max_{u\in K} \left| \cS^\dagger(u) -  \tS_\epsilon(u)\right| \le \epsilon,
$
\end{itemize}
with $c>0$ a potentially different constant.
By choice of $\cS^\dagger$, the existence of $\tS_\epsilon$ is ruled out by Theorem \ref{thm:codexp}, providing the desired contradiction.

In the following, we discuss the construction of $\tS_\epsilon$:
Let $\cS_\epsilon^{N_\epsilon}$ be a family of FNOs satisfying (1) and (2) above. Fix $\epsilon > 0$ for the moment. To simplify notation, we will write $N={N_\epsilon}$, in the following. We recall that, by definition, the discretized FNO $\cS_\epsilon^{N}$ can be written as a composition $\cS^{N}_\epsilon = \tQ \circ \cL_L \circ \dots \circ \cL_1 \circ \cR$, where \[
\cR: u(x) \mapsto \chi(x_{j_1,\dots, j_d},u(x_{j_1,\dots, j_d})),
\]
and 
\[
\tQ: \left( v_{j_1,\dots, j_d} \right) \mapsto \frac{1}{N^d}\sum_{j_1,\dots, j_d} q(y_{j_1,\dots, j_d},v_{j_1,\dots, j_d}),
\]
are defined by neural networks $\chi$ and $q$, respectively, $\cL_\ell$ is of the form 
\[
\cL_\ell: v \mapsto \sigma\left(W_\ell v + \cF_N^{-1} \left(\hat{T}_\ell \cF_N v\right) + b_\ell \right),
\quad
v = \left(v_{j_1,\dots, j_d}\right)_{j_1,\dots, j_d = 1}^N,
\]
with $W_\ell \in \R^{d_v\times d_v}$, Fourier multiplier $\hat{T}_\ell = \{[\hat{T}_\ell]_k\}_{\Vert k \Vert_{\ell^\infty}\le k_{\mathrm{max}}}$, where $[\hat{T}_\ell]_k \in \C^{d_v\times d_v}$, and $\cF_N$ ($\cF_N^{-1}$) denote discrete (inverse) Fourier transform, and where the bias $b$ is determined by its Fourier coefficients $\hat{b}_k$, $\Vert k \Vert_{\ell^\infty} \le k_{\mathrm{max}}$. We also recall that the size of $\cS^N_\epsilon$ is given by the total number of non-zero components, i.e.
\[
\size(\cS^N_\epsilon) 
=
\size(\chi) + \size(q) + \sum_{\ell=1}^L \left\{
\Vert W_\ell \Vert_{0} + \Vert \hat{T}_\ell \Vert_0 + \sum_{\Vert k\Vert_{\ell^\infty} \le k_{\mathrm{max}}} \Vert \hat{b}_k \Vert_{0}
\right\}.
\]

We now observe that, after flattening the tensor 
\[
\left(v_{j_1,\dots, j_d}\right) \in \R^{N\times \dots \times N \times d_v} \simeq \R^{d_v N^d},
\]
the (linear) mapping $v \mapsto W_\ell v$ can be represented by multiplication against a sparse matrix with at most $\Vert W_\ell \Vert_0 N^d$ non-zero entries. For the non-local operator $\cF^{-1}_N \hat{T}_\ell \cF_N$, we note that for $v\in \R^{d_v N^d}$, the number $\kappa$ of components (channels) that need to be considered is bounded by $\kappa \le \min(d_v,\Vert \hat{T}_\ell \Vert_0)\le \Vert \hat{T}_\ell \Vert_0$. Discarding any channels that are zeroed out by $\hat{T}_\ell$, a naive implementation of $\cF^{-1}_N \hat{T}_\ell \cF_N$ thus amounts to a matrix representation of a linear mapping $\R^{\kappa N^d} \to \R^{\kappa N^d}$,
requiring at most $\kappa^2 N^{2d} \le \Vert \hat{T}_\ell \Vert_0^2 N^{2d}$ non-zero components. Thus, each discretized hidden layer $\cL_\ell$ can be represented \emph{exactly} by an ordinary neural network layer $L_\ell = \sigma(A_\ell v + c_\ell)$ with matrix $A_\ell \in \R^{d_v N^d \times d_v N^d}$ and bias $c_\ell\in \R^{d_v N^d}$, satisfying the following bounds on the number of non-zero components:
\[
\Vert A_\ell \Vert_0 \le \Vert W_\ell \Vert_0 N^d + \Vert \hat{T}_\ell \Vert_0^2 N^{2d}, 
\quad
\Vert c_\ell \Vert_0 \le \sum_{|k|\le k_{\mathrm{max}}} \Vert [\hat{b}_\ell]_k \Vert_0 N^d.
\]
Similarly, the input and output layers $\cR$ and $\tQ$ can be represented \emph{exactly} by ordinary neural networks $R: \R^{k N^d} \to \R^{d_v N^d}$ and $Q: \R^{d_v N^d} \to \R$, with 
\[
\size(R) \le N^d \size(\chi), 
\quad
\size(Q) \le N^d \size(q),
\]
obtained by parallelization of $\chi$, resp. $q$, at each grid point.
Given the above observations, we conclude that, with canonical identification $\R^{N\times \dots \times N \times k} \simeq \R^{kN^d}$ and $\R^{N\times \dots \times N \times p} \simeq \R^{pN^d}$, the discretized FNO $\cS^N_\epsilon$ can be represented by an ordinary neural network $\Phi: \R^{kN^d} \to \R^{pN^d}$, $\Phi = Q \circ L_L \circ \dots \circ L_1 \circ R$, with 
\begin{align*}
\size(\Phi) 
&\le 
\sum_{\ell=1}^L 
\left\{
\Vert W_\ell \Vert_0 N^d + \Vert \hat{T}_\ell \Vert_0^2 N^{2d} + \sum_{|k|\le k_{\mathrm{max}}}\Vert \hat{b}_k \Vert_0 N^d
\right\}
\\
&\qquad
+N^d \size(\chi) + N^d \size(q)
\\
&\le 
N^{2d} \size(\cS_\epsilon^N)^2.
\end{align*}
By assumption on $\cS_\epsilon^N$ (for which we aim to show that it leads to a contradiction), we have $\size(\cS_\epsilon^N)\le \exp(c\epsilon^{-1/(1+\alpha+\delta) r})$, and $N_\epsilon\le \exp(c\epsilon^{-1/(1+\alpha+\delta) r})$. It follows that
\[
\size(\Phi)
\le N^{2d} \size(\cS_\epsilon^N)^2
\le
\exp((2d+2) c\epsilon^{-1/(1+\alpha+\delta) r}).
\]
In addition, $\Phi$ trivially defines an operator of neural network-type, $\tS_\epsilon: C^s(\Omega; \R^k) \to \R$, by 
\[
\tS_\epsilon(u) := \Phi\left( (u(x_{j_1,\dots, j_d}))_{j_1,\dots, j_d=1}^N \right),
\]
To see this, we simply note that the point-evaluation mapping $\cL: u \mapsto u(x_{j_1,\dots, j_d})_{j_1,\dots, j_d=1}^N$ is linear, and hence we have the representation
\[
\tS_\epsilon(u) = \Phi(\cL u).
\]
By the above construction, we have $\tS_\epsilon(u) \equiv \cS_\epsilon^{N_\epsilon}(u)$ for all input functions $u$.

To summarize, assuming that a family of FNOs $\cS_\epsilon^{N_\epsilon}$ exists, satisfying (1) and (2) above, we have constructed a family $\tS_\epsilon$ of operators of neural network type, with 
\[
\sup_{u\in K} \left| \cS^\dagger(u) - \tS_\epsilon(u) \right| 
=
\sup_{u\in K} \left| \cS^\dagger(u) - \cS_\epsilon^{N_\epsilon}(u) \right| 
\le \epsilon,
\]
and
\[
\cmplx(\tS_\epsilon) \le \size(\Phi) \le \exp(c''\epsilon^{-1/(1+\alpha+\delta) r}),
\]
where $c'' = (2d+2)c>0$ is a constant independent of $\epsilon$.

By Theorem \ref{thm:codexp}, and fixing any $0 < \delta'  < \delta$, we also have the following lower bound for all sufficiently small $\epsilon$:
\[
\exp(c' \epsilon^{-1/(1+\alpha+\delta') r}) \le \cmplx(\tS_\epsilon),
\]
where $c'>0$ is independent of $\epsilon$. From the above two-sided bounds, it thus follows that
\[
\exp(c' \epsilon^{-1/(1+\alpha+\delta') r}) 
\le 
\cmplx(\tS_\epsilon)
\le 
\exp(c''\epsilon^{-1/(1+\alpha+\delta) r}),
\]
for all $\epsilon$ sufficiently small, and where by our choice of $\delta',\delta$: $0< 1+\alpha+\delta' < 1+\alpha+\delta$ and $c',c''> 0$ are constants. Since $\epsilon^{-1/(1+\alpha+\delta') r}$ grows faster than $\epsilon^{-1/(1+\alpha+\delta) r}$ as $\epsilon \to 0$, this leads to the desired contradiction. We thus conclude that a family $\cS_\epsilon^{N_\epsilon}$ of discretized FNOs as assumed above cannot exist. This concludes the proof.
\end{proof}

\section{Short-time existence of $C^r$-solutions}\label{A:E}

The proof of short-time existence of solutions of the Hamilton-Jacobi equation \eqref{eq:HJ} is based on the following Banach space implicit function theorem:

\begin{theorem}[{Implicit Function Theorem, see e.g. \cite[Section 2.2]{chow2012methods}}]
\label{thm:implicit}
Let $U\subset X$, $V \subset Y$ be open subsets of Banach spaces $X$ and $Y$. Let 
\[
F: U\times V \to Z, \quad (u,v) \mapsto F(u,v),
\]
be a $C^p$-mapping ($p$-times continuously Fr\'echet differentiable). Assume that there exist $(u_0,v_0) \in U\times V$ such that $F(u_0,v_0) = 0$, and such that the Jacobian with respect to $v$, evaluated at the point $(u_0,v_0)$,
\[
D_v F(u_0,v_0): X \to Z,
\]
is a linear isomorphism. Then there exists a neighbourhood $U_0\subset U$ of $u_0$, and a $C^p$-mapping $\psi: U_0 \to V$, such that
\[
F(u,\psi(u)) = 0, \quad \forall \, u \in U_0.
\]
Furthermore, $\psi$ is unique, in the sense that for any $u\in U_0$, $F(u,v) = 0$ implies $v = \psi(u)$.
\end{theorem}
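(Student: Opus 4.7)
The plan is to reduce the theorem to a parameter-dependent fixed-point problem and apply the Banach contraction mapping principle, then bootstrap the regularity of the resulting solution map $\psi$ from continuity up to $C^p$. First I would perform cosmetic reductions: by translating coordinates we may assume $u_0 = 0$, $v_0 = 0$, and $F(0,0) = 0$; by post-composing $F$ with the bounded linear inverse $[D_v F(0,0)]^{-1} \in \mathcal{L}(Z,Y)$, which exists by the isomorphism hypothesis and is bounded by the open mapping theorem, we may further assume $Z = Y$ and $D_v F(0,0) = \mathrm{Id}_Y$. Composition with a bounded linear isomorphism preserves $C^p$-regularity and the zero set of $F$, so the reduced problem is equivalent to the original.

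Next, set $G(u,v) := v - F(u,v)$, so that $F(u,v) = 0$ if and only if $v$ is a fixed point of $v \mapsto G(u,v)$. Since $D_v G(0,0) = \mathrm{Id} - \mathrm{Id} = 0$ and $D_v G$ is continuous, we may choose $r>0$ and an open neighborhood $U_0$ of $0$ such that $\|D_v G(u,v)\|_{\mathrm{op}} \le 1/2$ for all $(u,v) \in U_0 \times \overline{B}_r(0)$. The mean-value inequality then yields the contraction estimate $\|G(u,v)-G(u,v')\| \le \tfrac{1}{2}\|v-v'\|$ on this set. Shrinking $U_0$ if necessary so that $\|G(u,0)\| = \|F(u,0)\| \le r/2$ for $u \in U_0$, the triangle inequality shows $G(u,\cdot)$ maps the closed ball $\overline{B}_r(0)$ into itself. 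The Banach fixed-point theorem then produces, for each $u \in U_0$, a unique $\psi(u) \in \overline{B}_r(0)$ satisfying $F(u,\psi(u)) = 0$, which gives the local uniqueness claim. Continuity of $\psi$ follows from the parameter-dependent fixed-point estimate: adding and subtracting $G(u,\psi(u'))$ yields $\|\psi(u)-\psi(u')\| \le 2\|F(u,\psi(u'))-F(u',\psi(u'))\|$, which tends to $0$ as $u \to u'$ by continuity of $F$.

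For $C^p$-regularity of $\psi$, I would first establish Fr\'echet differentiability by a direct expansion of the defining relation $F(u,\psi(u)) = 0$, identifying the candidate derivative as $D\psi(u) = -[D_v F(u,\psi(u))]^{-1} D_u F(u,\psi(u))$. This inverse exists near $(0,0)$ because the set of invertible bounded operators is open in $\mathcal{L}(Y,Y)$ and the map $A \mapsto A^{-1}$ is smooth on this set; a routine remainder estimate confirms that the displayed formula indeed provides the correct first-order approximation. Higher regularity then follows by induction on $p$: the derivative formula exhibits $D\psi$ as a composition of $C^{p-1}$ maps (products, sums, and operator inversion, together with $D_uF$ and $D_vF$), and the chain rule in Banach spaces closes the induction.

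The main obstacle I expect is the regularity bootstrap: the existence, uniqueness, and continuity assertions are direct consequences of the Banach fixed-point theorem with parameter, but verifying that $\psi$ actually attains $C^p$-regularity requires careful use of the smoothness of operator inversion on the group of invertible elements of $\mathcal{L}(Y,Y)$ and an inductive application of the chain rule for Fr\'echet-differentiable maps between Banach spaces, tracking that the derivative formula remains within the class $C^{p-1}$ at each stage.
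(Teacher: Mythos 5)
The paper does not supply its own proof of this theorem; it is quoted as a standard result from the literature (Chow--Hale, Section 2.2), so there is no "paper proof" to compare against. Your argument is the standard contraction-mapping proof of the Banach-space implicit function theorem, and it is correct: the reduction to $D_vF(0,0)=\mathrm{Id}_Y$ via composition with a bounded inverse, the fixed-point reformulation $G(u,v)=v-F(u,v)$, the mean-value estimate giving a uniform contraction constant $1/2$, the $\overline{B}_r$-invariance by shrinking $U_0$, the parameter-continuity estimate, and the regularity bootstrap via the formula $D\psi(u)=-[D_vF(u,\psi(u))]^{-1}D_uF(u,\psi(u))$ together with smoothness of operator inversion are all the expected steps. (As a side remark, the paper's statement has a typo, writing $D_vF(u_0,v_0)\colon X\to Z$ where the domain should be $Y$.)

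One point worth flagging for precision: the Banach fixed-point argument gives uniqueness of $\psi(u)$ only within the closed ball $\overline{B}_r(v_0)$, whereas the statement as written asserts that $F(u,v)=0$ forces $v=\psi(u)$ for (apparently) all $v\in V$. These are reconciled only if one is also permitted to shrink $V$ to a small ball around $v_0$; this is the standard convention, and it is how the theorem is actually used in the proof of Lemma~\ref{lem:invert} (where local uniqueness is what matters). Your proof establishes exactly the correct local-uniqueness statement, so the argument is sound; you may simply want to state explicitly that $V$ is replaced by the ball $\overline{B}_r(v_0)$, so that the quantifier over $v$ in the uniqueness claim matches what the contraction principle delivers.
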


As a first step toward proving short-time existence for \eqref{eq:HJ}, we prove that under the no-blowup Assumption \ref{ass:blowup}, the semigroup $\Psi_t^\dagger$ \eqref{eq:flowmap} exists for any $t\ge 0$. 

\begin{proof}{(Lemma \ref{lem:traj-existence})}
By classical ODE theory, it is immediate that for any initial data $(q_0,p_0,z_0) \in \Omega \times \R^d \times \R$ a maximal solution of the ODE system 
(\ref{eq:flow}) exists, and is unique, over a short time-interval. It thus remains to prove that this solution in fact exists globally, i.e. that the solution does not escape to infinity in finite time. Since $z$ solves $\dot{z} = \cL(q,p)$, with a right-hand side that only depends on $q$ and $p$, it will suffice to prove that the Hamiltonian trajectory $\dot{q} = \nabla_p H(q,p)$, $\dot{p} = -\nabla_q H(q,p)$ with initial data $(q_0,p_0)$ exists globally in time. To this end, we note that, by Assumption \ref{ass:blowup}, we have
\[
\frac{d}{dt} (1+|p|^2) 
=
2p \cdot \dot{p} 
= -2p \cdot \nabla_q H(q,p)
\le 
2L_H (1+|p|^2).
\]
Gronwall's lemma thus implies that $|p| \le \sqrt{1+|p_0|^2} \exp(L_H t)$ remains bounded for all $t\ge 0$. This shows that blow-up cannot occur 
in the $p$-variable in a finite time. On the other hand, the right-hand side of the ODE system is periodic in $q$, and hence (by the bound on $p$) blow-up is also ruled out for $q$. In particular, Assumption \ref{ass:blowup} ensures that the Hamiltonian trajectory $t \mapsto (q(t),p(t))$ exists globally in time. As argued above, this in turn implies the global existence of the flow map $t\mapsto \Psi^\dagger_t$.
\end{proof}

We now apply the Implicit Function Theorem \ref{thm:implicit} to prove Lemma \ref{lem:invert}.

\begin{proof}{(Lemma \ref{lem:invert})}
Let $X = C^r_\per(\Omega) \times \R^d \times \R$ with $r\ge 2$. Define 
\[
F : X \times \R^d \to \R^d, 
\quad
(u_0,q,t; q_0) \mapsto q - q_t(q_0,\nabla_q u_0(q_0)),
\]
i.e. we set 
\[
F(u_0,q,t;q_0) := q - q_t(q_0, \nabla_q u_0(q_0)),
\]
with $q_t: \Omega \times \R^d \to \Omega$, $(q_0,p_0) \mapsto q_t(q_0,p_0)$ the spatial characteristic mapping of the Hamiltonian system \eqref{eq:flow} at time $t\ge 0$, where, recall, we have assumed that $u_0 \in \cF.$
Under Assumption \ref{ass:blowup}, the spatial characteristic mapping, and hence $F$, is well-defined for any $t\ge0$ (see Lemma \ref{lem:traj-existence}).
Since $H\in C^{r+1}(\Omega \times \R^d)$, the mapping $\Omega \times \R^d \times \R \to \Omega$, $(q_0,p_0,t) \mapsto q_t(q_0,p_0)$ is $C^r$. The mapping $F_1: \Omega \times C^r_\per(\Omega) \to \R^d$, $(q_0,u_0) \mapsto F_1(q_0,u_0) := \nabla_q u_0(q_0)$ is $C^{r-1}$ in the first argument, and is a continuous linear mapping in the second argument (and hence infinitely Frechet differentiable in the second argument). Hence, the composition 
\[
(q_0, u_0) \mapsto (q_0, \nabla_q u_0(q_0)) \mapsto q_t(q_0, \nabla_q u_0(q_0)),
\]
is a $C^{r-1}$ mapping. And, as a consequence, the mapping $F: X \times \R^d \to \R^d$ is a $C^{r-1}$ mapping. 

Since 
\[
F(u_0,q,t;q_0) = q - q_t(q_0, \nabla_q u_0(q_0)),
\]
we clearly have 
\[
F(u_0,q_0,0;q_0) = 0,
\]
for any $q_0 \in \Omega$ and $u_0 \in C^r_\per(\Omega)$, and the derivative with respect to the last argument
\[
D_{q_0}F(u_0,q_0,0;q_0): \R^d \to \R^d,
\]
is given by
\[
D_{q_0}F(u_0,q_0,0;q_0)
=
D_{q_0}\left[ q - q_0 \right] = -\bm{1}_{d\times d},
\]
which defines an isomorphism $\R^d \to \R^d$.

By the implicit function theorem, for any $\bar{q}_0 \in \Omega$ and $\bar{u}_0 \in C^r_\per(\Omega)$, there exist $\epsilon = \epsilon(\bar{q}_0,\bar{u}_0), r  = r(\bar{q}_0,\bar{u}_0), t^\ast = t^\ast(\bar{q}_0,\bar{u}_0) > 0$, and a mapping
\[
\psi_{\bar{q}_0,\bar{u}_0}: B_{\epsilon}(\bar{q}_0) \times B_{r}(\bar{u}_0) \times [0,t^\ast) \to \R^d,
\]
such that $F(q,u_0,t;q_0) = 0$ for 
\[
(q,u_0,t) \in  B_{\epsilon}(\bar{q}_0) \times B_{r}(\bar{u}_0) \times [0,t^\ast),
\]
if, and only if, 
\[
q_0 = \psi_{\bar{q}_0,\bar{u}_0}(q,u_0,t).
\]
Fix $\bar{u}_0\in C^r_\per(\Omega)$ for the moment. Since $\Omega$ is compact, we can choose a finite number of points $\bar{q}_0^{(1)}, \dots, \bar{q}_0^{(m)}$, such that 
\[
\Omega \subset \bigcup_{j=1}^m B_{\epsilon(\bar{q}_0^{\j},\bar{u}_0)}(\bar{q}_0^{\j}).
\]
Let 
\begin{align*}
t^\ast(\bar{u}_0) &:= \min_{j=1,\dots, m} t^\ast(\bar{q}^{\j}_0, \bar{u}_0),
\\
r(\bar{u}_0) &:= \min_{j=1,\dots, m} r(\bar{q}_0^{\j},\bar{u}_0).
\end{align*}
Due to the uniqueness property of each $\psi_{\bar{q}^{\j}_0,\bar{u}_0}$, $j=1,\dots,m$, all of these mappings have the same values on overlapping domains. Hence, we can combine them into a single map
\[
\psi_{\bar{u}_0}: \Omega \times B_{r(\bar{u}_0)}(\bar{u}_0) \times [0,t^\ast(\bar{u}_0)) \to \R^d.
\]
Furthermore, since $\psi_{\bar{q}^{\j}_0,\bar{u}_0} \in C^{r-1}$, we also have $\psi_{\bar{u}_0} \in C^{r-1}$.
Similarly, we can cover the compact set $\cF \subset C^r_\per(\Omega)$ by a finite number of open balls $B_{r(\bar{u}_0^{(k)})}(\bar{u}_0^{(k)})$, $k=1,\dots, K$,
\[
\cF \subset \bigcup_{k=1}^K B_{r(\bar{u}_0^{(k)})}(\bar{u}_0^{(k)}).
\]
Setting $T^\ast := \min_{k=1,\dots, K} t^\ast(\bar{u}_0^{(k)}) > 0$, the uniqueness property of $\psi_{\bar{u}_0^{(k)}}$ again implies that these mappings agree on overlapping domains. Hence,  we can combine them into a global map, and obtain a map
\[
\psi: \Omega \times \cF \times [0,T^\ast) \to \R^d,
\]
which satisfies $F(q,u_0,t;\psi(q,u_0,t)) = 0$ for all $q\in \Omega$, $u_0\in \cF$ and $t<T^\ast$. Furthermore, this $\psi$ is still a $C^{r-1}$ map and it is unique, in the sense that 
\[
F(q,u_0,t;q_0) = 0 
\quad \Leftrightarrow \quad
q_0 = \psi(q,u_0,t), \quad \forall \, q\in \Omega, \; u_0\in C^r_\per(\Omega), \; t < T^\ast,
\]
i.e. for any $u_0\in \cF$ and $t\in [0,T^\ast)$, we have $q = q_t(q_0,\nabla_q u_0(q_0))$ if and only if $q_0 = \psi(q,u_0,t)$. In particular, this shows that for any $u_0\in \cF$, $t \in [0,T^\ast)$, the $C^{r-1}$-mapping
\[
\Phi^{\dagger}_t(\slot;u_0): \Omega \to \Omega, 
\quad 
q_0 \mapsto \Phi^{\dagger}_t(q_0;u_0) := q_t(q_0,\nabla_q u_0(q_0)),
\]
is invertible, with inverse $q \mapsto \psi(q,u_0,t)\in C^{r-1}$. This implies the claim.
\end{proof}

Next, we apply Lemma \ref{lem:invert} to prove short-time existence of solutions for \eqref{eq:HJ}.

\begin{proof}{(Proposition \ref{prop:short-time})}
Let $T^\ast$ be the maximal time such that the spatial characteristic mapping $\Phi^{\dagger}_{t,u_0}: \Omega \to \Omega$, $q_0 \mapsto q_t(q_0,\nabla_qu_0(q_0))$ is invertible, for any $t\in [0,T^\ast)$ and for all $u_0\in \cF$. We have $T^\ast > 0$, by Lemma \ref{lem:invert}. We denote by $\Phi^\dagger_{-t,u_0}: \Omega \to \Omega$ the inverse $\Phi^{\dagger}_{-t,u_0} := [\Phi^{\dagger}_{t,u_0}]^{-1}$. By the method of characteristics, a solution $u(q,t)$ of the Hamilton-Jacobi equations must satisfy
\begin{align}
\label{eq:S}
u(q,t) = u_0(q_0) + \int_0^t \cL(q_\tau,p_\tau) \, d\tau,
\quad 
\cL(q,p):= p\cdot \nabla_p H(q,p) - H(q,p),
\end{align}
where $q_0 = \Phi^{\dagger}_{-t,u_0}(q)$, $q_\tau,p_\tau$ are the trajectories of the Hamiltonian ODE \eqref{eq:traj} with initial data $q_0,\nabla_qu_0(q_0)$. Given a fixed $u_0 \in \cF$, we use the above expression to \emph{define} $u: \Omega\times [0,T^\ast) \to \R$. 

We first observe that $u\in C^{r-1}(\Omega\times [0,T^\ast))$, as it is the composition of $C^{r-1}$ mappings, 
\[
u(q,t) = u_0(q_0) + \int_0^t \cL(q_\tau(q_0,p_0),p_\tau(q_0,p_0)) \, d\tau,
\]
where $q_0 = \Phi_{-t,u_0}(q)$, $p_0 = \nabla_q u_0(\Phi_{-t,u_0}(q))$ are $C^{r-1}$-functions of $q$.
In particular, since $r\ge 2$, this implies that $u$ is at least $C^1$. Evaluating $du(q_t,t)/dt$ along a fixed trajectory, we find that
\begin{align}
\label{eq:1}
\partial_t u(q_t,t) + H(q_t,p_t) = \left(p_t - \nabla_q u(q_t,t)\right)\cdot D_p H(q_t,p_t).
\end{align}
Thus, to show that $u$ is a classical solution of \eqref{eq:HJ}, it remains to show that $p_t = \nabla_q u(q_t,t)$ for all $t \in [0,T^\ast)$. Assume that $u\in C^2_\per$ for the moment. We first note that for the $j$-th component of $p_t - \nabla_q u(q_t,t)$, we have (with implicit summation over repeated indices, following the ``Einstein summation rule'')
\begin{align*}
\frac{d}{dt}\left[ p^j_t - \partial_{q^j} u(q_t,t) \right]
&=
-\partial_{q^j} H(q_t,p_t) - \partial^2_{q^j,q^k}u(q_t,t)  \partial_{p^k} H(q_t,p_t)
\\
&\qquad - \partial_{q^j} \partial_t u(q_t,t).
\end{align*}
Next, we note that by the invertibility of the spatial characteristic mapping $q_0 \mapsto q_t$, we can write \eqref{eq:1} in the form 
\[
\partial_t u(q,t) = -H(q,P(q,t)) + \left( P(q,t) - \nabla_q u(q,t) \right)\cdot D_p H(q,P(q,t)),
\]
where $P(q,t) := p_t\left(\Phi^{\dagger}_{-t,u_0}(q), \nabla_qu_0(\Phi^{\dagger}_{-t,u_0}(q))\right)$.
This implies that
\begin{align*}
\partial_{q^j} \partial_t u(q,t) 
&= -\partial_{q^j} H(q,P(q,t)) -  \partial_{p^k} H(q,P(q,t)) \partial_{q^j} P^k(q,t) \\
&\qquad 
+ \left( \partial_{q^j} P^k(q,t) - \partial^2_{q^j,q^k} u(q,t) \right)\cdot \partial_{p^k} H(q,P(q,t))
\\
&\qquad + 
\left( P^k(q,t) - \partial_{q^k} u(q,t) \right)\cdot \partial_{q^j} \left[\partial_{p^k} H(q,P(q,t))\right]
\\
&= -\partial_{q^j} H(q,P(q,t))  - \partial^2_{q^j,q^k} u(q,t) \cdot \partial_{p^k} H(q,P(q,t))
\\
&\qquad + 
\left( P^k(q,t) - \partial_{q^k} u(q,t) \right)\cdot A_{j,k}(q,P(q,t)).
\end{align*}
We point out that on the last line, we have introduced $A_{j,k}(q,t) := \partial_{q^j} \left[\partial_{p^k} H(q,P(q,t))\right]$ which is a continuous function of $q$ and $t$. Choosing now $q = q_t$, so that $P(q,t) = p_t$, we thus obtain
\begin{align*}
\partial_{q^j} \partial_t u(q_t,t) 
&= -\partial_{q^j} H(q_t,p_t)  - \partial^2_{q^j,q^k} u(q_t,t) \cdot \partial_{p^k} H(q_t,p_t)
\\
&\qquad + 
\left[ p^k_t - \partial_{q^k} u(q_t,t) \right]\cdot A_{j,k}(q,t).
\end{align*}
Substitution in the ODE for $p_t - \nabla_q u(q_t,t)$ yields
\[
\frac{d}{dt}\left[ p_t - \nabla_q u(q_t,t) \right]
=
- A(q_t,t) \cdot \left[p_t - \nabla_q u(q_t,t) \right].
\]
where $A(q_t,t)$ is the matrix with components $(A_{j,k}(q_t,t))$. Since $[p_t - \nabla_q u(q_t,t)]|_{t=0} = 0$, this implies that 
\begin{align} \label{eq:2}
p_t = \nabla_q u(q_t,t), \quad \forall \, t \in [0,T^\ast),
\end{align}
along the trajectory. At this point, the conclusion \eqref{eq:2} has been obtained under the assumption that $u\in C^2_\per$, which is only ensured \emph{a priori} if $r \ge 3$. 

To prove the result also for the case $r=2$, we can apply the above argument to smooth $H^\epsilon$, $u_0^\epsilon$, which approximate the given $H$ and $u_0$, and for $u^\epsilon$ defined by the method of characteristics \eqref{eq:S} with $u_0$, $H$ replaced by the smooth approximations $u_0^\epsilon$, $H^\epsilon$. Then, by the above argument, for any $\epsilon$-regularized trajectory $(q_t^\epsilon, p^\epsilon_t)$, we have 
\[
p_t^\epsilon = \nabla_q u^\epsilon(q_t^\epsilon,t).
\]
Choosing a sequence such that $H^\epsilon \overset{C^{r+1}}{\to} H$, $u_0^\epsilon \overset{C^r}{\to} u_0$ as $\epsilon \to 0$, the corresponding characteristic solution $u^\epsilon$ defined by \eqref{eq:S} (with $H^\epsilon$ and $u^\epsilon_0$ in place of $H$ and $u_0$) converges in $C^{r-1}$. Since $r-1=1$, this implies that $p_t = \lim_{\epsilon \to 0} p^\epsilon_t = \lim_{\epsilon\to 0} \nabla_q u^\epsilon(q^\epsilon_t,t) = \nabla_q u(q_t,t)$.

Thus, we conclude that $u \in C^{r-1}(\Omega\times [0,T^\ast))$ defined by \eqref{eq:S} is a classical solution of the Hamilton-Jacobi equations \eqref{eq:HJ}. We finally have to show that, in fact, $u\in C^r_\per(\Omega\times [0,T^\ast))$. $C^r$-differentiability in space follows readily from the fact that, by \eqref{eq:2}, we have $\nabla_q u(q_t,t) = p_t(q_0, \nabla_q u_0(q_0))$. By the invertibility of the spatial characteristic map, this can be equivalently written in the form 
\begin{align}
\label{eq:gradu}
\nabla_q u(q,t) = p_t(\Phi^\dagger_{-t,u_0}(q), \nabla_q u_0(\Phi^\dagger_{-t,u_0}(q))),
\end{align}
where on the right hand side, $(q_0,p_0) \mapsto p_t(q_0,p_0)$, $q_0 \mapsto \nabla_q u_0(q_0)$ and $(q,t)\mapsto \Phi^\dagger_{-t,u_0}(q,t)$ are all $C^{r-1}$ mappings. Thus, $\nabla_q u$ is a $C^{r-1}$-function. Furthermore, by \eqref{eq:HJ}, this implies that $\partial_t u = - H(q,\nabla_q u)$ is also a $C^{r-1}$ function. This allows us to conclude that $u \in C^r_\per(\Omega\times [0,T^\ast))$. The additional bound on $\Vert u(\slot,t) \Vert_{C^r}$ follows from the trivial estimate
\[
\Vert u(\slot,t) \Vert_{C^r}
\le
C\left(
\Vert u \Vert_{L^\infty}
+
\Vert \nabla_q u(\slot,t) \Vert_{C^{r-1}}
\right),
\]
combined with \eqref{eq:S} and \eqref{eq:gradu}, and the fact that $q\mapsto \Phi^\dagger_{-t,u_0}(q)$ is a $C^{r-1}$ mapping with continuous dependence on $u_0\in \cF$ and $t\in [0,T]$, and $(q_0,p_0) \mapsto p_t(q_0,p_0)$ is a $C^r$-mapping, so that 
\[
\Vert
\nabla_q u(\slot,t)
\Vert_{C^{r-1}}
\le
\sup_{t\in[0,T]} \sup_{u_0\in \cF} 
\Vert
p_t(\Phi^\dagger_{-t,u_0}(\slot), \nabla_q u_0(\Phi^\dagger_{-t,u_0}(\slot)))
\Vert_{C^{r-1}}
< \infty,
\]
for any initial data $u_0\in \cF$.
\end{proof}

\section{Reconstruction from scattered data}
\label{A:B}

The purpose of this appendix is to prove Proposition \ref{p:es}. This is achieved
through three lemmas, followed by the proof of the proposition itself, employing
these lemmas.

The first lemma is the following special case of the Vitali covering lemma, a well-known geometric result from measure theory (see e.g. \cite[Lemma 1.2]{stein2009real}):
\begin{lemma}[Vitali covering]
\label{lem:vitali}
If $h>0$ and $Q = \{q^1,\dots, q^N\}$ is a set for which the domain
\[
\Omega \subset \bigcup_{j=1}^N B_h(q^j),
\]
is contained in the union of balls of radius $h$ around the $q^j$, then there exists a subset $Q' = \{q^{i_1}, \dots, q^{i_m} \}$ such that $B_h(q^{i_k})\cap B_h(q^{i_\ell}) = \emptyset$ for all $k,\ell$, and 
\[
\Omega \subset \bigcup_{k=1}^m B_{3h}(q^{i_{k}}),
\]
\end{lemma}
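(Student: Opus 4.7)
The plan is to prove this by a greedy selection argument, which is the standard approach for Vitali-type covering lemmas in the finite setting. The key idea is to build $Q'$ by iteratively adding points $q^{i_k}$ whose associated balls $B_h(q^{i_k})$ are disjoint from all previously chosen balls, and then to show that the expanded balls $B_{3h}(q^{i_k})$ absorb all remaining balls $B_h(q^j)$.

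Concretely, I would initialize $Q' := \{q^{i_1}\}$ with any single point, say $q^{i_1} := q^1$. At step $k$, having selected $q^{i_1},\ldots, q^{i_k}$ with pairwise disjoint balls of radius $h$, I ask whether there exists $j \in \{1,\ldots, N\}$ such that $B_h(q^j) \cap B_h(q^{i_\ell}) = \emptyset$ for all $\ell = 1,\ldots, k$. If yes, I set $q^{i_{k+1}} := q^j$ and continue; if no, I terminate. Since $N$ is finite, the procedure terminates after some $m \le N$ steps, yielding $Q' = \{q^{i_1},\ldots, q^{i_m}\}$ with the required pairwise disjointness of the balls $B_h(q^{i_k})$ built in by construction.

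It remains to verify the covering property $\Omega \subset \bigcup_{k=1}^m B_{3h}(q^{i_k})$. For this I would fix any $q^j \in Q$ and show that $B_h(q^j) \subset B_{3h}(q^{i_k})$ for some $k$. If $q^j \in Q'$, this is immediate. Otherwise, by maximality of the greedy procedure, there exists some $k$ for which $B_h(q^j) \cap B_h(q^{i_k}) \ne \emptyset$; picking $y$ in this intersection and applying the triangle inequality gives $|q^j - q^{i_k}| \le |q^j - y| + |y - q^{i_k}| < 2h$. Then, for any $x \in B_h(q^j)$, another application of the triangle inequality yields $|x - q^{i_k}| < h + 2h = 3h$, so $B_h(q^j) \subset B_{3h}(q^{i_k})$. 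Combining this with the hypothesis $\Omega \subset \bigcup_{j=1}^N B_h(q^j)$ concludes the proof.

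There is no real obstacle here; the only substantive step is the triangle inequality argument that upgrades ``intersection'' (distance $<2h$) to ``containment in a ball of radius $3h$''. The greedy construction itself is trivially terminating in the finite setting, so no compactness or Zorn-type argument is needed, in contrast to the general Vitali covering lemma.
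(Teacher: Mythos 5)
Your proof is correct and takes essentially the same approach as the paper, which cites Stein--Shakarchi and, in Remark~\ref{rem:vitali-alg}, explicitly describes the same greedy selection algorithm as the basis for the pruning algorithm. The triangle-inequality step upgrading intersection to containment in the $3h$-ball is the standard and intended argument.
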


\begin{remark}
\label{rem:vitali-alg}
Given $Q=\{q^1,\dots, q^N\}$, the proof of Lemma \ref{lem:vitali} (see e.g. \cite[Lemma 1.2]{stein2009real}) shows that the subset $Q'\subset Q$ of Lemma \ref{lem:vitali} can be found by the following greedy algorithm, which proceeds by iteratively adding elements to $Q'$ (the following is in fact the basis for Algorithm \ref{alg:pruning} in the main text):
\begin{enumerate}
\item Start with $j_1 = 1$ and $Q'_1 = \{q^1\}$. 
\item Iteration step: given $Q'_m = \{q^{j_1},\dots, q^{j_m}\}\subset Q$, check whether there exists $q^k \in Q$, such that
\[
B_h(q^k) \cap \bigcup_{\ell=1}^m B_h(q^{j_\ell}) = \emptyset.
\]
\begin{itemize}
\item If yes: define $j_{n+1} := k$ and $Q'_{m+1} := \{q^{j_1},\dots, q^{j_{m+1}}\}$.
\item If not: terminate the algorithm and set $Q' := Q'_m = \{q^{j_1},\dots, q^{j_{m}}\}$.
\end{itemize}
\end{enumerate}
\end{remark}

Based on Lemma \ref{lem:vitali}, we can now state the following basic result:

\begin{lemma}
\label{lem:stab} 
Given a set $Q = \{q^1,\dots, q^N\} \subset \Omega$ with fill distance $h_{Q,\Omega}$, the subset $Q'\subset Q$ determined by the pruning Algorithm \ref{alg:pruning} has fill distance $h_{Q',\Omega} \le 3h_{Q,\Omega}$ and separation distance $\rho_{Q'} \ge h_{Q,\Omega}$, and $Q'$ is quasi-uniform with distortion constant $\kappa = 3$, i.e.
\[
\rho_{Q'} \le h_{Q',\Omega} \le 3\rho_{Q'}.
\]
\end{lemma}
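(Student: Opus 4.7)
The plan is to recognize Algorithm \ref{alg:pruning} (run with the natural choice of radius $h := h_{Q,\Omega}$) as an instance of the greedy Vitali selection described in Remark \ref{rem:vitali-alg}, and then to read off the three claimed bounds from the conclusions of Lemma \ref{lem:vitali}. The hypothesis of Lemma \ref{lem:vitali} is automatic: by the very definition of the fill distance,
\[
h_{Q,\Omega} = \sup_{q\in\Omega}\min_{j=1,\dots,N} |q-q^j|,
\]
so every $q\in\Omega$ lies within distance $h_{Q,\Omega}$ of some $q^j$, giving $\Omega\subset \bigcup_{j=1}^N B_{h_{Q,\Omega}}(q^j)$. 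The Vitali procedure then supplies a subset $Q'=\{q^{j_1},\dots,q^{j_m}\}$ which (i) has pairwise disjoint balls $B_{h_{Q,\Omega}}(q^{j_k})$, and (ii) satisfies $\Omega\subset \bigcup_{k=1}^m B_{3h_{Q,\Omega}}(q^{j_k})$; this output coincides exactly with the one produced by Algorithm \ref{alg:pruning}.

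From property (i), any two distinct selected points satisfy $|q^{j_k}-q^{j_\ell}|\ge 2 h_{Q,\Omega}$, hence by the definition of separation distance,
\[
\rho_{Q'} = \tfrac{1}{2}\min_{k\ne\ell}|q^{j_k}-q^{j_\ell}| \ge h_{Q,\Omega}.
\]
From property (ii), for any $q\in\Omega$ there exists $k$ with $|q - q^{j_k}|\le 3 h_{Q,\Omega}$, so $h_{Q',\Omega}\le 3h_{Q,\Omega}$. Chaining these two bounds immediately yields $h_{Q',\Omega}\le 3h_{Q,\Omega}\le 3\rho_{Q'}$, which is the upper half of the quasi-uniformity estimate, and also reproduces the fill-distance claim $h_{Q',\Omega}\le 3h_{Q,\Omega}$.

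It remains to establish the lower half of quasi-uniformity, $\rho_{Q'}\le h_{Q',\Omega}$, which is the only point that does not follow directly from Vitali. Here I would exploit the convexity of the ambient domain $\Omega=[0,2\pi]^d$. Selecting a pair $q^{j_k},q^{j_\ell}$ realizing $|q^{j_k}-q^{j_\ell}|=2\rho_{Q'}$, the midpoint $p:=\tfrac12(q^{j_k}+q^{j_\ell})$ lies in $\Omega$ by convexity, and $|p-q^{j_k}|=|p-q^{j_\ell}|=\rho_{Q'}$. For any other $q^{j_r}\in Q'$ the triangle inequality combined with $|q^{j_k}-q^{j_r}|\ge 2\rho_{Q'}$ yields $|p-q^{j_r}|\ge \rho_{Q'}$, so $\min_r|p-q^{j_r}|\ge \rho_{Q'}$ and therefore $h_{Q',\Omega}\ge \rho_{Q'}$.

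The main (minor) obstacle is this last step: Vitali's lemma on its own gives comparability of $\rho_{Q'}$ and $h_{Q',\Omega}$ in one direction only, and the reverse inequality must be supplied by a geometric argument using convexity of the domain, which is available in the paper's setting $\Omega=[0,2\pi]^d$. The rest of the proof is essentially a transcription of the conclusions of Lemma \ref{lem:vitali} into the language of fill and separation distances.
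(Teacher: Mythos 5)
Your proof is correct and follows essentially the same route as the paper: invoke the Vitali covering lemma (Lemma \ref{lem:vitali}) applied to the balls $B_{h_{Q,\Omega}}(q^j)$ to obtain the disjointness and $3h$-covering properties of the pruned set, read off $\rho_{Q'}\ge h_{Q,\Omega}$ and $h_{Q',\Omega}\le 3h_{Q,\Omega}$, and then supply the remaining inequality $\rho_{Q'}\le h_{Q',\Omega}$ by the midpoint/convexity argument. One small point in your favor: you explicitly restrict the triangle-inequality step $|q^{j_k}-q^{j_r}|\ge 2\rho_{Q'}$ to $r$ \emph{other} than $k,\ell$ and handle those two indices by direct computation, which cleans up a minor imprecision present in the paper's version of the same argument (where the displayed chain of inequalities does not literally hold for $r=k$).
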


\begin{proof}
By definition of $h_{Q,\Omega}$ (cf. Definition \ref{def:scatter}), we have 
\[
\Omega \subset \bigcup_{j=1}^N B_{h_{Q,\Omega}}(q^j).
\]
Let $Q' = \{q^{j_1}, \dots, q^{j_m}\} \subset Q$ be the subset determined by Algorithm \ref{alg:pruning} (reproduced in Remark \ref{rem:vitali-alg}). $Q'$ satisfies the conclusion of the Vitali covering lemma \ref{lem:vitali} with $h = h_{Q,\Omega}$; thus,
\begin{align} 
\label{eq:cover}
\Omega \subset \bigcup_{k=1}^m B_{3h}(q^{j_k}), 
\end{align}
and $B_{h}(q^{j_k}) \cap B_{h}(q^{j_\ell}) = \emptyset$ for all $k\ne \ell$. The inclusion \eqref{eq:cover} implies that 
\[
h_{Q',\Omega}
=
\sup_{q\in \Omega} \min_{k=1,\dots, m} \left|q - q^{j_k}\right| \le 3h = 3h_{Q,\Omega}. 
\]
On the other hand, the fact that $B_{h}(q^{j_k}) \cap B_{h}(q^{j_\ell}) = \emptyset$ implies that $\frac12 \left| q^{j_\ell} - q^{j_k}\right| \ge h = h_{Q,\Omega}$ for all $k\ne \ell$, and hence $\rho_{Q'} \ge h_{Q,\Omega}$. Thus, we have
\[
h_{Q',\Omega} \le 3 h_{Q,\Omega} \le 3\rho_{Q'}.
\]
The bound $\rho_{Q'} \le h_{Q',\Omega}$ always holds (for convex sets); to see this, choose $q^{j_k}, q^{j_\ell}$ such that $\rho_{Q'} = \frac12 \left| q^{j_k}- q^{j_\ell}\right|$ realizes the minimal separation distance. Let $\bar{q}$ be the mid-point $\frac12 \left( q^{j_{k}} + q^{j_\ell}\right)$, so that $\left| \bar{q} - q^{j_k} \right| = \left| \bar{q} - q^{j_\ell}\right| = \frac12 \left| q^{j_k} - q^{j_\ell}\right| = \rho_{Q'}$. We note that $\left|\bar{q} - q^{j_r}\right| \ge \rho_{Q'}$ for any $q^{j_r}\in Q'$, since
\[
2\rho_{Q'}
\le
\left|q^{j_r}- q^{j_k}\right|
\le
\left| q^{j_r} - \bar{q} \right| + \left|\bar{q} - q^{j_k} \right|
=
\left|\bar{q} - q^{j_r}\right| + \rho_{Q'},
\]
i.e. $\left|\bar{q} - q^{j_r}\right| \ge \rho_{Q'}$ for $r=1,\dots, m$.
We conclude that 
\[
\rho_{Q'} \le \min_{r=1,\dots, m} \left|\bar{q} - q^{j_r}\right| \le \sup_{q\in Q} \min_{r=1,\dots, m} \left| q - q^{j_r}\right| = h_{Q',\Omega}.
\]
\end{proof}

\begin{lemma}
\label{prop:recerror}
Let $\Omega = [0,2\pi]^d \subset \R^d$, let $r\ge 2$ and let $\kappa \ge 1$. There exists $C = C(d,r,\kappa)>0$ and $\gamma = \gamma(d,r,\kappa) >0$, such that for all $f^\dagger\in C^{r}(\Omega)$ and all $Q\subset \Omega$, quasi-uniform with respect to $\kappa$, and with fill distance $h_{Q,\Omega}$, the approximation error of the moving least squares interpolation \eqref{eq:mls} with exact data $z^j = f^\dagger(q^j)$, and parameter $\delta := \gamma h_{z,Q}$, is bounded as follows:
\[
\Vert f^\dagger - f_{z,Q} \Vert_{L^\infty} \le C h_{Q,\Omega}^{r} \Vert f^\dagger \Vert_{C^{r}}.
\]
\end{lemma}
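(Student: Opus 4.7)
The plan is to invoke the standard moving least squares error analysis developed in Wendland's book \cite{wendland2004scattered} (Chapter 4), which combines two ingredients: polynomial reproduction of the MLS operator at polynomial degree $n=r-1$, and local Taylor expansion of the smooth target $f^\dagger$. The quasi-uniformity assumption enters solely to ensure that the local normal equations defining the MLS fit are uniformly well-conditioned as $h_{Q,\Omega} \to 0$.

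First, I would establish that the minimization problem \eqref{eq:mls} has a unique solution at every $q\in \Omega$. Fix $\delta := \gamma h_{Q,\Omega}$ with $\gamma = \gamma(d,r,\kappa)$ to be chosen. By the fill-distance bound, $B_{\delta/2}(q)$ contains at least one data point $q^{j_0}\in Q$ whenever $\gamma \ge 2$; by quasi-uniformity (specifically the upper bound $h_{Q,\Omega}\le \kappa \rho_Q$), one obtains in turn a lower bound on $\rho_Q$ in terms of $h_{Q,\Omega}$, and hence, by a volume-counting argument, a uniform upper bound on the number of points of $Q$ falling in $B_\delta(q)$. Choosing $\gamma = \gamma(d,r,\kappa)$ sufficiently large ensures that $B_{\delta/2}(q)\cap Q$ is unisolvent for the polynomial space $\pi_{r-1}$, uniformly in $q$ and in $h_{Q,\Omega}$; this is the content of the construction in \cite[Chap. 4]{wendland2004scattered}, and it provides the desired conditioning of the associated weighted least squares system.

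Second, the local polynomial reproduction theorem \cite[Thm. 3.14, Cor. 4.8]{wendland2004scattered} then yields Lagrange-type coefficient functions $u_j^\ast(q)$ supported in $\{j : q^j \in B_\delta(q)\}$, satisfying
\[
f_{z,Q}(q) = \sum_{j=1}^N z^j\, u_j^\ast(q), \quad \sum_{j=1}^N p(q^j) u_j^\ast(q) = p(q) \quad \forall\, p \in \pi_{r-1},
\]
together with the uniform stability bound $\sum_{j=1}^N |u_j^\ast(q)| \le C_1$, with $C_1=C_1(d,r,\kappa)$. Given these, the pointwise error bound is standard: fix $q\in \Omega$ and let $p_q$ be the degree-$(r-1)$ Taylor polynomial of $f^\dagger$ at $q$. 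By Taylor's theorem, $|f^\dagger(q^j) - p_q(q^j)| \le C_2 \delta^{r} \Vert f^\dagger \Vert_{C^r}$ for every $q^j \in B_\delta(q)$, with $C_2=C_2(d,r)$. Using polynomial reproduction,
\[
|f^\dagger(q) - f_{z,Q}(q)| = \Bigl| p_q(q) - \sum_{j} f^\dagger(q^j) u_j^\ast(q) \Bigr| \le \sum_{j} |p_q(q^j)-f^\dagger(q^j)|\,|u_j^\ast(q)| \le C_1 C_2 \gamma^r\, h_{Q,\Omega}^{r} \Vert f^\dagger \Vert_{C^r}.
\]
Taking the supremum over $q \in \Omega$ yields the claim with $C := C_1 C_2 \gamma^r$.

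The main obstacle is the careful bookkeeping behind the uniform stability constant $C_1$: one must verify that the conditioning of the local weighted least squares problem does \emph{not} degenerate as $h_{Q,\Omega}\to 0$, which is precisely where quasi-uniformity (and not just fill-distance smallness) is indispensable, since otherwise local point clusters could force the Gram matrix of the local polynomial basis to become singular. Once $\gamma$ is chosen large enough (as a function of $d,r,\kappa$) to guarantee unisolvency at every evaluation point, the remaining estimates reduce to Taylor expansion and the reproduction identity, which are routine. All quantitative dependences can be read off from \cite[Thm. 3.14, Cor. 4.8]{wendland2004scattered}.
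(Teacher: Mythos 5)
Your proof is correct and takes essentially the same approach as the paper: the paper simply cites \cite[Cor. 4.8]{wendland2004scattered} as a black box, whereas you unpack the standard argument behind that corollary (local polynomial reproduction with uniformly bounded quasi-Lagrange functions, combined with Taylor expansion of $f^\dagger$), citing the same source. The additional detail is accurate but not a different route.
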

\begin{proof}
This is immediate from \cite[Corollary 4.8]{wendland2004scattered}.
\end{proof}

Based on Lemma \ref{prop:recerror}, we can also derive a reconstruction estimate, when the interpolation data $\{q^{j,\dagger},f^\dagger(q^{j,\dagger})\}_{j=1}^N$ is only known up to small errors in both the position $q^j \approx q^{j,\dagger}$ and the values $z^j \approx f^\dagger(q^{j,\dagger})$, and this is Proposition \ref{p:es} stated in the
main body of the text. We now prove this proposition.

\begin{proof}{(Proposition \ref{p:es})}
Let $\psi: \R^d \to [0,1]$ be a compactly supported $C^\infty$ function, such that $\psi(0) = 1$, $\Vert \psi \Vert_{L^\infty} \le 1$ and $\psi(q) = 0$ for $|q|\ge 1/2$. Define 
\[
\tilde{f}(q) := f^\dagger(q) + \sum_{j=1}^N \left(z^j - f^\dagger({q}^j)\right) \psi\left(\frac{q-{q}^j}{\rho_Q}\right).
\]
Note that, by assumption, we have 
\[
|{q}^j - {q}^k| \ge |q^{j,\dagger} - q^{k,\dagger}| - 2\delta \ge 2(\rho_{Q}-\rho) \ge \rho_Q,
\]
for all $j\ne k$. In particular this implies that the functions
\[
q \mapsto \psi\left(\frac{q-{q}^j}{\rho_Q}\right),
\]
have disjoint support for different values of $j$. Thus, for any $j=1,\dots, N$, we have
$\tilde{f}({q}^j) = z^j$, and ${f}_{z,Q}$ is the (exact) moving least squares approximation of $\tilde{f}$. From Proposition \ref{prop:recerror} it follows that
\[
\Vert \tilde{f} - {f}_{z,Q} \Vert_{L^\infty} \le C \Vert \tilde{f} \Vert_{C^{r}} h^{r}_{Q,\Omega}.
\]
We now note that 
\begin{align*}
|z^j - f^\dagger({q}^j)|
&\le
|z^j - f^\dagger({q}^{j,\dagger})| + |f^\dagger(q^{j,\dagger}) - f^\dagger({q}^j) |
\le
\epsilon + \Vert f^\dagger \Vert_{C^{r}} \rho.
\end{align*}
On the one hand  (due to the disjoint supports of the bump functions), we can now make the estimate
\[
\Vert \tilde{f} - f^\dagger \Vert_{L^\infty}
\le
\max_{j=1,\dots, N} |z^j - f^\dagger({q}^j)|
\le
\epsilon + \Vert f \Vert_{C^{r}} \rho.
\]
On the other hand (again due to the disjoint supports of the bump functions), we also have
\begin{align*}
\Vert \tilde{f}\Vert_{C^{r}}
&\le
\Vert f^\dagger \Vert_{C^{r}}
+ 
\max_{j=1,\dots, N} |z^j - f^\dagger({q}^j)| \rho_Q^{-r} \Vert \psi \Vert_{C^{r}}
\\
&\le 
\Vert f^\dagger \Vert_{C^{r}}
+ 
\frac{\epsilon + \Vert f^\dagger \Vert_{C^{r}} \rho}{\rho_Q^{r}} \Vert \psi \Vert_{C^{r}}
\end{align*}
These two estimates imply that
\begin{align*}
\Vert f^\dagger - {f}_{z,Q} \Vert_{L^\infty}
&\le
\Vert f^\dagger -\tilde{f}\Vert_{L^\infty} + \Vert \tilde{f} - {f}_{z,Q} \Vert_{L^\infty}
\\
&\le 
\left(\epsilon + \Vert f^\dagger \Vert_{C^{1}} \rho\right) + C \left(\Vert f^\dagger \Vert_{C^{r}} + \left(\epsilon + \Vert f^\dagger \Vert_{C^{1}} \rho\right) \frac{\Vert \psi \Vert_{C^{r}}}{\rho^{r}_Q} \right) h_{Q,\Omega}^{r}
\end{align*}
Taking into account that $h_{Q,\Omega} / \rho_Q \le \kappa$ is bounded, that $\Vert \psi \Vert_{C^{r}}$ is independent of $f^\dagger$, and introducing a new constant $C = C(d,r,\kappa, \Vert \psi \Vert_{C^{r}})>0$, we obtain
\begin{align*}
\Vert f^\dagger - {f}_{z,Q} \Vert_{L^\infty}
&\le
C\left( \Vert f^\dagger \Vert_{C^{r}} h^{r}_{Q,\Omega} +  \epsilon + \Vert f^\dagger \Vert_{C^1} \rho \right),
\end{align*}
as claimed.
\end{proof}

\section{Complexity estimates for HJ-Net approximation}
\label{A:C}

In the last section, we have shown that given a set of initial data $\cF \subset C^r_\per(\Omega)$, with $r \ge 2$ and $\Omega = [0,2\pi]^d$, the method of characteristics is applicable for a positive time $T^\ast > 0$. In the present section, we will combine this result with the proposed HJ-Net framework to derive quantitative error and complexity estimates for the approximation of the solution operator of \eqref{eq:HJ}. This will require estimates for the approximation of the Hamiltonian flow $\Psi_t \approx \Psi^\dagger_t$ by neural networks, as well as an estimate for the reconstruction error resulting from the pruned moving least squares operator $\cR$.

\subsection{Approximation of Hamiltonian flow}

\begin{proof}{(Proposition \ref{prop:ahf})}
It follows from \cite[Theorem 1]{yarotsky_error_2017} (and a simple scaling argument), that there exists a constant $C = C(r,d)>0$, such that for any $\epsilon>0$, there exists a neural network $\Psi_t \approx \Psi_t^\dagger$ satisfying the bound, 
\begin{align}
\label{eq:aprox0}
\sup_{(q_0,p_0,z_0) \in \Omega \times [-M,M]^d \times [-M,M]}
\left|
\Psi_t(q_0,p_0,z_0) - \Psi_t^\dagger(q_0,p_0,z_0)
\right|
\le 
\epsilon,
\end{align}
and such that 
\begin{align*}
\depth(\Psi_{t}) &\le C \log\left( \frac{M^r \Vert \Psi_t^\dagger \Vert_{C^{r}}}{\epsilon} \right), \\
\size(\Psi_{t}) &\le C \left(\frac{M^r \Vert \Psi_t^\dagger \Vert_{C^{r}}}{\epsilon}\right)^{(2d+1)/r} \log\left( \frac{M^r \Vert \Psi_t^\dagger \Vert_{C^{r}}}{\epsilon} \right),
\end{align*}
where $\Vert \Psi_t^\dagger \Vert_{C^{r}} = \Vert \Psi_t^\dagger \Vert_{C^{r}(\Omega\times [-M,M]^d\times [-M,M])}$ denotes the $C^r$ norm on the relevant domain. To prove the claim, we note that 
for any trajectory $(q_t,p_t)$ satisfying the Hamiltonian ODE system
\begin{align}
\label{eq:H1}
\dot{q}_t = \nabla_{p} H(q_t,p_t), 
\quad
\dot{p}_t = - \nabla_{q} H(q_t,p_t),
\end{align}
with initial data $(q_0,p_0) \in \Omega\times [-M,M]^d$, we have by assumption \ref{ass:blowup}:
\[
\frac{d}{dt} (1+|p_t|^2) = p_t \cdot \dot{p}_t = -2p_t \cdot \nabla_q H(q_t,p_t) \le 2L_H (1+|p_t|^2).
\]
Integration of this inequality implies $|p_t|^2 \le (1+|p_0|^2) \exp(2L_H t) \le (1+dM^2)\exp(2L_Ht)$. Taking also into account that $M\ge 1$, this implies that $p_t \in [-\beta M, \beta M]^d$, where $\beta = (1+\sqrt{d})\exp(L_H t)$ depends only on $d$, $L_H$ and $t$.  Since $p_t$ remains uniformly bounded and since $q\mapsto H(q,p)$ is $2\pi$-periodic, it follows that for any $(q_0,p_0) \in \Omega\times [-M,M]^d$ the Hamiltonian trajectory $(q_t,p_t)$ starting at $(q_0,p_0)$ stays in $\Omega\times [-\beta M, \beta M]$. 

Recall that $\Psi_t^\dagger(q_0,p_0,z_0) = (q_t,p_t,z_t)$ is the flow map of the Hamiltonian ODE \eqref{eq:H1} combined with the action integral 
\begin{align} \label{eq:H2}
z_t = z_0 + \int_0^t \left[p_t\cdot \nabla_p H(q_t,p_t) - H(q_t,p_t) \right] \, d\tau.
\end{align}
Since the Hamiltonian trajectories starting at $(q_0,p_0) \in \Omega\times [-M,M]^d$ are confined to $\Omega\times [-\beta M, \beta M]^d$ and since the right-hand side of \eqref{eq:H1} and \eqref{eq:H2} involve only first-order derivatives of $H$, it follows from basic ODE theory that there exists $C = C(\Vert H \Vert_{C^{r+1}(\Omega\times [-\beta M, \beta M]^d},M,t,r)>0$, such that the $C^r$-norm of the flow can be bounded by
\[
\Vert \Psi_t \Vert_{C^r(\Omega\times [-M,M]^d \times [-M,M])}
\le
C(\Vert H \Vert_{C^{r+1}(\Omega\times [-\beta M, \beta M]^d},M,t,r).
\]

In particular, we finally conclude that there exist constants $\beta = \beta(L_H,d,t)$ and $C = C(\Vert H \Vert_{C^{r+1}(\Omega\times [-\beta M, \beta M]^d},M,t,r)>0$, such that for any $\epsilon > 0$, there exists a neural network $\Psi_t \approx \Psi_t^\dagger$ satisfying the bound \eqref{eq:aprox0}, and such that 
\begin{align*}
\depth(\Psi_{t}) \le C \log\left( \epsilon^{-1} \right), \quad 
\size(\Psi_{t}) \le C \epsilon^{-(2d+1)/r} \log\left( \epsilon^{-1} \right).
\end{align*}
\end{proof}

\subsection{Reconstruction error}

\begin{proof}{(Proposition \ref{prop:recerr})}
Let $f^\dagger \in C^r_\per(\Omega)$ be given with $r\ge 2$, and let $\{{q}^j, {z}^j\}_{j=1}^N$ be approximate interpolation data, with 
\[
|{q}^j - q^{j,\dagger}| \le h^r_{Q,\Omega}, \quad |{z}^j - f^\dagger(q^{j,\dagger}) | \le h^{r}_{Q,\Omega}.
\]
The assertion of this proposition is restricted to $Q^\dagger = \{q^{j,\dagger}\}_{j=1}^N$ satisfying $h_{Q^\dagger,\Omega} \le h_0$ for a constant $h_0$ (to be determined below). We may wlog assume that $h_0 \le 1/16$ in the following. Denote ${Q} := \{{q}^j\}_{j=1}^N$. 
We recall that the first step in the reconstruction Algorithm \ref{alg:reconstruction} consists in an application of the pruning Algorithm \ref{alg:pruning} to determine pruned interpolation points ${Q}' = \{{q}^{j_1},\dots, {q}^{j_m}\} \subset {Q}$, such that (cf. Lemma \ref{lem:stab})
\[
\rho_{{Q}'} \le h_{{Q}',\Omega} \le 3 \rho_{{Q}'},
\quad
h_{{Q}',\Omega} \le 3 h_{{Q},\Omega}, 
\quad
h_{{Q},\Omega} \le \rho_{{Q}'}.
\]
\textbf{Step 1:}
Write $Q'^{,\dagger} = \{q^{j_1,\dagger},\dots, q^{j_m,\dagger}\}$. Our first goal is to show that $Q'^{\dagger}$ is \textbf{quasi-uniform}: To this end, we note that by definition of the separation distance and the upper bound on the distance of $q^{j,\dagger}$ and $q^j$:
\[
\rho_{Q'^{,\dagger}} 
\ge 
\rho_{{Q}'} - 2 \max_{k} |q^{j_k,\dagger} - {q}^{j_k}|
\ge
\rho_{{Q}'} - 2 h_{Q^\dagger,\Omega}^r.
\]
By the definition of the fill distance, and the assumption that $h_{Q^\dagger,\Omega}\le h_0\le 1/2$ and $r\ge 2$, we also have
\[
h_{Q^\dagger,\Omega} 
\le h_{{Q},\Omega} + \sup_{j} |{q}^j - q^{j,\dagger}| 
\le h_{{Q},\Omega} + h_{Q^\dagger,\Omega}^r
\le h_{{Q},\Omega} + \frac12 h_{Q^\dagger,\Omega},
\]
implying the upper bound $h_{Q^\dagger,\Omega} \le 2 h_{{Q},\Omega} \le 2\rho_{{Q}'}$. Similarly we can show that $h_{Q'^{,\dagger},\Omega} \le 2 h_{{Q}',\Omega}$. Substitution in the lower bound on $\rho_{Q'^{,\dagger}}$  above and using that $h_0 \le 1/16$, yields
\[
\rho_{Q'^{,\dagger}} 
\ge 
\rho_{{Q}'} - 2 \rho_{\tilde{Q}'} (2 h_{{Q},\Omega})^{r-1}
\ge 
\rho_{{Q}'} (1-2 (2h_0)^{r-1}) \ge \frac34 \rho_{{Q}'}.
\]
Thus, we conclude that
\[
h_{Q'^{,\dagger},\Omega} \le 2 h_{{Q}',\Omega} \le 6\rho_{{Q}'} \le 12 \rho_{Q'^{,\dagger}}.
\]
Since we always have $\rho_{Q'^{,\dagger}} \le h_{Q'^{,\dagger},\Omega}$, we conclude that $Q'^{,\dagger}$ is quasi-uniform with $\kappa = 12$. 

\textbf{Step 2:} Next, we intend to apply Proposition \ref{p:es} with $Q'^{,\dagger}$ in place of $Q^\dagger$ and with $\rho = \epsilon = h_{Q^\dagger,\Omega}^r$: To this end, it remains to show that $\rho \le \frac12 \rho_{Q'^{,\dagger}}$ is bounded by half the separation distance. To see this, we note that by the above bounds (recall also that $h_0 \le 1/16$),
\[
\rho 
\equiv
h_{Q^\dagger,\Omega}^r 
\le
h_0^{r-1} \, h_{Q^\dagger,\Omega}
\le
\frac{1}{16} h_{Q^\dagger,\Omega} 
\le
\frac{1}{16} \rho_{{Q}'}
\le
\frac{1}{8} \rho_{Q'^{,\dagger}}
<
\frac12 \rho_{Q'^{,\dagger}},
\]
showing that $\rho < \frac12 \rho_{Q'^{,\dagger}}$. We can thus apply Proposition \ref{p:es} to conclude that there exist constants $C,h_0>0$ (with $h_0 \le 1/16$), such that if $h_{Q^\dagger,\Omega} \le h_0/9$, we have 
\[
h_{Q'^{,\dagger},\Omega} \le 2 h_{{Q}',\Omega} \le 6h_{{Q},\Omega} \le 9 h_{Q^\dagger,\Omega} \le h_0
\]
and hence
\begin{align*}
\Vert f^\dagger - f_{{z},{Q}} \Vert_{L^\infty}
&\le
C\left(
\Vert f^\dagger \Vert_{C^r(\Omega)} + \epsilon + \Vert f^\dagger \Vert_{C^1(\Omega)} \rho
\right)
\\
&\le
2C \left(1 + \Vert f^\dagger \Vert_{C^r(\Omega)}\right) h_{Q^\dagger,\Omega}^r.
\end{align*}
Replacing $h_0 \to h_0/9$ and $C\to 2C$ now yields the claimed result of Proposition \ref{prop:recerr}.
\end{proof}

\begin{proof}{(Proposition \ref{prop:re})}
Let $q_0,\tilde{q}_0\in \Omega$ and $u_0\in \cF$ be given, and denote by $(q_\tau,p_\tau)$ the solution of the Hamiltonian ODE, with initial value $(q_0,p_0) = (q_0,\nabla_q u_0(q_0))$. Define $(\tilde{q}_\tau,\tilde{p}_\tau)$ similarly. 

By compactness of $\cF \subset C^r_\per(\Omega) \subset C^2_\per(\Omega)$, there exists a constant $M>0$, such that $|p_0|, |\tilde{p}_0| \le \Vert u_0 \Vert_{C^2(\Omega)} \le M$. By continuity of the flow map, there exists $\bar{M} > 0$, such that 
\[
|p_\tau|, |\tilde{p}_\tau| \le \bar{M}, \quad \forall \, \tau\in [0,t].
\]
Then, we have
\begin{align*}
\frac{d}{d\tau} |(q_\tau,p_\tau) - (\tilde{q}_\tau,\tilde{p}_\tau)|
&\le 
\left|
\nabla H(q_\tau,p_\tau) - \nabla H(\tilde{q}_\tau,\tilde{p}_\tau)
\right|
\\
&\le
\left( \sup_{q\in \Omega, |p|\le \bar{M}} \Vert D^2 H(q,p) \Vert \right)|(q_\tau,p_\tau) - (\tilde{q}_\tau,\tilde{p}_\tau)|,
\end{align*}
where $D^2H$ denotes the Hessian of $H$ and $\Vert D^2 H\Vert$ is the matrix norm induced by the Euclidean vector norm.  Further denoting 
\[
\Vert D^2 H \Vert_{\infty} := \sup_{q\in \Omega, |p|\le \bar{M}} \Vert D^2 H(q,p) \Vert,
\]
then by Gronwall's inequality, it follows that 
\[
|(q_t,p_t) - (\tilde{q}_t,\tilde{p}_t)|
\le
e^{\Vert D^2 H \Vert_{\infty} t} |(q_0,p_0) - (\tilde{q}_0,\tilde{p}_0)|.
\]
Furthermore, since $p_0 = \nabla_q u_0(q_0)$, and $\tilde{p}_0 = \nabla_q u_0(\tilde{q}_0)$, we have $|p_0 - \tilde{p}_0| \le \Vert u_0 \Vert_{C^2} |q_0 - \tilde{q}_0| \le M |q_0 - \tilde{q}_0|$, which implies that
\begin{align*}
|(q_t,p_t) - (\tilde{q}_t,\tilde{p}_t)|
&\le
e^{\Vert D^2 H \Vert_{\infty} t} |(q_0,p_0) - (\tilde{q}_0,\tilde{p}_0)|
\\
&\le (1+M)e^{\Vert D^2 H \Vert_{\infty} t} |q_0 - \tilde{q}_0|.
\end{align*}
Therefore, $\Phi^\dagger_{t,u_0}(q_0) = q_t$, $\Phi^\dagger_{t,u_0}(\tilde{q}_0) = \tilde{q}_t$ satisfy the estimate
\[
\left|\Phi^\dagger_{t,u_0}({q}_0) - \Phi^\dagger_{t,u_0}(\tilde{q}_0)\right|
\le 
C |q_0 - \tilde{q}_0|,
\]
with constant
\[
C = (1+M) \exp\left(t \sup_{q\in \Omega, |p|\le \bar{M}} \Vert D^2 H(q,p) \Vert \right),
\]
which depends only on $t$, $H$ and on $\cF$ (via $M$ and $\bar{M}$), but is \emph{independent} of the particular choice of $u_0$.
 Thus, if 
\[
\Omega \subset \bigcup_{j=1}^N B_h(q^j), 
\]
then 
\[
\Omega \subset \bigcup_{j=1}^N B_{Ch}(\Phi^\dagger_{t,u_0}(q^j)),
\]
for any $u_0 \in \cF$. This implies the claim.
\end{proof}

\end{document}